\newtheorem{lemma}{Lemma}
\newtheorem{theorem}{Theorem}
\newtheorem{proposition}{Proposition}
\DeclareMathOperator*{\argmin}{argmin}
\newcommand{\norm}[1]{\lVert#1\rVert}
\newcommand{\ra}[1]{\renewcommand{\arraystretch}{#1}}
\renewcommand{\ldots} {...}
\renewcommand*{\top}{{\mkern-1.5mu\mathsf{T}}}
\newcommand{\mysetminusD}{\hbox{\tikz{\draw[line width=0.6pt,line cap=round] (3pt,0) -- (0,6pt);}}}
\newcommand{\mysetminusT}{\mysetminusD}
\newcommand{\mysetminusS}{\hbox{\tikz{\draw[line width=0.45pt,line cap=round] (2pt,0) -- (0,4pt);}}}
\newcommand{\mysetminusSS}{\hbox{\tikz{\draw[line width=0.4pt,line cap=round] (1.5pt,0) -- (0,3pt);}}}
\renewcommand{\smallsetminus}{\mathbin{\mathchoice{\mysetminusD}{\mysetminusT}{\mysetminusS}{\mysetminusSS}}}
\newcommand{\boldMath}[1]{%
    \pdfliteral direct {2 Tr 0.3 w} 
     #1%
    \pdfliteral direct {0 Tr 0 w}%
}
\newcommand{\proposedMethod}{TVI-FL\xspace}
\begin{document}

\twocolumn[

\icmltitle{Learning the piece-wise constant graph structure of a varying Ising model}

\begin{icmlauthorlist}
\icmlauthor{\ \ Batiste Le Bars}{a}
\icmlauthor{\ \ Pierre Humbert}{a}
\icmlauthor{\ \ Argyris Kalogeratos}{a}
\icmlauthor{\ \ Nicolas Vayatis}{a}%
\end{icmlauthorlist}

\icmlaffiliation{a}{Université Paris-Saclay, ENS Paris-Saclay, CNRS, Centre Borelli, F-91190 Gif-sur-Yvette, France}

\icmlcorrespondingauthor{Batiste Le Bars}{batiste.lebars@cmla.ens-cachan.fr}

\icmlkeywords{Machine Learning, Ising model, Change-point detection}

\vskip 0.3in
]

\printAffiliationsAndNotice{}

\begin{abstract}
This work focuses on the estimation of multiple change-points in a time-varying Ising model that evolves piece-wise constantly. The aim is to identify both the moments at which significant changes occur in the Ising model, as well as the underlying graph structures. For this purpose, we propose to estimate the neighborhood of each node by maximizing a penalized version of its conditional log-likelihood. The objective of the penalization is twofold: it imposes sparsity in the learned graphs and, thanks to a fused-type penalty, it also enforces them to evolve piece-wise constantly. Using few assumptions, we provide two change-points consistency theorems. Those are the first in the context of unknown number of change-points detection in time-varying Ising model. Finally, experimental results on several synthetic datasets and a real-world dataset demonstrate the performance of our method.

\end{abstract}

\section{Introduction}\label{sec:intro}

Graphs are fundamental tools to model and study static or varying relationships between variables of potentially high-dimensional vector data. They have many applications in physics, computer vision and statistics \cite{choi2010exploiting,marbach2012wisdom}. In the static scenario, learning relationships between variables is referred to as \emph{graph inference} and emerges in many fields such as in graph signal processing \citep{dong2016learning,bigbosses}, in probabilistic modeling, or in physics and biology \citep{rodriguez2011uncovering, du2012learning}. In this work, we consider a probabilistic framework where the observed data are drawn from an \emph{Ising model}, a discrete \emph{Markov Random Field} (MRF) with $\{-1,1\}$-outputs. %
MRF are undirected probabilistic graphical models \citep{koller2009probabilistic} where a set of random variables is represented as different nodes of a graph. An edge between two nodes in this graph indicates the conditional dependency between the two corresponding random variables, given the other variables.

Learning the structure of an MRF using a set of observations has been widely investigated \citep{banerjee2008model,meinshausen2006high}. In particular for Gaussian graphical models \citep{ yuan2007model,ren2015asymptotic} with the well-known graphical lasso \citep{friedman2008sparse}. The Ising model inference task has also been addressed in the past \citep{hofling2009estimation,ravikumar2010high,xue2012nonconcave,vuffray2016interaction,goel2019learning}. However, previous methods do not consider the case where the underlying structure is evolving through time.

Over the past years, there has been a burst of interest in learning the structure of time-varying MRF \cite{hallac2017network,yang2019estimating}. This task combined with the \emph{change-point detection}, which is the detection of the moments in time at which significant changes in the graph structure occur, is of particular interest. Those have been widely investigated for piece-wise constant Gaussian graphical models \citep{gibberd2017regularized, wang2018fast,londschien2019change}, in all types of the change-point detection objectives: single change-point \citep{bybee2018change}, multiple change-points \citep{gibberd2017multiple}, offline detection \citep{kolar2012estimating,gibberd2017regularized}, online detection \citep{keshavarz2018sequential}, etc. 

The advancements related to the time-varying Ising model are though limited. Especially, the combination of multiple change-points detection and structure inference has not been studied properly in the past. In \cite{ahmed2009recovering, kolar2010estimating}, the authors learn the parameters of a time-varying Ising model without looking for change-points since the network is allowed to change at each timestamp. In \cite{fazayeli2016generalized}, the authors assume that the change-point location is known and only focus on the inference of the structural changes between Ising models. More recently, the problem of detecting a single change-point has been studied in \cite{roy2017change}.

\paragraph{Contribution.} This work focuses on the estimation of multiple change-points in a time-varying Ising model that evolves piece-wise constantly. The aim is to identify both the moments at which significant changes occur in the Ising model, as well as the underlying graph structure of the model among consecutive change-points. Our work extends the work in \cite{kolar2012estimating,gibberd2017regularized} on Gaussian graphical models, to the case of an Ising model. We also derive two change-points consistency theorems that, to our knowledge, we are the first to demonstrate.  
More specifically, our method follows a ``node-wise regression" approach \cite{ravikumar2010high} and estimates the neighborhood of each node by maximizing a penalized version of its conditional log-likelihood. The penalization allows us to efficiently recover sparse graphs and, thanks to the use of a group-fused penalty \cite{harchaoui2010multiple,bleakley2011group,kolar2012estimating}, as well to recover the change-points. The proposed method is referred as \proposedMethod, which stands for Time-Varying Ising model identified with Fused and Lasso penalties.
\vspace{-1mm}
\paragraph{Organization.} The paper is organized as follows. First, we briefly recall important properties of the static Ising model and describe its piece-wise constant version. Second, we present our %
methodology for the inference of a piece-wise constant graph structure over time and the moments in time at which significant changes occur. Next, we present our main theoretical results that consist in two change-point consistency theorems. Finally, we demonstrate empirically, on multiple synthetic datasets and a real-world problem, that our method is the best suited to recover both the structure and the change-points.

\section{The time-varying Ising model}\label{sec:tempo_ising}

The \emph{static Ising model} is a discrete MRF with $\{-1,1\}$-outputs. This model is defined by a graph $G = (V,E)$ where an edge between two nodes indicates that the two corresponding random variables are dependent given the other ones. We associate this graph to a symmetric weight matrix $\Omega \in \mathbb{R}^{p\times p}$ whose non-zero elements correspond to the set of edges $E$. Formally, we have $\omega_{ab}\neq 0 $ iff $(a,b)\in E$ where $\omega_{ab}$ stands for the $(a,b)$-th element of $\Omega$. An Ising model is thus entirely described by its associated weight matrix $\Omega$. Let $X\sim \mathcal{I}(\Omega)$ be a random vector following an Ising model with weight matrix $\Omega$. Let $x\in \{-1,1\}^p$ be a realization and $x_a$, $x_b$ respectively its $a$-th and $b$-th coordinates. Then, its probability function is given by:
\begin{equation}
\label{eq:model}
\mathbb{P}_{\Omega}(X = x) = \frac{1}{Z(\Omega)}\exp\left\{\sum_{a<b}x_ax_b\omega_{ab}\right\},
\end{equation}
where $Z(\Omega)=\sum_{x\in\{-1,1\}^p}\exp\left\{\sum_{a<b}x_ax_b\omega_{ab}\right\}$ is the normalizing constant. For clarity in the following we denote $\mathbb{P}_{\Omega}(X = x) = \mathbb{P}_{\Omega}(x)$. %

A \emph{time-varying Ising model} is defined by a set of $n$ graphs $G^{(i)}=(V,E^{(i)})$, $i \in \{1,\ldots,n\}$ over a fixed set of nodes $V$ through a time-varying set of edges $\{E^{(i)}\}_{i=1}^n$. Similarly to the static case, each $G^{(i)}$ is associated to a symmetric weight matrix $\Omega^{(i)}\in \mathbb{R}^{p\times p}$ and a distribution $\mathbb{P}_{\Omega^{(i)}}$ given by Eq.\,(\ref{eq:model}). %
A random variable associated to this model is a set of $n$ independent random vectors $X^{(i)} \sim \mathcal{I}(\Omega^{(i)})$. A single realization is therefore a set of $n$ vectors, each denoted by $x^{(i)} \in \{-1,1\}^p$.

In the sequel, we assume in addition that the model is \emph{piece-wise constant}, i.e.~there exist a collection of $D$ timestamps $ \mathcal{D}\triangleq\{T_1,\ldots,T_D\} \subset \{2,\ldots,n\}$, sorted in ascending order, and a set of symmetric matrices $\{\Theta^{(j)}\}_{j=1}^{D+1}$ such that $\forall i \in \{1,\ldots,n\}$:
\begin{equation}
\label{eq:piecewise_cst}
\Omega^{(i)} = \sum_{j=0}^{D}\Theta^{(j+1)} \mathds{1}\{T_j\leq i < T_{j+1}\} ,
\end{equation}
where $T_0 = 1$, $T_{D+1} = n+1$. $\mathcal{D}$ thus corresponds to the set of change-points. According to Eq.\,(\ref{eq:piecewise_cst}), for a fixed $j \in \{0,\ldots,D\}$, the set $\left\{x^{(i)}: T_j \leq i< T_{j+1}\right\}$ contains i.i.d. vectors drown from $\mathbb{P}_{\Theta^{(j+1)}}$. %

\section{Learning Methodology}
\label{sec:learning}
Assuming the observation of a single realization $\{x^{(i)}\}_{i=1}^n$ of the described time-varying model at each timestamp, our objective is twofold. We want to recover the set of change-points $\mathcal{D}$, as well as the graph structure underlying the observed data vectors, i.e.~which edges are activated at each timestamp. %
In practice, we may observe multiple data vectors at each timestamp. %
However, since this does not change our analysis, we leave the related discussion for the experimental section. Next, we now describe our methodology to perform the aforementioned tasks.

\vspace{-1mm}
\paragraph{Neighborhood selection strategy.}
Due to the intractability of the normalizing constant $Z(\cdot)$, classical maximum likelihood approaches are difficult to apply in practice. Hence, an intuitive approach is to extend the neighborhood selection strategy introduced for the static setting in \cite{ravikumar2010high} to our time-varying setting. Instead of maximizing the global likelihood of Eq.\,(\ref{eq:model}), this approach maximizes, for each node $a\in V$, the conditional likelihood of the node knowing the other nodes in $V\smallsetminus a$. The conditional probability of observing a node's value, knowing the others, when $X\sim\mathcal{I}(\Omega)$, is:
\begin{equation}
\label{eq:conditionnal}
\mathbb{P}_{\omega_{a}}(x_a | x_{\smallsetminus a}) = \frac{\exp\left\{2x_a\sum_{b\in V\smallsetminus a}x_b\omega_{ab}\right\}}{\exp\left\{2x_a\sum_{b\in V\smallsetminus a}x_b\omega_{ab}\right\} + 1},
\end{equation}
where $\omega_{a}$ denotes the $a$-th column (or row) of $\Omega$ that is used to parametrize the probability function of Eq.\,(\ref{eq:conditionnal}). Here, $x_{\smallsetminus a}$ denotes the vector $x$ without the coordinate $a$.

 For each node, we thus propose to maximize a penalized version of the conditional likelihood of Eq.\,\ref{eq:conditionnal}. The detailed procedure is explained below.%

\subsection{Optimization program}
\label{sec:optim}

The neighborhood selection strategy works as follows. For each node $a=1,\ldots, p$, we solve the regularized optimization program:
\begin{equation}
\widehat{\beta}_a = \argmin_{\beta \in  \mathbb{R}^{p-1\times n}} \mathcal{L}_a(\beta) + pen_{\lambda_1,\lambda_2}(\beta).
\label{eq:optim}
\end{equation}
In this equation, $\mathcal{L}_a(\beta)$ stands for the node-wise negative conditional log-likelihood of node $a$, knowing $x^{(i)}_{\smallsetminus a}$: %
\begin{align} 
 \mathcal{L}_a(\beta) & \triangleq -\sum_{i=1}^n \log\left(\mathbb{P}_{\beta^{(i)}}(x^{(i)}_a | x^{(i)}_{\smallsetminus a})\right) \label{eq:loglik} \\
&= \sum_{i=1}^n\log\left\{\exp\left( \beta^{(i)\top}x^{(i)}_{\smallsetminus a}\right) + \exp\left( -\beta^{(i)\top}x^{(i)}_{\smallsetminus a}\right)\right\} \nonumber \\
& \quad \quad \quad - \sum_{i=1}^n x_a^{(i)}\beta^{(i)\top}x^{(i)}_{\smallsetminus a},
\end{align}
where $\beta^{(i)}$ is the $i$-th column of $\beta$. The last line is obtained by plugging Eq.\,(\ref{eq:conditionnal}) in Eq.\,(\ref{eq:loglik}) with $\beta^{(i)}$ instead of $\omega_{a}$.

With such objective function, we learn at each timestamp $i$ the neighborhood $\beta^{(i)}$ of node $a$ via a penalized logistic regression method. %

\vspace{-1mm}
\paragraph{Penalty term.}
Provided two hyperparameters, $\lambda_1,\lambda_2\!>\!0$, we propose the following penalty term for Eq.\,\ref{eq:optim}:
\begin{equation*}
pen_{\lambda_1,\lambda_2}(\beta) = \lambda_1\sum_{i = 2}^{n} \norm{\beta^{(i)}-\beta^{(i-1)}}_2 + \lambda_2\sum_{i = 1}^{n} \norm{\beta^{(i)}}_1.
\end{equation*}
The overall two-term penalty is necessary for recovering efficiently the piece-wise constant graph structure. The second term is quite standard: it allows the estimated parameter vectors to be sparse and thus imposes structure in the learned graphs. %
On the other hand, without the first term, we would fit for each timestamp $i\in \{1,\ldots,n\}$ a parameter vector $\beta^{(i)}$ that perfectly matches the observation $x^{(i)}$ (in terms of likelihood). In such a situation, we would obtain as many different parameter vectors $\beta$ as there is different samples, making the piece-wise constant assumption of Eq.\,\ref{eq:piecewise_cst} impossible to recover. This is why we propose a group-fused penalty, consisting in the $\ell_2$-norm of the difference between two consecutive parameter vectors. The sum of the $\ell_2$-norms acts as a group-lasso penalty on temporal difference between consecutive parameter vectors, which encourages the two vectors to be equal. This allows us to learn efficiently an evolving piece-wise constant structure and also to detect the change-points. %

In conclusion, the hyperparameter $\lambda_1$ controls the number of estimated change-points -- the larger $\lambda_1$ is, the fewer the estimated number of change-points will be. Similarly, when $\lambda_2$ increases, the sparsity of each parameter vector increases as well. A priori, choosing the hyperparameters is not straightforward. However, since our objective function corresponds to a penalized logistic regression problem, we can use existing model selection criteria. %
We discuss further about this aspect in the experimental section.

\subsection{Change-point detection and structure estimation}

Assume that the optimization program (\ref{eq:optim}) is solved. The set of estimated change-points $\widehat{\mathcal{D}}$ is: 
\begin{equation*}
    \widehat{\mathcal{D}}=\left\{\widehat{T}_j \in \{2,\ldots,n\} : \norm{\widehat{\beta}_a^{(\widehat{T}_j)}-\widehat{\beta}_a^{(\widehat{T}_j-1)}}_2 \neq 0 \right\} .
\end{equation*}
Namely, this corresponds to the set of timestamps at which the estimated parameter vectors have changed. %
For each submodel $j = 1,\ldots, |\widehat{\mathcal{D}}|+1$, the $a$-th column of $\Theta^{(j)}$ is estimated by $\widehat{\theta}_{a}^{(j)}\triangleq \widehat{\beta}_a^{( \widehat{T}_{j-1})}\!=\ldots=\widehat{\beta}_a^{( \widehat{T}_j-1)}$. The non-zero elements of $\widehat{\theta}_{a}^{(j)}$ indicate the \emph{neighborhood} of $a$.

One should notice that this estimation leads to a non-symmetric weight matrix. To overcome this problem, is was proposed in \cite{ravikumar2010high,kolar2012estimating} to either use the $\min$ or $\max$ operator. In the present work, to estimate the structure of the $j$-th graph, we take:
\begin{equation*}
    \widehat{E}^{(j)} = \{(a,b) : \max(|\widehat{\theta}_{ab}^{(j)}|,|\widehat{\theta}_{ba}^{(j)}|)>0\},
\end{equation*}
where $\widehat{\theta}_{ab}^{(j)}$ is the $b$-th element of $\widehat{\theta}_{a}^{(j)}$, and conversely for $\widehat{\theta}_{ba}^{(j)}$. In this case, there is an edge between two nodes if at least one of them contains the other node in its neighborhood.

\section{Theoretical analysis}

In this section, we present two change-point consistency theorems for \proposedMethod. The theorems state that, as the number of samples $n$ tends to infinity, the change-points are estimated more and more precisely. %

\subsection{Technical assumptions}
We denote by $\widehat{D} = |\widehat{\mathcal{D}}|$ (the set's cardinality) the total number of detected change-points, respectively for the real changes $D = |\mathcal{D}|$, and by $\left[D\right]$ the set of indices $\{1,\ldots,D\}$. Let us now define two important quantities. The first is the minimal time difference between two change-points:%
\begin{equation*}
\Delta_{\min}\triangleq \min_{j\in \left[D\right]}|T_{j}-T_{j-1}| .
\end{equation*}
The second quantity is the minimal variation in the model parameters between two change-points, which is given by:
\begin{equation*}
\xi_{\min}\triangleq \min_{a\in V,j\in \left[D\right]}\lVert \theta^{(j+1)}_a - \theta^{(j)}_a \rVert_2 .
\end{equation*}
We now introduce three standard assumptions on the Ising model inference and change-points detection. They are assumed to be true for each node $a \in V$.
\noindent

\textbf{(A1)} \textit{There exist two constants $\phi_{\min}>0$ and $\phi_{\max}<\infty$ such that $\forall j \in \left[ D+1\right]$,  $\phi_{\min}\leq\Lambda_{\min} \left(\mathbb{E}_{\Theta^{(j)}}[X_{\smallsetminus a}X_{\smallsetminus a}^{\top} ]\right)$ and $\phi_{\max}\geq\Lambda_{\max} \left(\mathbb{E}_{\Theta^{(j)}}[X_{\smallsetminus a}X_{\smallsetminus a}^{\top} ]\right)$. Here $\Lambda_{\min}(\cdot)$ and $\Lambda_{\max}(\cdot)$ denote, respectively, the smallest and largest eigenvalues of the input matrix}.\\
\vspace{-0.7em}\\
 This is a standard assumption for such problems: it ensures that the covariates are not too dependent, and makes the model identifiable \citep{ravikumar2010high,kolar2012estimating}. In fact, this assumption is always verified if the support of the model is sufficiently large. Indeed, if at least $p$ linearly independent vectors have a non-zero probability to be observed, then the matrix $\mathbb{E}_{\Theta^{(j)}}[X_{\smallsetminus a}X_{\smallsetminus a}^{\top}]$ will have full rank.\\
\vspace{-0.7em}\\
\noindent
\textbf{(A2)} \textit{There exists a constant $C>0$ such that $\max_{j,l \in [D+1]}\lVert \theta^{(j)}_a - \theta^{(l)}_a \rVert_2\leq C$, and a constant $M$ such that $\max_{j\in \left[D+1\right]}\lVert \theta^{(j)}_a \rVert_2\leq M$.} \\
\vspace{-0.7em}\\
\noindent
\textbf{(A3)} \textit{The sequence $\{T_j\}_{j=1}^D$ satisfies, for each $j$, $T_j = \lfloor n\tau_j \rfloor$ , where $\lfloor x \rfloor$ is the largest integer smaller than or equal to $x$  and $\{\tau_j\}_{j=1}^D$ is a fixed, unknown sequence of the change-point fractions belonging to $[0,1]$.} \\
\vspace{-0.7em}\\
This last assumption says that as $n$ grows, the new observations are sampled uniformly across all the $D+1$ sub-models. \\

\subsection{Main results}
Next, we present our theoretical results on \textit{change-point consistency}. The proofs are made for one node, $a$, but generalize to all the other nodes.
We first provide the optimality conditions necessary to demonstrate the main results.

\begin{lemma}{(Optimality Conditions)}
\label{lemma:optim_cond}
A matrix $\hat{\beta}$ is optimal for problem (\ref{eq:optim}) iff there exists a collection of subgradient vectors $\{\hat{z}^{(i)}\}_{i=2}^n$ and $\{\hat{y}^{(i)}\}_{i=1}^n$, with $\hat{z}^{(i)} \in \partial \norm{\widehat{\beta}^{(i)}-\widehat{\beta}^{(i-1)}}_2$ and $\hat{y}^{(i)} \in \partial \norm{\widehat{\beta}^{(i)}}_1$, such that $\forall k=1,\ldots,n$:
\begin{align}
\sum_{i=k}^n & x^{(i)}_{\smallsetminus a}\left\{\tanh\!\left( \widehat{\beta}^{(i)\top}x^{(i)}_{\smallsetminus a})\right) - \tanh\!\left( \omega_a^{(i)\top}x^{(i)}_{\smallsetminus a})\right)\right\} \nonumber \\
& - \sum_{i=k}^n x^{(i)}_{\smallsetminus a}\left\{x^{(i)}_a - \mathbb{E}_{\Omega^{(i)}}\left[X_a | X_{\smallsetminus a} = x^{(i)}_{\smallsetminus a} \right]\right\} \nonumber \\ 
& + \lambda_1\hat{z}^{(k)} + \lambda_2\sum_{i=k}^n\hat{y}^{(i)} = \bold{0}_{p-1},
\end{align}
\text{where} $\tanh$ is the hyperbolic tangent function, $\bold{0}_{p-1}$ is the %
zero vector of size $p-1$, $\hat{z}^{(1)}=\bold{0}_{p-1}$, and 

$\hat{z}^{(i)} = \left\{\begin{array}{l}
\frac{\widehat{\beta}^{(i)}-\widehat{\beta}^{(i-1)}}{\norm{\widehat{\beta}^{(i)}-\widehat{\beta}^{(i-1)}}_2}\quad \text{ if \ } \widehat{\beta}^{(i)}-\widehat{\beta}^{(i-1)} \neq 0, \\ 
\in \mathcal{B}_2(0,1) \quad \quad\ \ \,\text{otherwise},
\end{array} \right.$ \\ $\hat{y}^{(i)} = \left\{\begin{array}{l} \text{\emph{sign}}(\widehat{\beta}^{(i)}) \quad\quad\quad \text{if \ } x \neq 0, \\
\in \mathcal{B}_1(0,1) \quad\quad\ \ \  \text{otherwise}.\end{array}\right.$
\end{lemma}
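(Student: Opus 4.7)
The objective
$F(\beta) = \mathcal{L}_a(\beta) + \lambda_1 \sum_{i=2}^n \|\beta^{(i)}-\beta^{(i-1)}\|_2 + \lambda_2 \sum_{i=1}^n \|\beta^{(i)}\|_1$
is a sum of proper convex functions, each finite on $\mathbb{R}^{(p-1)\times n}$, so by the Moreau--Rockafellar sum rule $\widehat\beta$ is a minimizer iff $\mathbf{0} \in \partial F(\widehat\beta) = \nabla \mathcal{L}_a(\widehat\beta) + \lambda_1\,\partial P_{\text{fused}}(\widehat\beta) + \lambda_2\,\partial P_{\ell_1}(\widehat\beta)$. The heart of the proof is to identify this inclusion block by block. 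Differentiating Eq.~(5) in $\beta^{(i)}$, and using that $\frac{d}{du}\log(e^u+e^{-u}) = \tanh u$, gives $\nabla_{\beta^{(i)}} \mathcal{L}_a = x^{(i)}_{\smallsetminus a}\bigl(\tanh(\beta^{(i)\top} x^{(i)}_{\smallsetminus a}) - x_a^{(i)}\bigr)$. From the conditional probability in Eq.~(3) one checks $\mathbb{E}_{\Omega^{(i)}}[X_a \mid X_{\smallsetminus a} = x^{(i)}_{\smallsetminus a}] = \tanh(\omega_a^{(i)\top} x^{(i)}_{\smallsetminus a})$, so adding and subtracting $\tanh(\omega_a^{(i)\top} x^{(i)}_{\smallsetminus a})$ inside the bracket decomposes the gradient into exactly the two differences displayed in the lemma.

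For the penalties, the $\beta^{(k)}$-block of $\partial P_{\ell_1}(\widehat\beta)$ is the usual coordinatewise sign subdifferential $\widehat y^{(k)}$ of the stated form; for the fused part, only the two terms at indices $k$ and $k+1$ involve $\beta^{(k)}$, which contribute $\widehat z^{(k)} - \widehat z^{(k+1)}$ with $\widehat z^{(i)}$ given as in the lemma and the boundary conventions $\widehat z^{(1)} = \widehat z^{(n+1)} = \mathbf{0}_{p-1}$ (the first because there is no term $\|\beta^{(1)}-\beta^{(0)}\|_2$, the second because there is no term $\|\beta^{(n+1)}-\beta^{(n)}\|_2$). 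Assembling these pieces, the per-block stationarity condition reads
$\nabla_{\beta^{(k)}} \mathcal{L}_a(\widehat\beta) + \lambda_1(\widehat z^{(k)} - \widehat z^{(k+1)}) + \lambda_2\,\widehat y^{(k)} = \mathbf{0}_{p-1}$ for each $k = 1,\ldots,n$.

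The displayed form in the lemma is then obtained by summing the per-block conditions from index $k$ to $n$: the fused-penalty differences telescope into $\sum_{i=k}^n(\widehat z^{(i)}-\widehat z^{(i+1)}) = \widehat z^{(k)} - \widehat z^{(n+1)} = \widehat z^{(k)}$, yielding the stated identity; conversely, subtracting the identities at consecutive indices $k$ and $k+1$ recovers the per-block condition, establishing the ``iff.'' I do not anticipate a serious obstacle, as the argument is mostly careful bookkeeping. The two steps that require real attention are (i) using the $\tanh$-expression for the conditional expectation so as to rewrite the gradient in the specific form of the statement, and (ii) tracking the boundary conventions on $\widehat z^{(i)}$ so that the telescoping leaves exactly $\widehat z^{(k)}$ with the correct sign.
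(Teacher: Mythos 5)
Your proposal is correct and follows essentially the same route as the paper: write the subdifferential of the convex objective, require that $\mathbf{0}$ belong to it, identify the per-block conditions (gradient of $\mathcal{L}_a$ via $\tanh$, sign subdifferential for the $\ell_1$ term, $\hat z^{(k)}-\hat z^{(k+1)}$ for the fused term), and telescope the sums from $k$ to $n$ to obtain the stated cumulative form, with the converse following by differencing consecutive identities. The rewriting of $x_a^{(i)}$ via the conditional expectation $\mathbb{E}_{\Omega^{(i)}}[X_a\mid X_{\smallsetminus a}=x^{(i)}_{\smallsetminus a}]=\tanh(\omega_a^{(i)\top}x^{(i)}_{\smallsetminus a})$ is exactly the decomposition used in the lemma, so no gap remains.
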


\begin{proof}
The proof is given in the Appendix. It consists in writing the sub-differential of the objective function and say, thanks to the convexity, that $0$ belongs to it.
\end{proof}
\begin{theorem}{(Change-point consistency)} Let $\{x_i\}_{i=1}^n$ be a sequence of observations drawn from the model presented in Sec.\,\ref{sec:tempo_ising}. Suppose (A1-A3) hold, and assume that $\lambda_1 \asymp \lambda_2 = \mathcal{O}(\sqrt{\log(n)/n})$. Let $\{\delta_n\}_{n\geq 1}$ be a non-increasing sequence that converges to $0$, and such that $\forall n>0$, $\Delta_{\min}\geq n\delta_n$, with $n\delta_n \rightarrow +\infty$. Assume further that $(i)$ $\frac{\lambda_1}{n\delta_n\xi_{\min}}\rightarrow 0$, $(ii)$ $\frac{\sqrt{p-1}\lambda_2}{\xi_{\min}}\rightarrow 0$, and $(iii)$ $\frac{\sqrt{p\log(n)}}{\xi_{\min}\sqrt{n\delta_n}}\rightarrow 0$. Then, if the correct number of change-points are estimated, we have $\widehat{D}=D$ and:
\begin{equation}
    \mathbb{P}(\max_{j=1,\ldots,D}|\hat{T}_j-T_j|\leq n\delta_n)\underset{n\rightarrow \infty}{\longrightarrow}1 .
    \label{eq:consitency_thme}
\end{equation}
\label{thme}
\end{theorem}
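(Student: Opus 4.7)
My plan is a proof by contradiction. Given the hypothesis $\widehat{D}=D$, I would pair each $\widehat{T}_j$ with its closest $T_j$ (this gives a bijection), and suppose that, on an event of non-vanishing probability, some index $j$ satisfies $|\widehat{T}_j-T_j|>n\delta_n$. Without loss of generality, assume $\widehat{T}_j>T_j+n\delta_n$; then the window $W_j=\{T_j,T_j+1,\dots,T_j+n\delta_n-1\}$ lies entirely in the true regime where $\omega_a^{(i)}=\theta_a^{(j+1)}$, and simultaneously lies inside a single estimated-constancy segment $[\widehat{T}_{j-1},\widehat{T}_j-1]$ on which $\widehat{\beta}^{(i)}$ equals a common vector $\widehat{\beta}^\star$. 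The symmetric case and a union bound over $j\in[D]$ (with $D$ fixed) would finish the argument.

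\textbf{Using the KKT conditions.} I would apply Lemma~\ref{lemma:optim_cond} at $k=T_j$ and at $k=T_j+n\delta_n$ and subtract the two identities. The sums telescope, leaving only the contributions on $W_j$:
\begin{equation*}
\sum_{i\in W_j} x_{\smallsetminus a}^{(i)}\bigl\{\tanh(\widehat{\beta}^{\star\top}x_{\smallsetminus a}^{(i)}) - \tanh(\theta_a^{(j+1)\top}x_{\smallsetminus a}^{(i)})\bigr\} = N_j + R_j,
\end{equation*}
where $N_j = \sum_{i\in W_j} x_{\smallsetminus a}^{(i)}\{x_a^{(i)} - \mathbb{E}_{\Theta^{(j+1)}}[X_a\mid X_{\smallsetminus a}=x_{\smallsetminus a}^{(i)}]\}$ is a mean-zero empirical process and $R_j = \lambda_1(\widehat{z}^{(T_j+n\delta_n)}-\widehat{z}^{(T_j)}) + \lambda_2\sum_{i\in W_j}\widehat{y}^{(i)}$.

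\textbf{Signal lower bound.} By assumption (A2) the arguments of $\tanh$ stay in a compact interval on which $\tanh'=\mathrm{sech}^2\ge c>0$, so by the mean value theorem the conditional expectation (given $x_{\smallsetminus a}^{(i)}$) of each summand on the left-hand side above dominates $c\, x_{\smallsetminus a}^{(i)} x_{\smallsetminus a}^{(i)\top}(\widehat{\beta}^\star-\theta_a^{(j+1)})$ in the relevant bilinear form. Combined with assumption (A1), this yields a lower bound of order $c\phi_{\min}\,n\delta_n\,\|\widehat{\beta}^\star-\theta_a^{(j+1)}\|_2$ on the norm of the expected LHS. To convert this into $\xi_{\min}$, I would insert a preliminary consistency step: on the portion of $\widehat{\beta}^\star$'s constancy segment that sits in the previous true regime $[T_{j-1},T_j-1]$ (of length at least $n\delta_n$ as well), a standard M-estimation argument combined with $\lambda_1,\lambda_2=\mathcal{O}(\sqrt{\log n/n})$ would give $\widehat{\beta}^\star\to\theta_a^{(j)}$ at rate $o(\xi_{\min})$. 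The triangle inequality then forces $\|\widehat{\beta}^\star-\theta_a^{(j+1)}\|_2\ge \xi_{\min}/2$ for $n$ large.

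\textbf{Upper bounds and conclusion; main obstacle.} Since $x^{(i)}\in\{-1,1\}^p$, Hoeffding / vector-Bernstein inequalities would bound $\|N_j\|_2$ and the centering deviation of the LHS by $\mathcal{O}(\sqrt{p\,n\delta_n\log n})$ with high probability uniformly in $j$, while $\|R_j\|_2\le 2\lambda_1 + \lambda_2\sqrt{p-1}\,n\delta_n$ using $\|\widehat{z}\|_2\le 1$ and $\|\widehat{y}^{(i)}\|_\infty\le 1$. Assembling the bounds yields
\begin{equation*}
c\phi_{\min}\,n\delta_n\,\xi_{\min}/2 \;\lesssim\; \sqrt{p\,n\delta_n\log n} \;+\; \lambda_1 \;+\; \lambda_2\sqrt{p-1}\,n\delta_n,
\end{equation*}
and dividing through by $n\delta_n\xi_{\min}$ produces exactly the three ratios appearing in hypotheses (i)--(iii), all of which vanish. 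This contradicts the hypothesis $|\widehat{T}_j-T_j|>n\delta_n$ and yields \eqref{eq:consitency_thme}. The main obstacle, to my mind, is the preliminary consistency step used to turn $\|\widehat{\beta}^\star-\theta_a^{(j+1)}\|_2$ into $\xi_{\min}$: the group-fused penalty couples $\widehat{\beta}^\star$ to its neighboring segments, so one cannot invoke an off-the-shelf logistic M-estimator rate, and controlling this coupling cleanly and uniformly over $j$ is the technical crux.
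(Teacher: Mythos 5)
Your core mechanism is the right one and coincides with the paper's analysis of the ``good case'': difference the optimality conditions of Lemma~\ref{lemma:optim_cond} at two timestamps so that the sums telescope over a window of length $n\delta_n$, split the result into a signal term (lower-bounded via the mean value theorem and (A1)), a centered noise term (concentration for bounded variables), and a penalty term (subgradient norms), and observe that the resulting inequality is incompatible with hypotheses $(i)$--$(iii)$. The genuine gap is that your argument silently assumes a favorable configuration of the estimated change-points: that the closest-point pairing is a bijection, and that the window $W_j$ sits inside a single estimated-constancy segment $[\widehat{T}_{j-1},\widehat{T}_j-1]$ which also straddles the correct pair of true regimes. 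None of this follows from $\widehat{D}=D$ alone --- two estimated change-points may both fall before $T_1$, or $\widehat{T}_{j-1}$ may land inside $(T_j,\widehat{T}_j)$, in which case $\widehat{\beta}^{(i)}$ is not constant on $W_j$ and the telescoping identity no longer produces your contradiction. The paper handles exactly this by conditioning on $C_n=\{|\widehat{T}_j-T_j|<\Delta_{\min}/2\}$ and devoting a separate, equally substantial, analysis to $C_n^c$ through the three events $D_n^{(l)}$, $D_n^{(m)}$, $D_n^{(r)}$ (some $\widehat{T}_j$ at or before $T_{j-1}$, all $\widehat{T}_j$ strictly between their neighboring true change-points, some $\widehat{T}_j$ at or after $T_{j+1}$). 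Your proof is missing this entire second half.

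On the step you yourself flag as the crux: the preliminary M-estimation consistency of $\widehat{\beta}^\star$ toward $\theta_a^{(j)}$ is both unavailable as you set it up and unnecessary. It is unavailable because the overlap of the estimated segment with the previous true regime has no guaranteed length of order $n\delta_n$, and because of the penalty coupling you mention. It is unnecessary because the standard device in the line of proofs the paper extends \citep{harchaoui2010multiple,kolar2012estimating} is a dichotomy rather than a rate: since $\norm{\theta_a^{(j)}-\theta_a^{(j+1)}}_2\geq\xi_{\min}$, the constant value $\widehat{\beta}^\star$ must be at distance at least $\xi_{\min}/2$ from one of the two adjacent true parameters, and on $C_n$ both $(T_j-\Delta_{\min}/2,\,T_j]$ and $(T_j,\,T_j+n\delta_n]$ lie in the same estimated segment while lying in regimes $j$ and $j+1$ respectively; one then runs your window argument on whichever side realizes the separation. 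A further point not to wave through: your mean-value-theorem bound requires $\widehat{\beta}^{\star\top}x^{(i)}_{\smallsetminus a}$ to remain in a compact set, but (A2) bounds only the true parameters, not the estimator, so a separate control of $\norm{\widehat{\beta}^\star}_2$ (or of $\tanh'$ along the relevant segment) is needed --- this is among the additional technical work the paper's appendix lemmas are there to supply.
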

\begin{proof}
We extend the proof given in \cite{harchaoui2010multiple,kolar2012estimating} to the particular case of the Ising model. While some important steps are essentially similar, the main difference between our proof and the previous one are Lemmas $2$, $3$, $4$, and $5$ included in the Appendix, which provide concentration bounds adapted to the Ising model setting. Moreover, we had to employ several new tricks that were not used in the previous papers. We give here a sketch of the proof.

Thanks to the union bound, the probability of the complementary in Eq.\,(\ref{eq:consitency_thme}) can be upper bounded by:
\begin{equation*}
    \mathbb{P}(\max_{j=1,\ldots,D}|\hat{T}_j-T_j|>n\delta_n) \leq \sum_{j=1}^D \mathbb{P}(|\hat{T}_j-T_j|>n\delta_n).
\end{equation*}
To prove Eq.\,(\ref{eq:consitency_thme}), it is now sufficient to show $\forall j=1,\ldots,D$ that $\mathbb{P}(|\hat{T}_j-T_j|>n\delta_n) \rightarrow 0$. Let us define the event $C_n=\{|\hat{T}_j - T_j|<\frac{\Delta_{\min}}{2}\}$ and its complementary $C_n^c$. The rest of the proof is divided in two parts: bounding the good scenario, i.e.~show that $\mathbb{P}(\{|\hat{T}_j-T_j|>n\delta_n\}\cap C_n) \rightarrow 0$, and doing the same for the bad scenario, i.e $\mathbb{P}(\{|\hat{T}_j-T_j|>n\delta_n\}\cap C_n^c) \rightarrow 0$. 

To bound the good scenario, the proof applies Lemma\,\ref{lemma:optim_cond} to bound the considered probability by three others probabilities. These latter are then asymptotically bounded by $0$, thanks to a combination of Assumptions\,(A1-A3), assumptions of the theorem and concentration inequalities related to the considered time-varying Ising model (given by the lemmas of the Appendix).

To bound the bad case scenario, the three following complementary events are defined:
\begin{align*}
    & D_n^{(l)} \triangleq \left\{\exists j \in [D], \widehat{T}_j \leq T_{j-1} \right\} \cap C_n^c, \\
    & D_n^{(m)} \triangleq \left\{\forall j \in [D], T_{j-1} < \widehat{T}_j < T_{j+1} \right\} \cap C_n^c, \\
    & D_n^{(r)} \triangleq \left\{\exists j \in [D], \widehat{T}_j \geq T_{j+1} \right\} \cap C_n^c.
\end{align*}
Thus, it suffices to prove that $\mathbb{P}(\{|\hat{T}_j-T_j|>n\delta_n\}\cap D_n^{(l)})$, $\mathbb{P}(\{|\hat{T}_j-T_j|>n\delta_n\}\cap D_n^{(m)})$, and $\mathbb{P}(\{|\hat{T}_j-T_j|>n\delta_n\}\cap D_n^{(r)}) \rightarrow 0$ as $n \rightarrow \infty$. To prove this, similar arguments to those used for the good case are employed.
\end{proof}

Note that with $\delta_n=\log(n)^\gamma/n$, for any $\gamma>1$ and $\xi_{\min}=\Omega(\sqrt{\log(n)/\log(n)^\gamma})$, the conditions of the theorem are met. With this parameterization, we obtain a convergence rate of order $\mathcal{O}(\log(n)^\gamma/n)$ for the estimation of the change-points. More precisely, for any $\delta > 0$ and sufficiently large $n$, we have with probability at least $1-\delta$ that  
\begin{equation*}
    \frac{1}{n}\max_{j=1,\ldots,D}|\hat{T}_j-T_j|\leq \frac{1}{n}\log(n)^\gamma.
\end{equation*}

In conclusion, we obtain the same rate of convergence to that of the single change-point detection method given in \cite{roy2017change}. It is almost optimal up to a logarithmic factor. 
The main drawback of the previous theorem is that it assumes that the number of change-points have been correctly estimated. In practice this is complicated to verify, while proving that the right number of change-points are consistently estimated is also difficult to get for this type of methods \cite{harchaoui2010multiple}. Nevertheless, in practice we may have an idea about an upper bound on the true number of change-points.

The next proposition provides a consistency result when the number of change-points is overestimated. Let us first introduce the metric $d(A\| B)$ defined as:
\begin{equation}
    d(A\| B) = \sup_{b\in B} \inf_{a \in A} |b-a|.
\end{equation}
\begin{proposition}
 Let $\{x_i\}_{i=1}^n$ be a sequence of observations drawn from the model presented in Sec.\,\ref{sec:tempo_ising}. Assume the conditions of Theorem\,\ref{thme} are respected. Then, if for a fix $D_{\max}<\infty$, we have $D \leq \widehat{D}\leq D_{\max}$  then:
 \begin{equation*}
     \mathbb{P}(d(\mathcal{\widehat{D}}\| \mathcal{D})\leq n\delta_n)\underset{n\rightarrow \infty}{\longrightarrow}1.
 \end{equation*}
\end{proposition}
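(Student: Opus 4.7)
The plan is to reduce the proposition to a per-true-change-point statement that can be attacked with essentially the same machinery developed for Theorem~\ref{thme}. Fix $j \in [D]$ and let $\widehat{T}^{\star}_j \triangleq \argmin_{\widehat{T}\in \widehat{\mathcal{D}}}|\widehat{T}-T_j|$ be the estimated change-point closest to $T_j$. Unfolding the definition of $d(\widehat{\mathcal{D}}\|\mathcal{D})$, we get $d(\widehat{\mathcal{D}}\|\mathcal{D}) = \max_{j\in[D]} |T_j - \widehat{T}^{\star}_j|$, so by the union bound it suffices to prove $\mathbb{P}(|T_j - \widehat{T}^{\star}_j|>n\delta_n) \to 0$ for each $j$. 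This is the natural analogue of the Theorem~\ref{thme} argument once we drop the artificial bijection between the two sets and match each true change-point to its nearest estimate.

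Following the two-case split of Theorem~\ref{thme}, define the ``good'' event $C_n \triangleq \{|T_j-\widehat{T}^{\star}_j|<\Delta_{\min}/2\}$. On $C_n$, the window $[T_j - \Delta_{\min}/2, T_j + \Delta_{\min}/2]$ contains exactly one estimated change-point on the side of $\widehat{T}^{\star}_j$, and the estimate $\widehat{\beta}$ is therefore piecewise constant with a single jump near $T_j$. One then applies Lemma~\ref{lemma:optim_cond} at $k=T_j$ and at $k=\widehat{T}^{\star}_j$, takes the difference of the two stationarity identities, and bounds the resulting score and tanh sums via the concentration lemmas of the appendix together with assumption (A1). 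The $\lambda_1 \hat{z}^{(k)}$ term contributes at most $2\lambda_1$ in norm, the $\lambda_2$ term is controlled by condition $(ii)$, and the stochastic term is controlled by condition $(iii)$, so that $|T_j - \widehat{T}^{\star}_j|\cdot \xi_{\min}$ is asymptotically dominated by terms of order $\lambda_1 + \sqrt{p(n\delta_n)\log n} + \sqrt{p-1}\,n\delta_n\lambda_2$; condition $(i)$ then closes the bound at $n\delta_n$.

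On the complementary event $C_n^c$, by definition of $\widehat{T}^{\star}_j$ no estimated change-point lies within $\Delta_{\min}/2$ of $T_j$, so $\widehat{\beta}$ is constant on the whole window surrounding $T_j$ while the true parameter jumps by at least $\xi_{\min}$. This is precisely the contradictory configuration handled by the ``bad scenario'' of Theorem~\ref{thme}: decomposing $C_n^c$ into the left/middle/right sub-events $D_n^{(l)}, D_n^{(m)}, D_n^{(r)}$ (now defined relative to $\widehat{T}^{\star}_j$ rather than to a matched $\widehat{T}_j$), each is shown to have vanishing probability by the same combination of Lemma~\ref{lemma:optim_cond}, (A1)--(A3) and the appendix concentration inequalities. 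Crucially, the hypothesis $\widehat{D}\leq D_{\max}$ enters here: the telescoping of optimality conditions across segments produces at most $D_{\max}$ boundary subgradient contributions, each bounded in norm by $1$, keeping the aggregate penalty term of order $D_{\max}\lambda_1=\mathcal{O}(\lambda_1)$ and preserving the rates of Theorem~\ref{thme}.

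The main obstacle is exactly this last point: without the finite cap $D_{\max}$, the number of estimated change-points could in principle grow with $n$, and the telescoped subgradient bound would no longer be $\mathcal{O}(\lambda_1)$, breaking condition $(i)$. Making the constants in the concentration bounds uniform over all realizations satisfying $D\leq \widehat{D}\leq D_{\max}$, and in particular showing that the optimality identity can be invoked at the (random) index $\widehat{T}^{\star}_j$ without inflating the bounds, is the technical heart of the argument; all remaining steps are direct transpositions of the proof of Theorem~\ref{thme}.
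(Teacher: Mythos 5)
Your overall strategy---rewriting $d(\widehat{\mathcal{D}}\|\mathcal{D})$ as $\max_{j\in[D]}|T_j-\widehat{T}^{\star}_j|$ with $\widehat{T}^{\star}_j$ the estimate nearest to $T_j$, union-bounding over $j$, and rerunning the good/bad dichotomy of Theorem~\ref{thme} with Lemma~\ref{lemma:optim_cond} and the appendix concentration lemmas---is indeed the route the paper describes (``applies multiple times the different tricks used to prove Theorem~\ref{thme}''). One local claim is off but harmless: on the good event you assert that $\widehat{\beta}$ is ``piecewise constant with a single jump near $T_j$,'' whereas under overestimation nothing prevents several spurious estimated change-points from landing inside $[T_j-\Delta_{\min}/2,\,T_j+\Delta_{\min}/2]$. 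What is true, and all the differencing argument needs, is that no estimated change-point lies strictly between $\widehat{T}^{\star}_j$ and $T_j$ (by minimality of $|\widehat{T}^{\star}_j-T_j|$), so $\widehat{\beta}$ is constant on that stretch.

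The genuine gap is your account of where the hypothesis $\widehat{D}\le D_{\max}$ enters. Subtracting the stationarity identity of Lemma~\ref{lemma:optim_cond} at two indices $k_1<k_2$ leaves exactly $\lambda_1(\hat z^{(k_1)}-\hat z^{(k_2)})$, whose norm is at most $2\lambda_1$ \emph{regardless} of how many estimated change-points sit between $k_1$ and $k_2$: there is no ``telescoping across segments'' producing $D_{\max}$ subgradient contributions, so the bound you invoke holds with no reference to $D_{\max}$ at all. As written, your proof never actually uses the hypothesis, which should be a red flag. The real difficulty created by overestimation---and the reason a finite cap on $\widehat{D}$ is needed---is that spurious change-points can fragment the estimated segments: the segment of $\widehat{\beta}$ adjacent to $\widehat{T}^{\star}_j$ (or straddling $T_j$ in the bad case) may be arbitrarily short, so one can no longer argue, as in the proof of Theorem~\ref{thme}, that its constant value is close to the corresponding $\theta^{(j)}_a$ or $\theta^{(j+1)}_a$ by averaging over a window of length of order $\Delta_{\min}$. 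Converting the optimality conditions into a contradiction with $\xi_{\min}$ in the presence of at most $D_{\max}$ such fragments is the substantive new step of the proposition, and it is precisely the step your sketch dismisses as a ``direct transposition.''
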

\begin{proof}
A detailed proof is provided in Appendix. The proof applies multiple times the different tricks used to prove Theorem\,\ref{thme} and the Lemmas  also given in the Appendix. 
\end{proof}
Proposition\,1 is of fundamental importance as it tells us that, even though the number of change-points has been overestimated, asymptotically, all the true change-points belong to the set of estimated change-points. %

\section{Experimental study}
\label{seq:xp}

This section provides numerical arguments showing the empirical performance of \proposedMethod. All the experiments were implemented using Python and conducted on a personal laptop. The code of \proposedMethod is provided in the supplementary material so as a Jupyter Notebook reproducing results and figures of the real-world example. 

\subsection{Optimization procedure}\label{sec:exps-optimization}

Despite being non-differentiable, the convexity of the objective function allows the use of existing convex optimization algorithms of the literature. In this work, we use the python package {\small\texttt{CVXPY}} \citep{cvxpy} that allows us to solve our problem efficiently. Note also that the optimization for each node is independent to the other nodes, and hence the approach allows efficient parallel implementations. %

In the situation where more than one data vector is observed at each timestamp, one has simply to replace the node-wise negative log-likelihood in Eq.~\ref{eq:loglik} with:
\begin{equation}
     -\sum_{i=1}^n\sum_{l=1}^{n^{(i)}} \log\left(\mathbb{P}_{\beta^{(i)}}(x^{(il)}_a | x^{(il)}_{\smallsetminus a})\right)\!,
     \label{eq:multiple_obs}
\end{equation}
where $n^{(i)}$ stands for the number of data vectors observed at timestamp $i$, and $x^{(il)}$ for the $l$-th observed vector at time $i$.

\vspace{-1mm}
\paragraph{Tuning the hyperparameters $\lambda_1$ and $\lambda_2$.}  As stated in Sec.\,\ref{sec:optim}, it is possible to employ any model selection technique suited for logistic regression. In the experiments, we use and compare two techniques. The first is the \emph{Akaike Information Criterion} (AIC) that computes the average of the following quantity for all nodes:
\begin{equation} \label{eq:BIC}
\text{AIC}(\widehat{\beta}_a)\triangleq
2\mathcal{L}_a (\widehat{\beta}_a) +
2 \, \text{Dim}(\widehat{\beta}_a).
\end{equation}
\vspace{-5mm}
\begin{align*}
\text{Dim}(\widehat{\beta}_a) = \sum_{i=1}^n\! \left(\!\mathds{1}{\{\widehat{\beta}_{a}^{(i)} \!\neq \widehat{\beta}_{a}^{(i-1)}\}}\!\!\sum_{b \in V \smallsetminus a}\!\!\mathds{1}{\{\text{sign}(\widehat{\beta}_{ab}^{(i)}) \!\neq 0\}}\!\right)\!
\end{align*}
counts the number of parameters that are estimated. By convention, $\widehat{\beta}_{a}^{(0)} = \widehat{\beta}_{a}^{(1)}$. In this case, set of hyperparameters that minimize the AIC are finally selected.

The second technique, based on cross-validation (CV), assumes that more than one sample is observed at each moment in time $i=1,\ldots,n$. Thus, the time-series can be split in a part for the learning phase and another part for the testing phase. In our experiments, we selected the hyperparameters maximizing the AUC i.e.~the area under the ROC-curve associated to the classification score (the probability to be equal to either $1$ or $-1$). 

For both model selection techniques, AIC and CV, the hyperparameters are found using either the standard random-search or grid-search strategies.

\subsection{Experimental setup}
\label{sec:baseline}

\begin{table*}[t]\scriptsize%
    \centering
	\ra{1.0}
	\begin{tabular}{@{}lll|llr|llr@{}}
		\toprule
         \textbf{} & \textbf{Observations} & & \multicolumn{3}{c}{\textbf{AIC}} & \multicolumn{3}{c}{\textbf{AUC}} \\
		\textbf{Degree} & \textbf{per timestamps} & \textbf{Method}  & \textbf{$h$-score} $\downarrow$ & \textbf{$F_1$-score} $\uparrow$ & $\boldMath{\widehat{D}\hfill}$ & \textbf{$h$-score} $\downarrow$ & \textbf{$F_1$-score} $\uparrow$ & $\boldMath{\widehat{D}\hfill}$\\ \midrule
		%
		\multirow{2}{*}{$d=2$} & \multirow{2}{*}{$n^{(i)}=4$} & TVI-FL   & $\boldMath{0.046 \pm (0.024)}$ & $\boldMath{0.694 \pm (0.103)}$ & $7.400 \pm (3.137)$ & $0.221 \pm (0.035)$ & $\boldMath{0.876 \pm (0.030)}$ & $26.100 \pm (7.739)$ \\
		&& Tesla & $0.106 \pm (0.087)$ & $0.649 \pm (0.190)$ & $12.700 \pm (7.682)$ & $\boldMath{0.184 \pm (0.051)}$ & $0.841 \pm (0.041)$ & $25.100 \pm (4.784)$\\
		\cmidrule{2-9}
		\multirow{2}{*}{} & \multirow{2}{*}{$n^{(i)}=6$} & TVI-FL   &  $\boldMath{0.129 \pm (0.058)}$ & $\boldMath{0.816 \pm (0.073)}$ & $9.700 \pm (2.759)$ & $\boldMath{0.147 \pm (0.071)}$ & $\boldMath{0.875 \pm (0.027)}$ & $15.300 \pm (3.378)$ \\
		&& Tesla &  $0.178 \pm (0.130)$ & $0.748 \pm (0.167)$ & $12.900 \pm (5.540)$ & $0.164 \pm (0.062)$ & $0.841 \pm (0.048)$ & $19.000 \pm (2.530)$ \\
		\cmidrule{2-9}
		\multirow{2}{*}{} & \multirow{2}{*}{$n^{(i)}=8$} & TVI-FL   &  $\boldMath{0.082 \pm (0.081)}$ & $0.833 \pm (0.095)$ & $7.400 \pm (3.040)$ & $\boldMath{0.099 \pm (0.073)}$ & $\boldMath{0.891 \pm (0.024)}$ & $11.000 \pm (3.873)$ \\
		&& Tesla &  $0.124 \pm (0.071)$ & $\boldMath{0.846 \pm (0.047)}$ & $13.600 \pm (2.010)$ & $0.178 \pm (0.066)$ & $0.853 \pm (0.039)$ & $14.700 \pm (3.348)$ \\
		\midrule
		%
		\multirow{2}{*}{$d=3$} & \multirow{2}{*}{$n^{(i)}=4$} & TVI-FL   & $\boldMath{0.080 \pm (0.069)}$ & $\boldMath{0.563 \pm (0.089)}$ & $7.000 \pm (2.683)$ & $\boldMath{0.204 \pm (0.035)}$ & $\boldMath{0.734 \pm (0.024)}$ & $23.100 \pm (6.715)$ \\
		&& Tesla & $0.278 \pm (0.319)$ & $0.353 \pm (0.072)$ & $3.200 \pm (2.891)$ & $0.208 \pm (0.029)$ & $0.611 \pm (0.041)$ & $29.200 \pm (3.187)$ \\
		\cmidrule{2-9}
		\multirow{2}{*}{} & \multirow{2}{*}{$n^{(i)}=6$} & TVI-FL   & $\boldMath{0.055 \pm (0.064)}$ & $\boldMath{0.617 \pm (0.161)}$ & $6.300 \pm (3.494)$ & $\boldMath{0.130 \pm (0.051)}$ & $\boldMath{0.743 \pm (0.034)}$ & $12.800 \pm (2.821)$ \\
		&& Tesla & $0.302 \pm (0.241)$ & $0.346 \pm (0.060)$ & $2.000 \pm (1.183)$ & $0.173 \pm (0.044)$ & $0.616 \pm (0.041)$ & $22.600 \pm (2.245)$ \\
		\cmidrule{2-9}
		\multirow{2}{*}{} & \multirow{2}{*}{$n^{(i)}=8$} & TVI-FL   & $\boldMath{0.091 \pm (0.073)}$ & $\boldMath{0.714 \pm (0.130)}$ & $8.000 \pm (2.530)$ & $\boldMath{0.127 \pm (0.073)}$ & $\boldMath{0.764 \pm (0.032)}$ & $10.400 \pm (2.154)$ \\
		&& Tesla & $0.311 \pm (0.231)$ & $0.361 \pm (0.098)$ & $2.600 \pm (2.615)$ & $0.162 \pm (0.052)$ & $0.633 \pm (0.045)$ & $18.700 \pm (3.716)$ \\
		\midrule
		%
		\multirow{2}{*}{$d=4$} & \multirow{2}{*}{$n^{(i)}=4$} & TVI-FL   & $\boldMath{0.101 \pm (0.082)}$ & $\boldMath{0.453 \pm (0.111)}$ & $6.500 \pm (3.324)$ & $\boldMath{0.232 \pm (0.026)}$ & $\boldMath{0.644 \pm (0.041)}$ & $29.400 \pm (4.317)$\\
		&& Tesla & $0.444 \pm (0.273)$ & $0.347 \pm (0.044)$ & $2.875 \pm (1.900)$ & $0.234 \pm (0.017)$ & $0.518 \pm (0.046)$ & $34.625 \pm (1.654)$\\
		\cmidrule{2-9}
		\multirow{2}{*}{} & \multirow{2}{*}{$n^{(i)}=6$} & TVI-FL  & $\boldMath{0.099 \pm (0.064)}$ & $\boldMath{0.501 \pm (0.130)}$ & $5.667 \pm (2.309)$ & $\boldMath{0.183 \pm (0.044)}$ & $\boldMath{0.664 \pm (0.041)}$ & $16.778 \pm (3.258)$ \\
		&& Tesla & $0.258 \pm (0.236)$ & $0.355 \pm (0.035)$ & $2.500 \pm (1.118)$ & $0.215 \pm (0.032)$ & $0.503 \pm (0.040)$ & $26.000 \pm (4.472)$ \\
		\cmidrule{2-9}
		\multirow{2}{*}{} & \multirow{2}{*}{$n^{(i)}=8$} & TVI-FL  & $\boldMath{0.077 \pm (0.076)}$ & $\boldMath{0.528 \pm (0.158)}$ & $5.556 \pm (3.624)$ & $\boldMath{0.169 \pm (0.064)}$ & $\boldMath{0.678 \pm (0.049)}$ & $12.444 \pm (4.524)$ \\
		&& Tesla & $0.251 \pm (0.230)$ & $0.357 \pm (0.044)$ & $2.625 \pm (0.696)$ & $0.219 \pm (0.027)$ & $0.518 \pm (0.054)$ & $24.000 \pm (2.398)$ \\
		\bottomrule
	\end{tabular}
	\caption{Results for the model with the lowest AIC, and that with the highest AUC. The average $\pm$\,(std) of the metrics is reported. The best results between TVI-FL and Tesla, for each metric, are specified in bold.}
    \label{table:res_BIC}
\end{table*}

\paragraph{Baseline method.} As mentioned in Sec.\,\ref{sec:intro}, no existing work in the literature deals properly with the considered multiple change-points detection task. Several methods deal with varying Gaussian graphical models \cite{kolar2012estimating, yang2019estimating}, varying Ising models with smooth structural changes over time \cite{kolar2010estimating}, or the detection of a single change-point in the varying Ising model \cite{roy2017change}. The closest work we can compare with is the Tesla method \cite{ahmed2009recovering,kolar2010estimating}. Its major difference to our approach is the use of the $\ell_1$-norm instead of the $\ell_2$-norm as a fused-penalty. This difference is very significant, theoretically and practically. 

Indeed, \emph{using an $\ell_1$-norm fused-penalty does not encourage the recovery of a graphical model that evolves piece-wise constantly as a whole}, which makes it less adaptable to recover change-points. More specifically, such a term does not encourage two consecutive parameter vectors to be equal at every dimensions: the regularization only affects each dimension independently. Thus, despite the edge weights may evolve \emph{independently} in a piece-wise constant fashion, those changes occur at arbitrary timestamps and does not aggregate to a globally piece-wise constant behavior. An illustration of this phenomena and a comparison with the $\ell_2$-norm can be found in the Appendix. 
Nonetheless, the same way the standard linear regression can be used to recover sparse parameters, Tesla can still be used to recover change-points in practice. Hence, we choose this method as our baseline because, despite the lack of any theoretical guarantee, it can still be applicable, provided a sufficiently large sample size and appropriately tuned regularization.
\vspace{-1mm}
\paragraph{Performance metrics.} We use two suitable metrics to evaluate the quality of \proposedMethod on the learned graphs and change-points. The first one, very standard in change-point detection tasks \cite{truong2019selective} and known as the \emph{Hausdorff metric}, measures the longest temporal distance between a change-point in $\mathcal{D}$ and its prediction in $\widehat{\mathcal{D}}$:
\begin{equation*}
    h(\mathcal{D},\widehat{\mathcal{D}}) \triangleq \frac{1}{n} \max \left\{\underset{t \in \mathcal{D}}{\max}\text{ } \underset{\hat{t} \in \widehat{\mathcal{D}}}{\min} |t - \hat{t}|\text{, }\, \underset{\hat{t} \in \widehat{\mathcal{D}}}{\max}\text{ } \underset{t \in \mathcal{D}}{\min} |t - \hat{t}|\right\}\!.
\end{equation*}
The lower this metric is, the better is the estimation. %
The second one, the \emph{$F_1$-score}, measures the goodness of the learned graphs structures (high value is better) by the quantity:
\begin{align*}
F_1 &= \frac{2 \times precision \times recall}{precision + recall},
\end{align*}
which combines the two following classic measures:
\begin{align*}
precision &= \frac{1}{n}\sum_{i=1}^n\sum_{a<b}\frac{\mathds{1}{\footnotesize\{(a,b)\in\hat{E_i} \land (a,b)\in E_i\}}}{\mathds{1}{\{(a,b)\in\hat{E_i}\}}}, \\
recall &= \frac{1}{n}\sum_{i=1}^n\sum_{a<b}\frac{\mathds{1}{\{(a,b)\in\hat{E_i} \land (a,b)\in E_i\}}}{\mathds{1}{\{(a,b)\in E_i\}}}.
\end{align*}

\subsection{Application to synthetic data}

\paragraph{Simulation design.} We compare the performance of our method \proposedMethod against Tesla using several independent synthetic datasets. %
We first fix certain characteristics for all generated datasets: each of them has $n=100$ timestamps, $|D|=2$ change-points at the $51$-st and $81$-st timestamps, hence resulting in $3$ submodels being valid respectively for $50$, $30$, and $20$ timestamps. %
We consider the graph structure of each submodel to be an independent random $d$-regular graph of $p=20$ nodes, where at each time the degree of the all nodes can be $d \in \{ 2, 3, 4\}$.

To generate a piece-wise constant Ising model:%
\begin{itemize}[leftmargin=5.0mm, topsep=-1mm,itemsep=-0.8mm]
    \item 
    We first pick a degree value $d \in \{ 2, 3, 4\}$ and draw independently $3$ random $d$-regular graphs, one for each submodel. Same as in \cite{ahmed2009recovering}, the edge weights are drawn from a uniform distribution taken over $[-1,-0.5]\cup [0.5,1]$.
    \item 
    For each submodel, we draw observations using Gibbs sampling with a burn-in period of $1000$ samples. Moreover, we collect one observation every $20$ samples (lag) to avoid dependencies between them. In fact, instead of a single observation, for each timestamp $i\in\{1,\ldots,n\}$ we generate multiple observations $n^{(i)}$ in $\{4,6,8\}$, which requires to use the likelihood of Eq.\,(\ref{eq:multiple_obs}). Besides, to be able to perform CV, we also sample $5$ more observations per timestamp and use them only in the testing phase.
\end{itemize}
\vspace{0.7mm}
With the above procedure we generate $10$ different piece-wise constant Ising models for each degree $d$, which makes $30$ models to learn in total. In addition, for each model, we generate $3$ different sets of observations, one for each $n^{(i)}\in \{4,6,8\}$, that constitute the individual learning problems of our evaluation. This results in $90$ experiments in total.

For each experiment, we use a random-search strategy to find the best pair of hyperparameters $(\lambda_1, \lambda_2)$ in $[4, 15] \times [30, 40]$. This is done individually for the \proposedMethod and Tesla methods. %
The selected hyperparameters are those minimizing the AIC or maximizing the AUC (see Sec.\,\ref{sec:exps-optimization}).

\paragraph{Results.} %
The average value and standard deviation of the corresponding $h$-score and $F_1$-score over each group of $10$ experiments are reported in Tab.\,\ref{table:res_BIC}. 
The results clearly show that \proposedMethod outperforms Tesla, regardless which model selection criterion we consult. This was expected as Telsa is not designed to recover Ising models that are evolving piece-wise constantly (see Sec.\,\ref{sec:baseline}). Furthermore, while in some cases Tesla finds a number of change-points closer to the true number, the associated $h$-scores are still higher than those of \proposedMethod.
Yet, Tesla is still not irrelevant to the task and in fact there are cases in which it reaches competitive performance scores to those of \proposedMethod. %
Another finding %
is that AIC seems to favor a low number of estimated changes-points. It achieves better $h$-scores for this simulated process, while the AUC criterion seems to give priority to the recovery of the graph structure, illustrated by higher $F_1$-scores. 

We show that empirically it is possible to obtain both low $h$-score and high $F_1$-score via better hyperparameters tuning. Specifically, for each experiment and for each degree $d$, we select the model with the highest $F_1$-score when the associated $h$-score $\leq h_{\text{min}}$, with $h_{\text{min}} \in \{0, .01, .02, .03\}$. This allows, respectively at most $0$ to $3$ timestamps of offset error between an estimated and a real change-point.
In the results of Fig.\,\ref{fig:barplot} we observe that even with very low $h$-score, high $F_1$-score are reachable. Furthermore, the \proposedMethod method always provides better $F_1$-score %
than Tesla, confirming once again its superior performance. 
 
\begin{figure}[h]
\centering
\includegraphics[width=.45\linewidth]{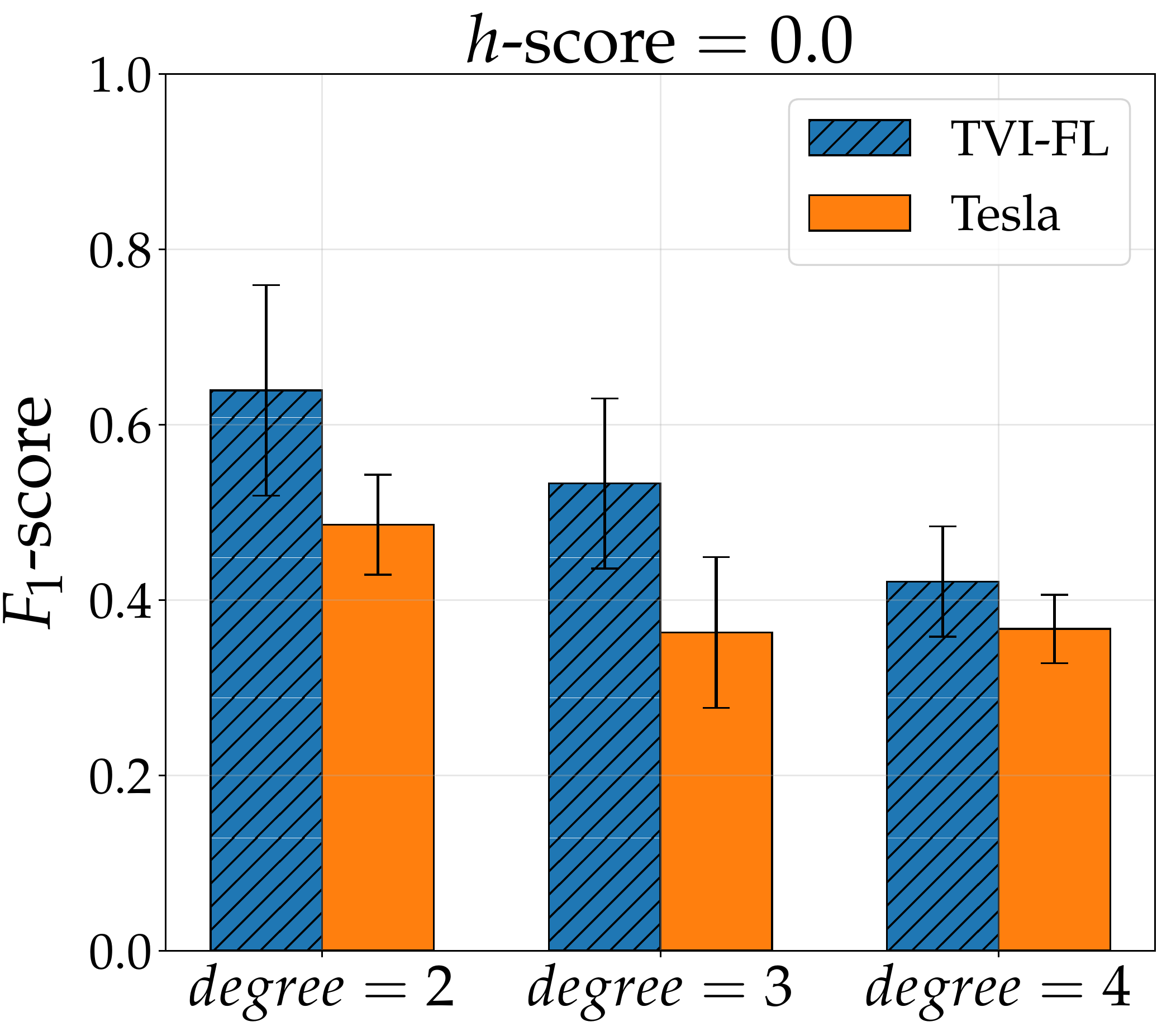}
\hspace{0.7em}
\includegraphics[width=.45\linewidth]{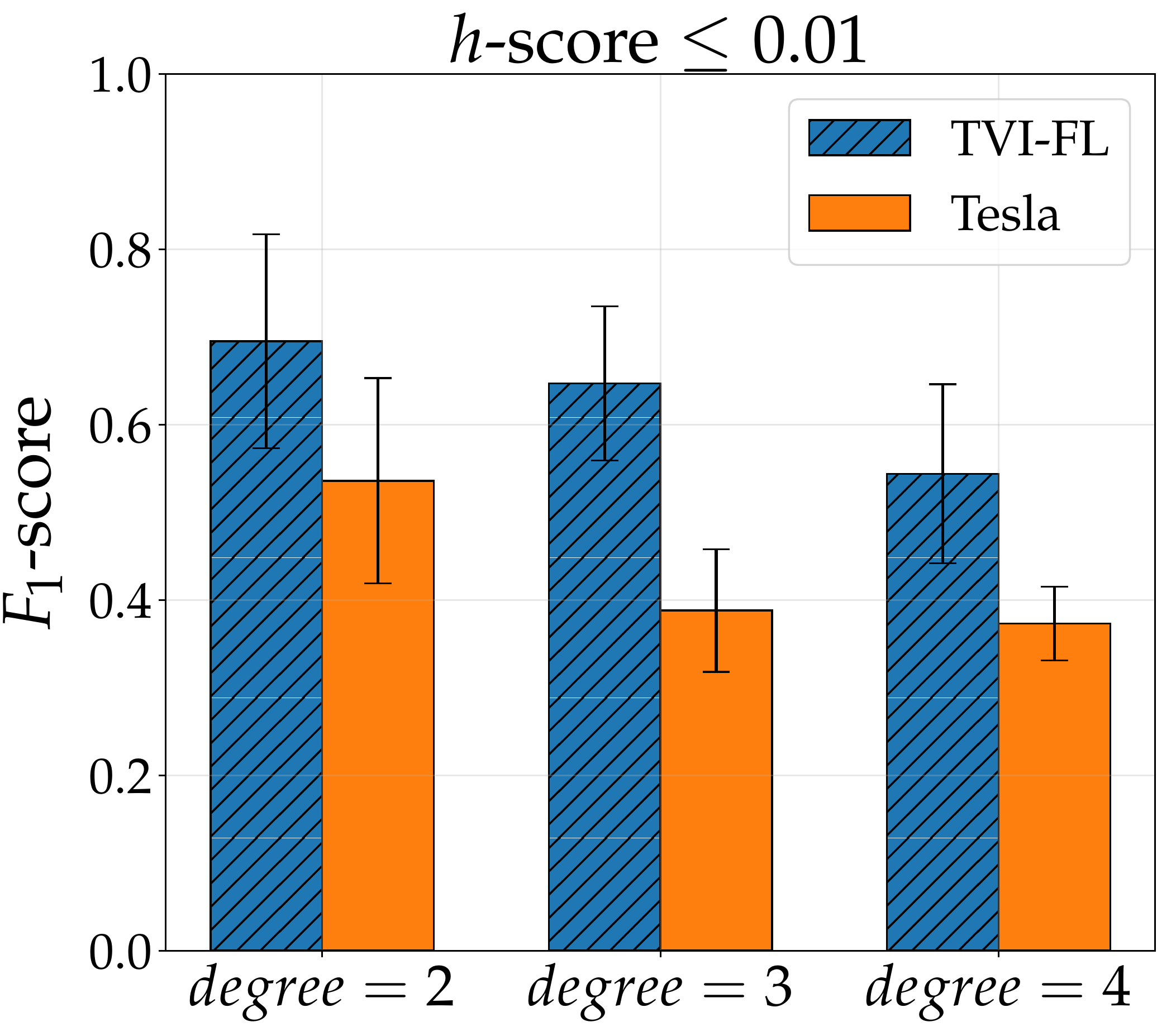}\\
\vspace{0.6em}
\includegraphics[width=.45\linewidth]{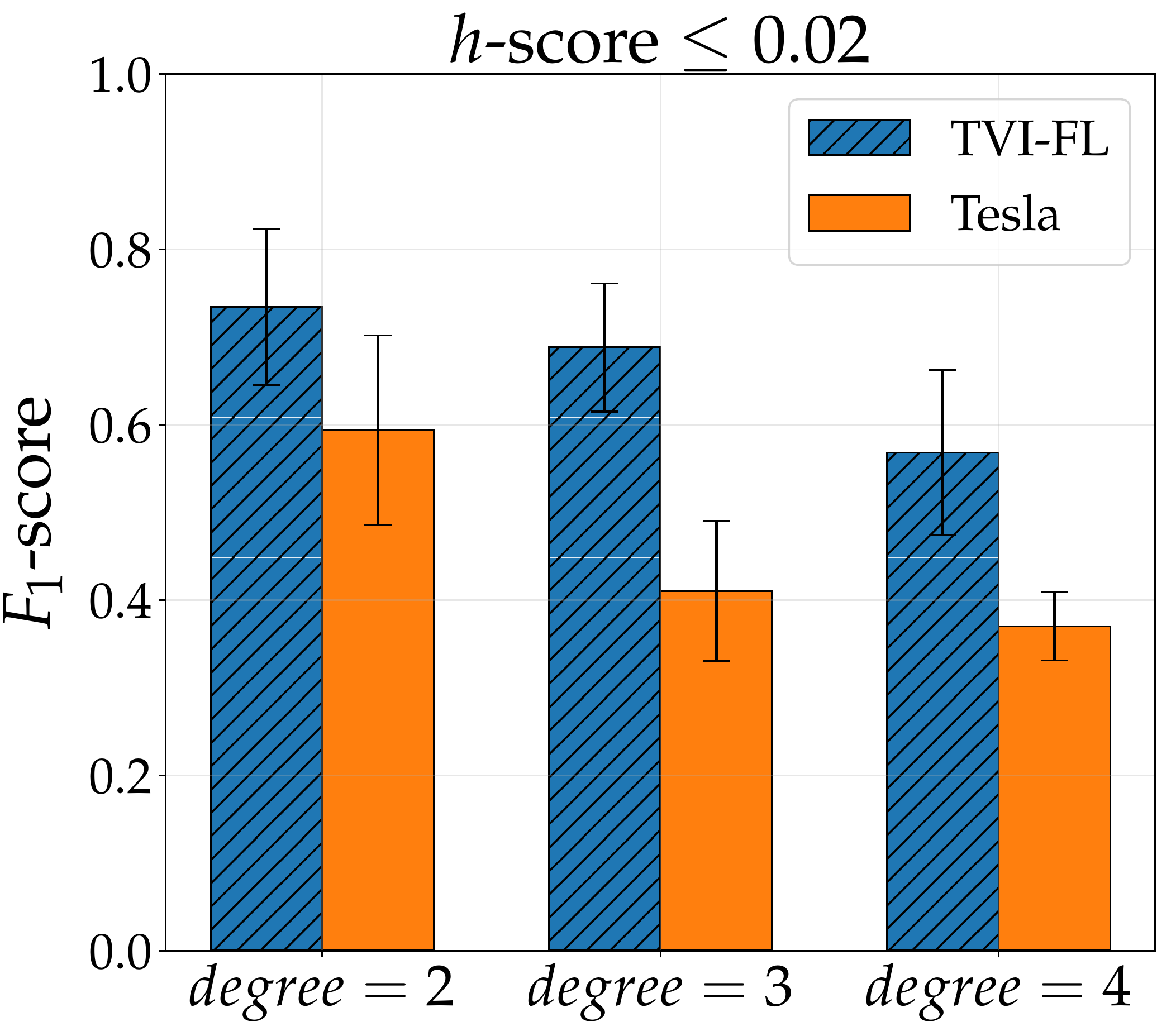}
\hspace{0.7em}
\includegraphics[width=.45\linewidth]{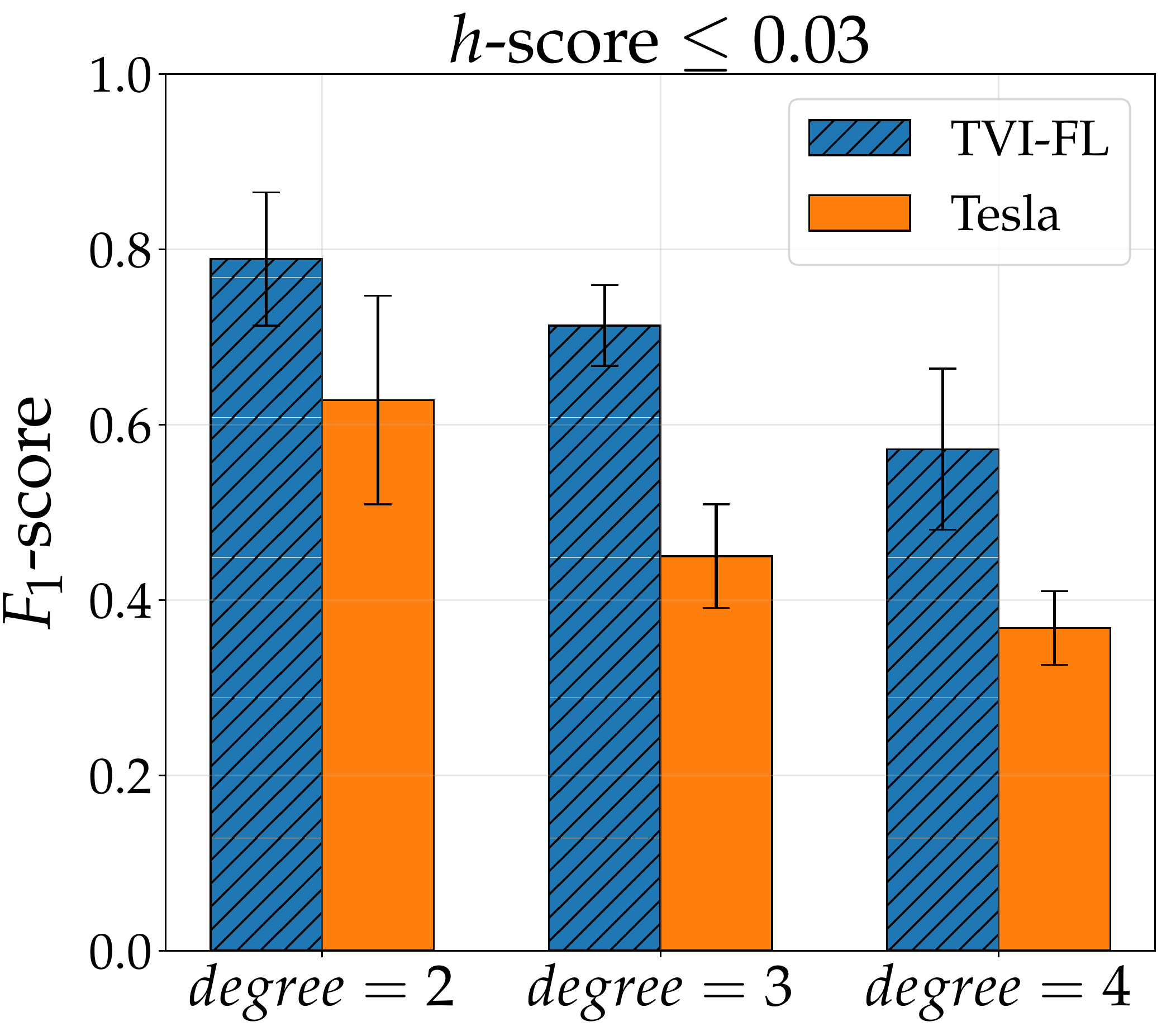}
\vspace{-0.5em}
\caption{The average value of the best $F_1$-score obtained when the $h$-score is below a certain threshold. These thresholds are respectively (from top left to bottom right), $0.0$, $0.01$, $0.02$, $0.03$, i.e. at most $0$, $1$, $2$, or $3$ timestamps of offset error between an estimated and a real change-point. Each pairs of bars corresponds to different $d$-regular graphs, with $d\in\{2,3,4\}$. The error bars correspond to $\pm$\,(std).}
\label{fig:barplot}
\end{figure}

\subsection{Finding change-points in the real world}

\paragraph{Dataset and setup.} In this section we evaluate the empirical performance of the  \proposedMethod method in a real-world use case. In particular, we analyze the different votes of the Illinois House of Representatives during the period of the $114$-th and $115$-th US Congresses (2015-2019), which are available at {\small\texttt{voteview.com}} \cite{usvote}. The Illinois House of Representatives has $18$ seats (one per district), each one corresponds to a US Representative belonging to the Democratic or the Republican parties. A Representative may or may not get reelected at the end of a Congress, which affects if he/she will retain his/her seat in the new Congress. The specific dataset we used contains $1264$ votes, each of them represented by a vector of size $p=18$, where a dimension is equal to $1$ if the respective Representative of that seat has voted \emph{Yes}, and $-1$ if it has voted \emph{No}. When no information is provided about the vote of a seat (e.g. due to an absence), we impute the majority vote of its party.

It is always difficult to interpret a large number of change-points. For this reason, we choose to use the AIC criterion, which was found in Sec.\,\ref{seq:xp} to favor smaller number of change-points. As for model tuning, we use a grid-search strategy to find the best values for the hyperparameters.

\paragraph{Results.} 

Fig.\,\ref{fig:Us_cum_vote} (bottom) shows the cumulative function of the votes of each of the $18$ seats, in temporal order, and the three change-points (dashed vertical line) detected by \proposedMethod.
The first two change-points are difficult to interpret; it seems though that the second one corresponds to the pre-election period when a Congress comes to its end and votes get usually less polarized. Nevertheless, it must be noted that the structural changes of the first two change-points are significantly lower compared to the third one. In fact, this last estimated change-point corresponds exactly to the time at which the Congress has changed. This significant change-point seems due to the non-reelection of some Representatives. More specifically, the Representative of $10$-th seat was the only one who was not reelected at the end of the $114$-th Congress: the Republican Robert Dold, who was replaced by the Democrat Brad Schneider. This switch apparently lead to a significant variation in the structure of the underlying graph. Fig.\,\ref{fig:Us_cum_vote} (top) %
shows the graphs of positive weights, before and after this significant change-point. As expected, two clusters appear, one with the seats of Democrats and the other with those of the Republicans. Moreover, the $10$-th seat becomes more connected with the cluster of Democrats after the time of change: the node loses $3$ connections to the Republican cluster and gains $5$ connections to Democrats and gets connected with all of them. More generally, all its weights with the Republican cluster decreases, contrarily to its weights with the Democratic cluster that do increase. This observation explains the origins of the structural change. Finally, it is interesting to observe that before and after the change-point, the $10$-th seat is the only one well-connected to both political groups. This makes us to conclude that this seat is represented by a \emph{super-collaborator}, a role that some Representatives get by acting more independently and position themselves in the middle of the parties \cite{andris2015rise}. Similarly, it is not surprising for Dan Lipinski, who had the $3$-rd seat, to present in the learned graphs $2$ connections with Republicans, as he is known to be a conservative Democrat.

Overall, this experiment shows that \proposedMethod is suited to find change-points in a real-world binary dataset, while also to recover the underlying evolving graph structure. This way, it increases the interpretability of the detected change-points. After applying the Telsa method on the same problem, we observed similar results and for this reason we omit them from the presentation. %

\begin{figure}[t]
\includegraphics[width=.48\linewidth,viewport=44pt 26pt 390pt 280pt, clip]{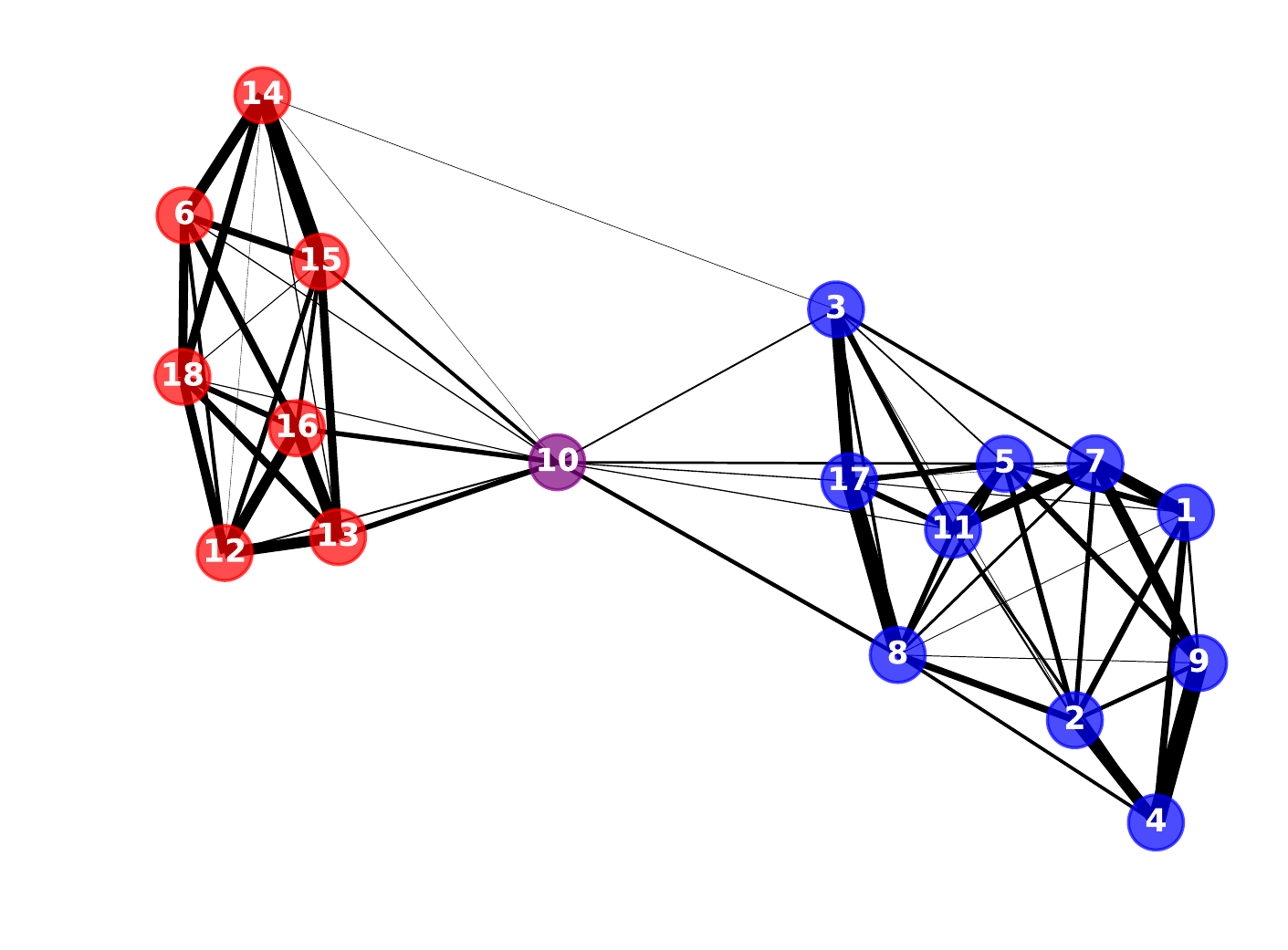}
\hspace{0.4em}
\includegraphics[width=.48\linewidth,viewport=44pt 26pt 390pt 280pt, clip]{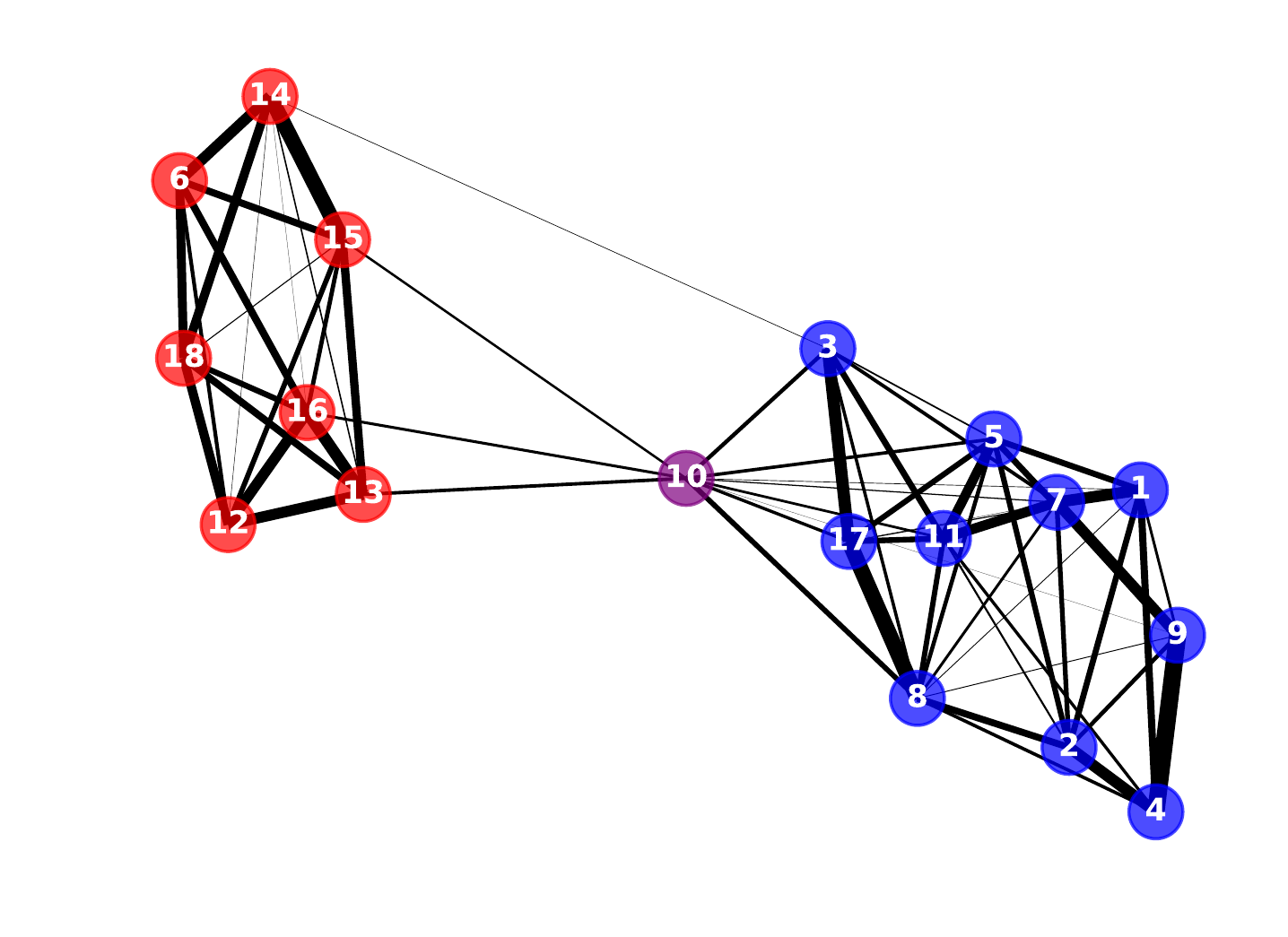}\!\!
\vspace{-0.9em}
\hspace{0.0em} \includegraphics[width=0.9\linewidth,viewport=0pt 0pt 418pt 310pt, clip]{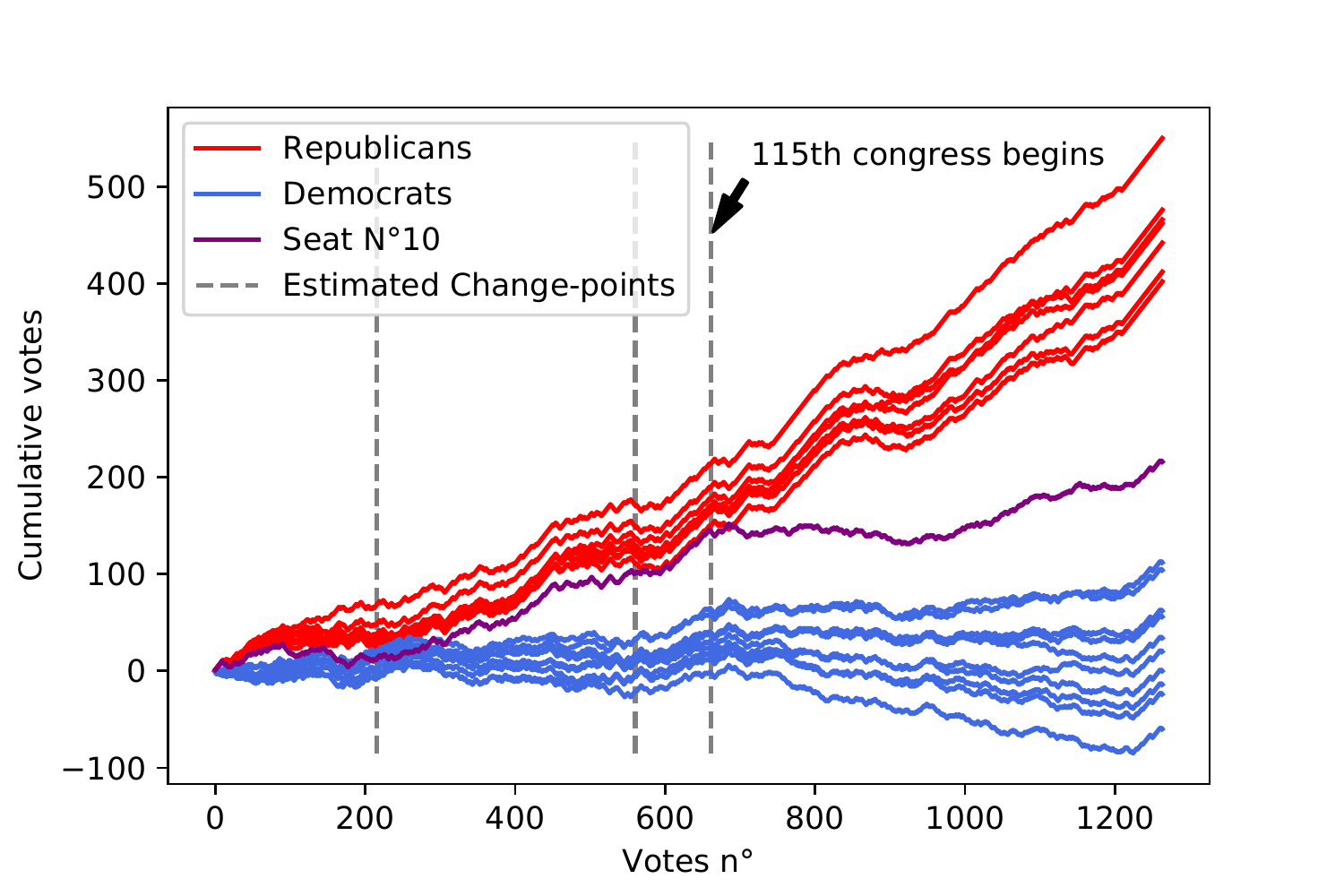}
\caption{(Top) The two graphs before and after the strongest estimated change-point: the third one that indeed corresponds to the end of the $114$-th Congress. (Bottom) The cumulative functions of votes of the $18$ seats over the two Congresses.}
\label{fig:Us_cum_vote}
 \end{figure}

\section{Conclusion and Future work}

This paper proposed \proposedMethod, an efficient way to learn a time-varying Ising model with piece-wise constantly evolving structure. Our method is able to both detect the change-points at which the structure of the model changes and the structure themselves. Our work is the first to provide change-point consistency theorems in this context. Those theoretical guarantees are reinforced by an empirical study. Using two different model selection criteria, the proposed method is showed to outperform the closest baseline algorithm. Directions of future research may include the investigation of a more adapted and refined way to solve the optimization program,  the proof of consistent graph structure recovery (\emph{sparsistency}) or the use of the recent Interaction Screening Objective \cite{vuffray2016interaction} in place of the standard conditional likelihood.

\section*{Acknowledgments}
This work was funded by the IdAML Chair hosted at ENS Paris-Saclay, Universit\'e Paris-Saclay.

\balance
\bibliographystyle{mystyle}
{
\footnotesize
\bibliography{biblio}

\begin{thebibliography}{36}
\providecommand{\natexlab}[1]{#1}
\providecommand{\url}[1]{\texttt{#1}}
\expandafter\ifx\csname urlstyle\endcsname\relax
  \providecommand{\doi}[1]{doi: #1}\else
  \providecommand{\doi}{doi: \begingroup \urlstyle{rm}\Url}\fi

\bibitem[Ahmed \& Xing(2009)Ahmed and Xing]{ahmed2009recovering}
Ahmed, A. and Xing, E.~P.
\newblock Recovering time-varying networks of dependencies in social and
  biological studies.
\newblock \emph{Proc. of the National Academy of Sciences}, 106\penalty0
  (29):\penalty0 11878--11883, 2009.

\bibitem[Andris et~al.(2015)Andris, Lee, Hamilton, Martino, Gunning, and
  Selden]{andris2015rise}
Andris, C., Lee, D., Hamilton, M.~J., Martino, M., Gunning, C.~E., and Selden,
  J.~A.
\newblock The rise of partisanship and super-cooperators in the {US} {H}ouse of
  {R}epresentatives.
\newblock \emph{PLOS ONE}, 10\penalty0 (4):\penalty0 e0123507, 2015.

\bibitem[Banerjee et~al.(2008)Banerjee, Ghaoui, and
  d'Aspremont]{banerjee2008model}
Banerjee, O., Ghaoui, L.~E., and d'Aspremont, A.
\newblock Model selection through sparse maximum likelihood estimation for
  multivariate {G}aussian or binary data.
\newblock \emph{Journal of Machine Learning Research}, 9\penalty0
  (Mar):\penalty0 485--516, 2008.

\bibitem[Bleakley \& Vert(2011)Bleakley and Vert]{bleakley2011group}
Bleakley, K. and Vert, J.-P.
\newblock The group fused lasso for multiple change-point detection.
\newblock \emph{arXiv preprint arXiv:1106.4199}, 2011.

\bibitem[Bybee \& Atchad{\'e}(2018)Bybee and Atchad{\'e}]{bybee2018change}
Bybee, L. and Atchad{\'e}, Y.
\newblock Change-point computation for large graphical models: a scalable
  algorithm for {G}aussian graphical models with change-points.
\newblock \emph{Journal of Machine Learning Research}, 19\penalty0
  (1):\penalty0 440--477, 2018.

\bibitem[Choi et~al.(2010)Choi, Lim, Torralba, and Willsky]{choi2010exploiting}
Choi, M.~J., Lim, J.~J., Torralba, A., and Willsky, A.~S.
\newblock Exploiting hierarchical context on a large database of object
  categories.
\newblock In \emph{IEEE Computer Society Conf. on Computer Vision and Pattern
  Recognition}, pp.\  129--136, 2010.

\bibitem[Diamond \& Boyd(2016)Diamond and Boyd]{cvxpy}
Diamond, S. and Boyd, S.
\newblock {CVXPY}: A {P}ython-embedded modeling language for convex
  optimization.
\newblock \emph{Journal of Machine Learning Research}, 17\penalty0
  (83):\penalty0 1--5, 2016.

\bibitem[Dong et~al.(2016)Dong, Thanou, Frossard, and
  Vandergheynst]{dong2016learning}
Dong, X., Thanou, D., Frossard, P., and Vandergheynst, P.
\newblock Learning {L}aplacian matrix in smooth graph signal representations.
\newblock \emph{Trans. Signal Processing}, 64\penalty0 (23):\penalty0
  6160--6173, 2016.

\bibitem[Du et~al.(2012)Du, Song, Yuan, and Smola]{du2012learning}
Du, N., Song, L., Yuan, M., and Smola, A.
\newblock Learning networks of heterogeneous influence.
\newblock In \emph{Advances in Neural Information Processing Systems}, pp.\
  2780--2788, 2012.

\bibitem[Fazayeli \& Banerjee(2016)Fazayeli and
  Banerjee]{fazayeli2016generalized}
Fazayeli, F. and Banerjee, A.
\newblock Generalized direct change estimation in ising model structure.
\newblock In \emph{Int. Conf. on Machine Learning}, pp.\  2281--2290, 2016.

\bibitem[Friedman et~al.(2008)Friedman, Hastie, and
  Tibshirani]{friedman2008sparse}
Friedman, J., Hastie, T., and Tibshirani, R.
\newblock Sparse inverse covariance estimation with the graphical lasso.
\newblock \emph{Biostatistics}, 9\penalty0 (3):\penalty0 432--441, 2008.

\bibitem[Gibberd \& Nelson(2017)Gibberd and Nelson]{gibberd2017regularized}
Gibberd, A.~J. and Nelson, J.~D.
\newblock Regularized estimation of piecewise constant {G}aussian graphical
  models: The group-fused graphical lasso.
\newblock \emph{Journal of Computational and Graphical Statistics}, 26\penalty0
  (3):\penalty0 623--634, 2017.

\bibitem[Gibberd \& Roy(2017)Gibberd and Roy]{gibberd2017multiple}
Gibberd, A.~J. and Roy, S.
\newblock Multiple changepoint estimation in high-dimensional {G}aussian
  graphical models.
\newblock \emph{arXiv preprint arXiv:1712.05786}, 2017.

\bibitem[Goel et~al.(2019)Goel, Kane, and Klivans]{goel2019learning}
Goel, S., Kane, D.~M., and Klivans, A.~R.
\newblock Learning ising models with independent failures.
\newblock In \emph{Conf. on Learning Theory}, pp.\  1449--1469, 2019.

\bibitem[Hallac et~al.(2017)Hallac, Park, Boyd, and
  Leskovec]{hallac2017network}
Hallac, D., Park, Y., Boyd, S., and Leskovec, J.
\newblock Network inference via the time-varying graphical lasso.
\newblock In \emph{Proc. of the ACM SIGKDD Int. Conf. on Knowledge Discovery
  and Data Mining}, pp.\  205--213. ACM, 2017.

\bibitem[Harchaoui \& L{\'e}vy-Leduc(2010)Harchaoui and
  L{\'e}vy-Leduc]{harchaoui2010multiple}
Harchaoui, Z. and L{\'e}vy-Leduc, C.
\newblock Multiple change-point estimation with a total variation penalty.
\newblock \emph{Journal of the American Statistical Association}, 105\penalty0
  (492):\penalty0 1480--1493, 2010.

\bibitem[H{\"o}fling \& Tibshirani(2009)H{\"o}fling and
  Tibshirani]{hofling2009estimation}
H{\"o}fling, H. and Tibshirani, R.
\newblock Estimation of sparse binary pairwise {M}arkov networks using
  pseudo-likelihoods.
\newblock \emph{Journal of Machine Learning Research}, 10\penalty0
  (Apr):\penalty0 883--906, 2009.

\bibitem[Keshavarz et~al.(2018)Keshavarz, Michailidis, and
  Atchade]{keshavarz2018sequential}
Keshavarz, H., Michailidis, G., and Atchade, Y.
\newblock Sequential change-point detection in high-dimensional {G}aussian
  graphical models.
\newblock \emph{arXiv preprint arXiv:1806.07870}, 2018.

\bibitem[Kolar \& Xing(2012)Kolar and Xing]{kolar2012estimating}
Kolar, M. and Xing, E.~P.
\newblock Estimating networks with jumps.
\newblock \emph{Electronic Journal of Statistics}, 6:\penalty0 2069, 2012.

\bibitem[Kolar et~al.(2010)Kolar, Song, Ahmed, Xing,
  et~al.]{kolar2010estimating}
Kolar, M., Song, L., Ahmed, A., Xing, E.~P., et~al.
\newblock Estimating time-varying networks.
\newblock \emph{The Annals of Applied Statistics}, 4\penalty0 (1):\penalty0
  94--123, 2010.

\bibitem[Koller et~al.(2009)Koller, Friedman, and
  Bach]{koller2009probabilistic}
Koller, D., Friedman, N., and Bach, F.
\newblock \emph{Probabilistic graphical models: principles and techniques}.
\newblock MIT press, 2009.

\bibitem[{Le Bars} et~al.(2019){Le Bars}, {Humbert}, {Oudre}, and
  {Kalogeratos}]{bigbosses}
{Le Bars}, B., {Humbert}, P., {Oudre}, L., and {Kalogeratos}, A.
\newblock Learning {L}aplacian matrix from bandlimited graph signals.
\newblock In \emph{IEEE Int. Conf. on Acoustics, Speech and Signal Processing},
  pp.\  2937--2941, 2019.

\bibitem[Lewis et~al.(2020)Lewis, Poole, Rosenthal, Boche, Rudkin, and
  Sonnet]{usvote}
Lewis, J.~B., Poole, K., Rosenthal, H., Boche, A., Rudkin, A., and Sonnet, L.
\newblock Voteview: Congressional roll-call votes database.
\newblock \emph{https://voteview.com/}, 2020.

\bibitem[Londschien et~al.(2019)Londschien, Kov{\'a}cs, and
  B{\"u}hlmann]{londschien2019change}
Londschien, M., Kov{\'a}cs, S., and B{\"u}hlmann, P.
\newblock Change point detection for graphical models in presence of missing
  values.
\newblock \emph{arXiv preprint arXiv:1907.05409}, 2019.

\bibitem[Marbach et~al.(2012)Marbach, Costello, K{\"u}ffner, Vega, Prill,
  Camacho, Allison, Aderhold, Bonneau, Chen, et~al.]{marbach2012wisdom}
Marbach, D., Costello, J.~C., K{\"u}ffner, R., Vega, N.~M., Prill, R.~J.,
  Camacho, D.~M., Allison, K.~R., Aderhold, A., Bonneau, R., Chen, Y., et~al.
\newblock Wisdom of crowds for robust gene network inference.
\newblock \emph{Nature methods}, 9\penalty0 (8):\penalty0 796, 2012.

\bibitem[Meinshausen et~al.(2006)Meinshausen, B{\"u}hlmann,
  et~al.]{meinshausen2006high}
Meinshausen, N., B{\"u}hlmann, P., et~al.
\newblock High-dimensional graphs and variable selection with the lasso.
\newblock \emph{The Annals of Statistics}, 34\penalty0 (3):\penalty0
  1436--1462, 2006.

\bibitem[Ravikumar et~al.(2010)Ravikumar, Wainwright, Lafferty,
  et~al.]{ravikumar2010high}
Ravikumar, P., Wainwright, M.~J., Lafferty, J.~D., et~al.
\newblock High-dimensional ising model selection using $\ell$1-regularized
  logistic regression.
\newblock \emph{The Annals of Statistics}, 38\penalty0 (3):\penalty0
  1287--1319, 2010.

\bibitem[Ren et~al.(2015)Ren, Sun, Zhang, Zhou, et~al.]{ren2015asymptotic}
Ren, Z., Sun, T., Zhang, C.-H., Zhou, H.~H., et~al.
\newblock Asymptotic normality and optimalities in estimation of large
  {G}aussian graphical models.
\newblock \emph{The Annals of Statistics}, 43\penalty0 (3):\penalty0 991--1026,
  2015.

\bibitem[Rodriguez et~al.(2011)Rodriguez, Balduzzi, and
  Sch{\"o}lkopf]{rodriguez2011uncovering}
Rodriguez, M., Balduzzi, D., and Sch{\"o}lkopf, B.
\newblock Uncovering the temporal dynamics of diffusion networks.
\newblock In \emph{Proc. of the Int. Conf. on Machine Learning}, pp.\
  561--568, 2011.

\bibitem[Roy et~al.(2017)Roy, Atchad{\'e}, and Michailidis]{roy2017change}
Roy, S., Atchad{\'e}, Y., and Michailidis, G.
\newblock Change point estimation in high dimensional {M}arkov random-field
  models.
\newblock \emph{Journal of the Royal Statistical Society: Series B (Statistical
  Methodology)}, 79\penalty0 (4):\penalty0 1187--1206, 2017.

\bibitem[Truong et~al.(2019)Truong, Oudre, and Vayatis]{truong2019selective}
Truong, C., Oudre, L., and Vayatis, N.
\newblock Selective review of offline change point detection methods.
\newblock \emph{Signal Processing}, pp.\  107299, 2019.

\bibitem[Vuffray et~al.(2016)Vuffray, Misra, Lokhov, and
  Chertkov]{vuffray2016interaction}
Vuffray, M., Misra, S., Lokhov, A., and Chertkov, M.
\newblock Interaction screening: Efficient and sample-optimal learning of ising
  models.
\newblock In \emph{Advances in Neural Information Processing Systems}, pp.\
  2595--2603, 2016.

\bibitem[Wang et~al.(2018)Wang, Qi, et~al.]{wang2018fast}
Wang, B., Qi, Y., et~al.
\newblock Fast and scalable learning of sparse changes in high-dimensional
  {G}aussian graphical model structure.
\newblock In \emph{Int. Conf. on Artificial Intelligence and Statistics}, pp.\
  1691--1700, 2018.

\bibitem[Xue et~al.(2012)Xue, Zou, Cai, et~al.]{xue2012nonconcave}
Xue, L., Zou, H., Cai, T., et~al.
\newblock Nonconcave penalized composite conditional likelihood estimation of
  sparse ising models.
\newblock \emph{The Annals of Statistics}, 40\penalty0 (3):\penalty0
  1403--1429, 2012.

\bibitem[Yang \& Peng(2019)Yang and Peng]{yang2019estimating}
Yang, J. and Peng, J.
\newblock Estimating time-varying graphical models.
\newblock \emph{Journal of Computational and Graphical Statistics}, pp.\
  1--12, 2019.

\bibitem[Yuan \& Lin(2007)Yuan and Lin]{yuan2007model}
Yuan, M. and Lin, Y.
\newblock Model selection and estimation in the {G}aussian graphical model.
\newblock \emph{Biometrika}, 94\penalty0 (1):\penalty0 19--35, 2007.

\end{thebibliography}


\begin{thebibliography}{4}
\providecommand{\natexlab}[1]{#1}
\providecommand{\url}[1]{\texttt{#1}}
\expandafter\ifx\csname urlstyle\endcsname\relax
  \providecommand{\doi}[1]{doi: #1}\else
  \providecommand{\doi}{doi: \begingroup \urlstyle{rm}\Url}\fi

\bibitem[Gibberd \& Roy(2017)Gibberd and Roy]{gibberd2017multiple}
Gibberd, A.~J. and Roy, S.
\newblock Multiple changepoint estimation in high-dimensional {G}aussian
  graphical models.
\newblock \emph{arXiv preprint arXiv:1712.05786}, 2017.

\bibitem[Harchaoui \& L{\'e}vy-Leduc(2010)Harchaoui and
  L{\'e}vy-Leduc]{harchaoui2010multiple}
Harchaoui, Z. and L{\'e}vy-Leduc, C.
\newblock Multiple change-point estimation with a total variation penalty.
\newblock \emph{Journal of the American Statistical Association}, 105\penalty0
  (492):\penalty0 1480--1493, 2010.

\bibitem[Kolar \& Xing(2012)Kolar and Xing]{kolar2012estimating}
Kolar, M. and Xing, E.~P.
\newblock Estimating networks with jumps.
\newblock \emph{Electronic Journal of Statistics}, 6:\penalty0 2069, 2012.

\bibitem[Le~Bars \& Kalogeratos(2019)Le~Bars and
  Kalogeratos]{le2019probabilistic}
Le~Bars, B. and Kalogeratos, A.
\newblock A probabilistic framework to node-level anomaly detection in
  communication networks.
\newblock In \emph{IEEE Conf. on Computer Communications}, pp.\  2188--2196,
  2019.

\end{thebibliography}
}

\end{document}


\twocolumn[

\icmltitle{Appendix}

\vskip 0.3in
]

\section*{Additional figures and results}

\subsection*{Comparison between $\boldsymbol{\ell_2}$- and $\boldsymbol{\ell_1}$-norms}

In Fig.\,\ref{fig:l1vsl2}, we illustrate the main difference in using an $\ell_2$- or alternatively a $\ell_1$-norm in the fused penalty of our objective function. The figure illustrates well the problem of $\ell_1$-norm: by penalizing each dimension independently, this norm easily leads to parameter vectors that have some non-zero dimensions, making the piece-wise constant assumption more difficult to recover. On the contrary, the $\ell_2$-norm avoids this problem and hence enforces the whole consecutive parameter vectors to be equal. 

\subsection*{Another real-world experiment}

In this section, we evaluate the goodness of graph learning with TVI-FL on the Sigfox IoT dataset \citep{le2019probabilistic} (available at: \texttt{http:/\!/kalogeratos.com/the-sigfox-iot-dataset}). The dataset contains activity recorded on a telecommunication network, where each observation corresponds to a message that was locally broadcasted by one device and has been received by a subset of the $34$ monitored antennas. Each data vector is binary and indicates which antennas has received the message or not ($\text{received}=1$, not $\text{received}=0$). The dataset contains all the messages received by the antennas, on a daily basis over a period of five months, resulting in $n=120$ timestamps. According to the authors, one antenna is working poorly after the $30$-th timestamp. In the following experiment, we select this antenna along with the $19$ geographically closest others, and we select randomly $n_i = 200$ messages at each timestamp. %
The learned graphs with TVI-FL at timestamps $i=0$ (before the antenna's malfunction) and $i = 60$ (after the antenna's malfunction) are displayed in Fig.~\ref{fig:sigfox}, where only the positive edges are drawn.

\begin{figure}[t]
  \centering
  \includegraphics[width=0.9\linewidth]{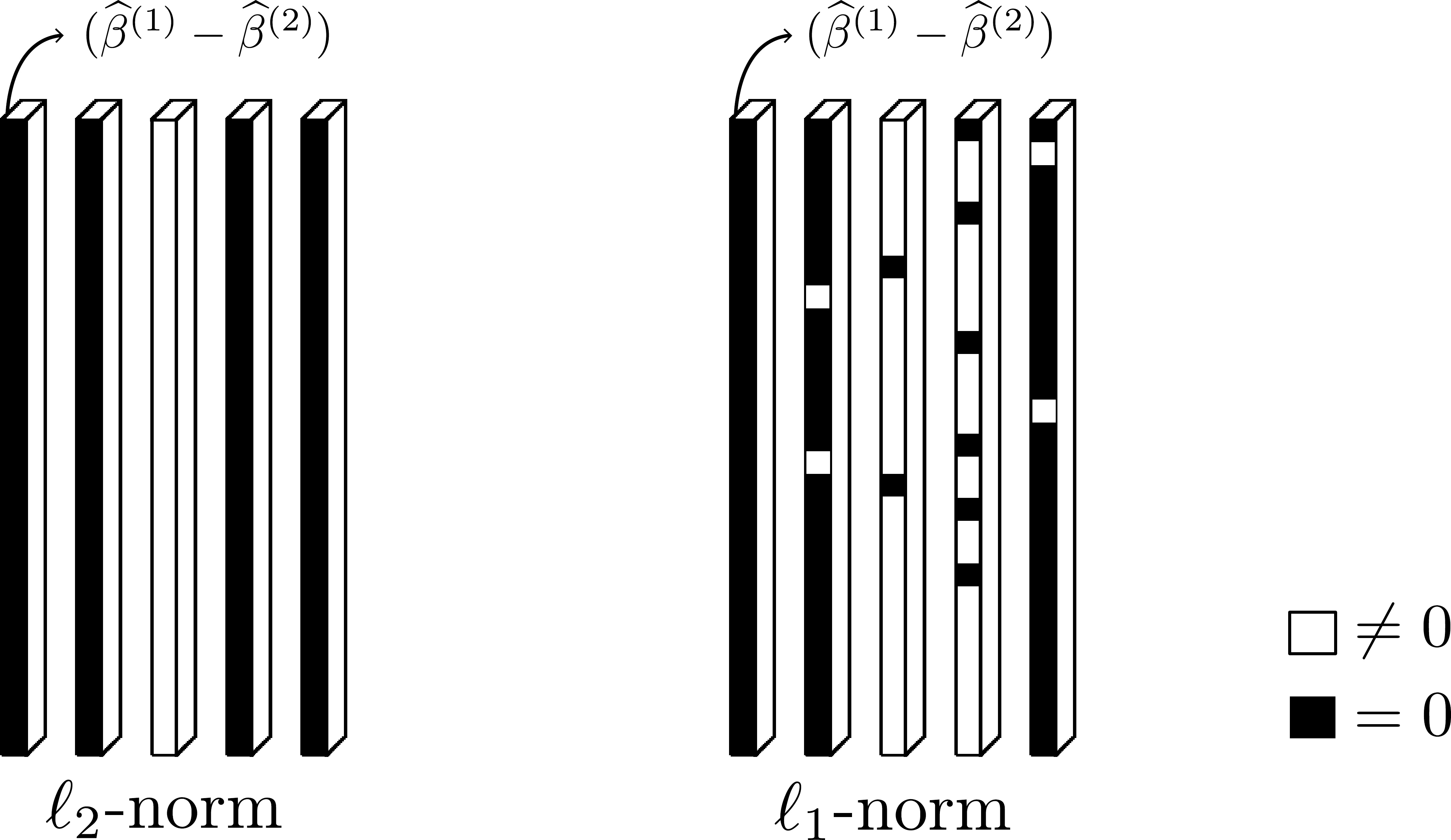}
  \caption{Comparison of the learned parameter vectors when using either $\ell_2$- or $\ell_1$-norm in the fused penalty. White squares indicates dimensions at which the two consecutive parameter vectors are different. Black squares where they are equal. The presence of at least one white square indicates a change-point.}
\label{fig:l1vsl2}
 \end{figure}
 \begin{figure}[t]
  \centering
  \includegraphics[width=1.1\linewidth]{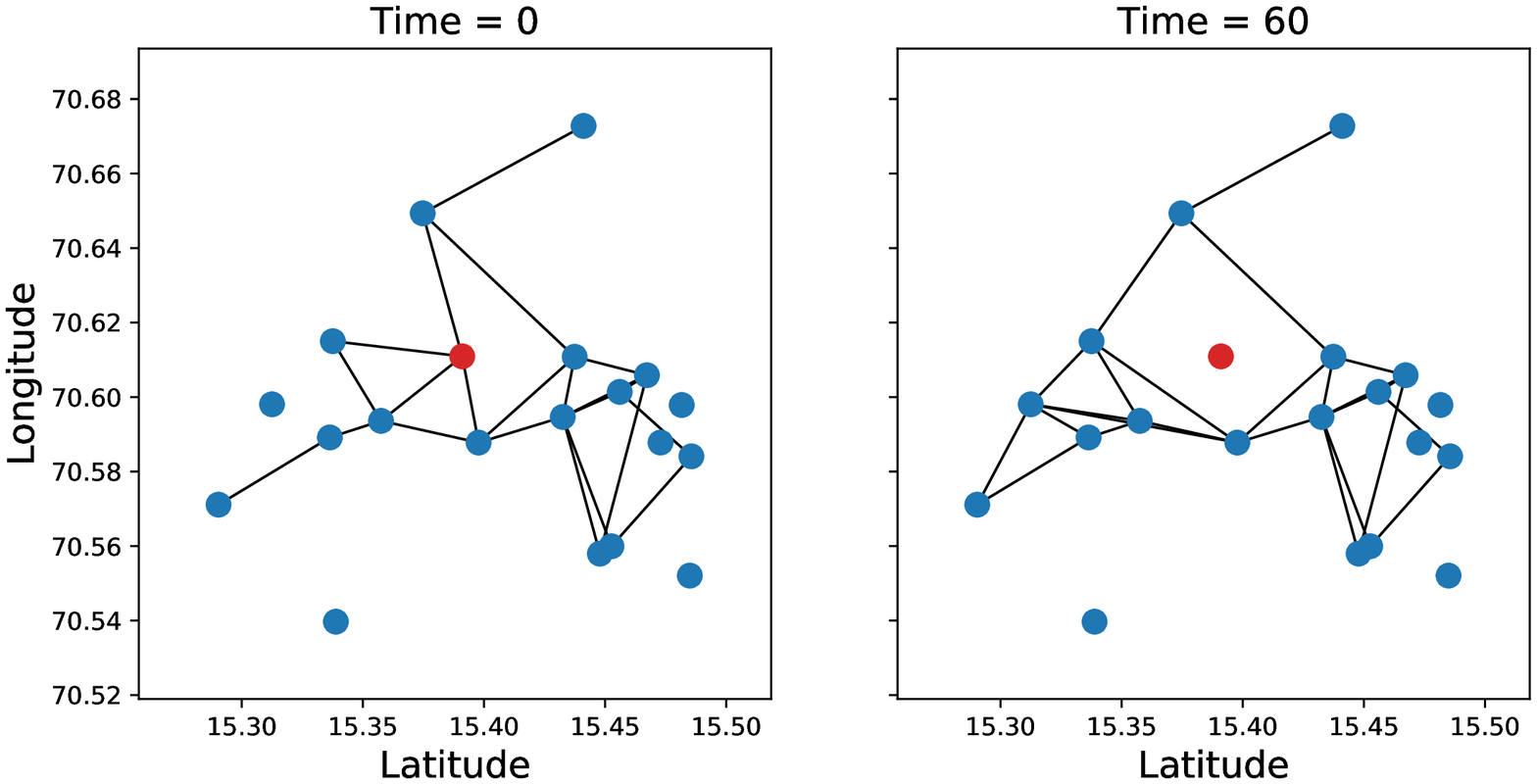}
  \caption{Learned graphs for Sigfox dataset before and after the anomaly  recorded at the red antenna.}
\label{fig:sigfox}
 \end{figure}

The goodness-of-fit of our method can be corroborated by the observations: 1) The learned graphs are 
in agreement with the spatial distribution of the antennas: nearby antennas are more likely to be connected as they have high chance to receive the same messages; 2) The problematic antenna lost edges after its malfunction. Again, this is as expected since a poorly working antenna would receive less messages, implying a decreased correlation with its neighbors.

\subsection*{A more complete table of results}

For completeness, in Tab.\,\ref{table:res_table} we complete the table of the main text of the paper with an additional comparative method, namely the one that estimates a graph at each timestamp ($\lambda_1 = 0$).

\begin{table*}[t]\scriptsize%
    \centering
	\ra{1.0}
	\begin{tabular}{@{}lll|llr|lll@{}}
		\toprule
         \textbf{} & \textbf{Observations} & & \multicolumn{3}{c}{\textbf{AIC}} & \multicolumn{3}{c}{\textbf{AUC}} \\
		\textbf{Degree} & \textbf{per timestamp} & \textbf{Method}  & \textbf{$h$-score} $\downarrow$ & \textbf{$F_1$-score} $\uparrow$ & $\boldMath{\widehat{D}\hfill}$ & \textbf{$h$-score} $\downarrow$ & \textbf{$F_1$-score} $\uparrow$ & $\boldMath{\widehat{D}\hfill}$\\ \midrule
		%
		\multirow{2}{*}{$d=2$} & \multirow{2}{*}{$n^{(i)}=4$} & TVI-FL   & $\boldMath{0.046 \pm (0.024)}$ & $\boldMath{0.694 \pm (0.103)}$ & $7.400 \pm (3.137)$ & $0.221 \pm (0.035)$ & $\boldMath{0.876 \pm (0.030)}$ & $26.100 \pm (7.739)$ \\
		%
		&& Tesla & $0.106 \pm (0.087)$ & $0.649 \pm (0.190)$ & $12.700 \pm (7.682)$ & $\boldMath{0.184 \pm (0.051)}$ & $0.841 \pm (0.041)$ & $25.100 \pm (4.784)$\\
		%
		%
		&& $\lambda_1 = 0$ & $0.290 \pm (0.000)$ & $0.342 \pm (0.007)$ & $99.000 \pm (0.000)$ & $0.290 \pm (0.000)$ & $0.342 \pm (0.000)$ & $99.000 \pm (0.000)$\\
		\cmidrule{2-9}
		%
		%
		\multirow{2}{*}{} & \multirow{2}{*}{$n^{(i)}=6$} & TVI-FL   &  $\boldMath{0.129 \pm (0.058)}$ & $\boldMath{0.816 \pm (0.073)}$ & $9.700 \pm (2.759)$ & $\boldMath{0.147 \pm (0.071)}$ & $\boldMath{0.875 \pm (0.027)}$ & $15.300 \pm (3.378)$ \\
		%
		&& Tesla &  $0.178 \pm (0.130)$ & $0.748 \pm (0.167)$ & $12.900 \pm (5.540)$ & $0.164 \pm (0.062)$ & $0.841 \pm (0.048)$ & $19.000 \pm (2.530)$ \\
		%
		%
		&& $\lambda_1 = 0$ & $0.290 \pm (0.000)$ & $0.407 \pm (0.010)$ & $99.000 \pm (0.000)$ & $0.290 \pm (0.000)$ & $0.407 \pm (0.000)$ & $99.000 \pm (0.000)$\\
		\cmidrule{2-9}
		%
        %
		\multirow{2}{*}{} & \multirow{2}{*}{$n^{(i)}=8$} & TVI-FL   &  $\boldMath{0.082 \pm (0.081)}$ & $0.833 \pm (0.095)$ & $7.400 \pm (3.040)$ & $\boldMath{0.099 \pm (0.073)}$ & $\boldMath{0.891 \pm (0.024)}$ & $11.000 \pm (3.873)$ \\
		%
		&& Tesla &  $0.124 \pm (0.071)$ & $\boldMath{0.846 \pm (0.047)}$ & $13.600 \pm (2.010)$ & $0.178 \pm (0.066)$ & $0.853 \pm (0.039)$ & $14.700 \pm (3.348)$ \\
		%
		%
		&& $\lambda_1 = 0$ & $0.290 \pm (0.000)$ & $0.449 \pm (0.009)$ & $99.000 \pm (0.000)$ & $0.290 \pm (0.000)$ & $0.449 \pm (0.000)$ & $99.000 \pm (0.000)$\\
		\midrule
		%
		\multirow{2}{*}{$d=3$} & \multirow{2}{*}{$n^{(i)}=4$} & TVI-FL   & $\boldMath{0.080 \pm (0.069)}$ & $\boldMath{0.563 \pm (0.089)}$ & $7.000 \pm (2.683)$ & $\boldMath{0.204 \pm (0.035)}$ & $\boldMath{0.734 \pm (0.024)}$ & $23.100 \pm (6.715)$ \\
		%
		&& Tesla & $0.278 \pm (0.319)$ & $0.353 \pm (0.072)$ & $3.200 \pm (2.891)$ & $0.208 \pm (0.029)$ & $0.611 \pm (0.041)$ & $29.200 \pm (3.187)$ \\
		%
		%
		&& $\lambda_1 = 0$ & $0.290 \pm (0.000)$ & $0.366 \pm (0.010)$ & $99.000 \pm (0.000)$ & $0.290 \pm (0.000)$ & $0.366 \pm (0.000)$ & $99.000 \pm (0.000)$\\
		\cmidrule{2-9}
		%
		%
		\multirow{2}{*}{} & \multirow{2}{*}{$n^{(i)}=6$} & TVI-FL   & $\boldMath{0.055 \pm (0.064)}$ & $\boldMath{0.617 \pm (0.161)}$ & $6.300 \pm (3.494)$ & $\boldMath{0.130 \pm (0.051)}$ & $\boldMath{0.743 \pm (0.034)}$ & $12.800 \pm (2.821)$ \\
		%
		&& Tesla & $0.302 \pm (0.241)$ & $0.346 \pm (0.060)$ & $2.000 \pm (1.183)$ & $0.173 \pm (0.044)$ & $0.616 \pm (0.041)$ & $22.600 \pm (2.245)$ \\
		%
		%
		&& $\lambda_1 = 0$ & $0.290 \pm (0.000)$ & $0.391 \pm (0.014)$ & $99.000 \pm (0.000)$ & $0.290 \pm (0.000)$ & $0.391 \pm (0.000)$ & $99.000 \pm (0.000)$\\
		\cmidrule{2-9}
		%
        %
		\multirow{2}{*}{} & \multirow{2}{*}{$n^{(i)}=8$} & TVI-FL   & $\boldMath{0.091 \pm (0.073)}$ & $\boldMath{0.714 \pm (0.130)}$ & $8.000 \pm (2.530)$ & $\boldMath{0.127 \pm (0.073)}$ & $\boldMath{0.764 \pm (0.032)}$ & $10.400 \pm (2.154)$ \\
		%
		&& Tesla & $0.311 \pm (0.231)$ & $0.361 \pm (0.098)$ & $2.600 \pm (2.615)$ & $0.162 \pm (0.052)$ & $0.633 \pm (0.045)$ & $18.700 \pm (3.716)$ \\
		%
		%
		&& $\lambda_1 = 0$ & $0.290 \pm (0.000)$ & $0.410 \pm (0.015)$ & $99.000 \pm (0.000)$ & $0.290 \pm (0.000)$ & $0.410 \pm (0.000)$ & $99.000 \pm (0.000)$\\
		\midrule
		%
		\multirow{2}{*}{$d=4$} & \multirow{2}{*}{$n^{(i)}=4$} & TVI-FL   & $\boldMath{0.101 \pm (0.082)}$ & $\boldMath{0.453 \pm (0.111)}$ & $6.500 \pm (3.324)$ & $\boldMath{0.232 \pm (0.026)}$ & $\boldMath{0.644 \pm (0.041)}$ & $29.400 \pm (4.317)$\\
		%
		&& Tesla & $0.444 \pm (0.273)$ & $0.347 \pm (0.044)$ & $2.875 \pm (1.900)$ & $0.234 \pm (0.017)$ & $0.518 \pm (0.046)$ & $34.625 \pm (1.654)$\\
		%
		%
		&& $\lambda_1 = 0$ & $0.290 \pm (0.000)$ & $0.388 \pm (0.005)$ & $99.000 \pm (0.000)$ & $0.290 \pm (0.000)$ & $0.388 \pm (0.000)$ & $99.000 \pm (0.000)$\\
		\cmidrule{2-9}
		%
		%
		\multirow{2}{*}{} & \multirow{2}{*}{$n^{(i)}=6$} & TVI-FL  & $\boldMath{0.099 \pm (0.064)}$ & $\boldMath{0.501 \pm (0.130)}$ & $5.667 \pm (2.309)$ & $\boldMath{0.183 \pm (0.044)}$ & $\boldMath{0.664 \pm (0.041)}$ & $16.778 \pm (3.258)$ \\
		%
		&& Tesla & $0.258 \pm (0.236)$ & $0.355 \pm (0.035)$ & $2.500 \pm (1.118)$ & $0.215 \pm (0.032)$ & $0.503 \pm (0.040)$ & $26.000 \pm (4.472)$ \\
		%
		%
		&& Static & $0.290 \pm (0.000)$ & $0.390 \pm (0.007)$ & $99.000 \pm (0.000)$ & $0.290 \pm (0.000)$ & $0.390 \pm (0.000)$ & $99.000 \pm (0.000)$\\
		\cmidrule{2-9}
		%
        %
		\multirow{2}{*}{} & \multirow{2}{*}{$n^{(i)}=8$} & TVI-FL  & $\boldMath{0.077 \pm (0.076)}$ & $\boldMath{0.528 \pm (0.158)}$ & $5.556 \pm (3.624)$ & $\boldMath{0.169 \pm (0.064)}$ & $\boldMath{0.678 \pm (0.049)}$ & $12.444 \pm (4.524)$ \\
		%
		&& Tesla & $0.251 \pm (0.230)$ & $0.357 \pm (0.044)$ & $2.625 \pm (0.696)$ & $0.219 \pm (0.027)$ & $0.518 \pm (0.054)$ & $24.000 \pm (2.398)$ \\
		%
		%
		&& $\lambda_1 = 0$ & $0.290 \pm (0.000)$ & $0.385 \pm (0.007)$ & $99.000 \pm (0.000)$ & $0.290 \pm (0.000)$ & $0.385 \pm (0.000)$ & $99.000 \pm (0.000)$ \\
		\bottomrule
		%
        %
	\end{tabular}
	\caption{Results for the model with the lowest AIC, and that with the highest AUC. The average $\pm$\,(std) of the metrics is reported. Compared to the table provided in the main text, here an additional comparative method is mentioned, namely the one that estimates a graph at each timestamp ($\lambda_1 = 0$).}
    \label{table:res_table}
\end{table*}

\section*{Technical proofs}
\allowdisplaybreaks

\subsection*{Main results}
In the following, we recall and prove the main results given in the paper. The proofs uses in many situations the different lemmas given next.

\begin{lemma}{(Optimality Conditions)}
\label{lemma:optim_cond}
A matrix $\hat{\beta}$ is optimal for our problem iff there exists a collection of subgradient vectors $\{\hat{z}^{(i)}\}_{i=2}^n$ and $\{\hat{y}^{(i)}\}_{i=1}^n$, with $\hat{z}^{(i)} \in \partial \norm{\widehat{\beta}^{(i)}-\widehat{\beta}^{(i-1)}}_2$ and $\hat{y}^{(i)} \in \partial \norm{\widehat{\beta}^{(i)}}_1$, such that $\forall k=1,\ldots,n$ we have: 
%
\begin{align}
\sum_{i=k}^n & x^{(i)}_{\smallsetminus a}\left\{\tanh\!\left( \widehat{\beta}^{(i)\top}x^{(i)}_{\smallsetminus a})\right) - \tanh\!\left( \omega_a^{(i)\top}x^{(i)}_{\smallsetminus a})\right)\right\} \nonumber \\
& - \sum_{i=k}^n x^{(i)}_{\smallsetminus a}\left\{x^{(i)}_a - \mathbb{E}_{\Omega^{(i)}}\left[X_a | X_{\smallsetminus a} = x^{(i)}_{\smallsetminus a} \right]\right\} \nonumber \\ 
& + \lambda_1\hat{z}^{(k)} + \lambda_2\sum_{i=k}^n\hat{y}^{(i)} = \bold{0}_{p-1},
\end{align}
\text{where} $\tanh$ is the hyperbolic tangent function, $\bold{0}_{p-1}$ is the %
zero vector of size $p-1$, $\hat{z}^{(1)}=\bold{0}_{p-1}$, and \\
%
$\hat{z}^{(i)} = \left\{\begin{array}{l}
\frac{\widehat{\beta}^{(i)}-\widehat{\beta}^{(i-1)}}{\norm{\widehat{\beta}^{(i)}-\widehat{\beta}^{(i-1)}}_2}\quad \text{ if \ } \widehat{\beta}^{(i)}-\widehat{\beta}^{(i-1)} \neq 0, \\ 
\in \mathcal{B}_2(0,1) \quad \quad\ \ \,\text{otherwise};
\end{array} \right.$ \\ $\hat{y}^{(i)} = \left\{\begin{array}{l} \text{\emph{sign}}(\widehat{\beta}^{(i)}) \quad\quad\quad \text{if \ } x \neq 0, \\
\in \mathcal{B}_1(0,1) \quad\quad\ \ \  \text{otherwise}.\end{array}\right.$
\end{lemma}
%
\begin{proof}
Let us first introduce the following change of variables:
%
\begin{equation*}
    \gamma^{(i)} = \left\{\begin{array}{l}\beta^{(i)} \quad \quad \quad \quad \quad \text{if} \quad i=1, \\ 
    \beta^{(i)} - \beta^{(i-1)} \quad \text{otherwise.}
    \end{array} \right.
\end{equation*}
%
Thus $\beta^{(i)} = \sum_{l=1}^i \gamma^{(l)}$, which leads to a change in the objective function (4) of the main paper:
%
\begin{align}
& \{\hat{\gamma}^{(i)}\}_{i=1}^n = \argmin_{\gamma \in  \mathbb{R}^{p-1\times n}} \sum_{i=1}^n\log\Bigg\{\exp\left( \sum_{l=1}^i \gamma^{(l)\top }x^{(i)}_{\smallsetminus a}\right) \nonumber \\
&\quad\quad\quad\quad\quad\quad\quad\quad\quad\quad + \exp\left( -\sum_{l=1}^i \gamma^{(l)\top }x^{(i)}_{\smallsetminus a}\right)\Bigg\} \nonumber \\
& \quad\quad\quad\quad\quad - \sum_{i=1}^n x_a^{(i)}\sum_{l=1}^i \gamma^{(l)\top }x^{(i)}_{\smallsetminus a} + \lambda_1\sum_{i = 2}^{n} \norm{\gamma^{(i)}}_2  \nonumber \\ 
& \quad\quad\quad\quad\quad\quad\quad\quad\quad\quad\quad\quad + \lambda_2\sum_{i = 1}^{n} \norm{\sum_{l=1}^i \gamma^{(l)}}_1. \label{eq:optim_gamma}
\end{align}
%
This problem is convex, thus a necessary and sufficient condition for $\{\hat{\gamma}^{(i)}\}_{i=1}^n$ to be a solution is that for all $k=1,\ldots,n$, the $(p-1)$-dimensional zero-vector \textbf{0}, belongs to the subdifferential of (\ref{eq:optim_gamma}), taken with respect to $\gamma^{(k)}$:
%
\begin{align*}
    \textbf{0} \in \sum_{i=k}^n x^{(i)}_{\smallsetminus a} & \left(\tanh\left(\sum_{l=1}^i \hat{\gamma}^{(l)\top }x^{(i)}_{\smallsetminus a}\right) - x_a^{(i)}\right) \\
    & +\lambda_1\partial\norm{\hat{\gamma}^{(k)}}_2 + \lambda_2\sum_{i=k}^n \partial\norm{\sum_{l=1}^i \hat{\gamma}^{(l)}}_1.
\end{align*}
Recall that \\
$\partial \norm{x}_2= \left\{\begin{array}{l} \left\{\frac{x}{\norm{x}_2}\right\} \quad \text{if} \quad x \neq 0 \\
\mathcal{B}_2(0,1) \quad \text{otherwise;}\end{array}\right.$\\ $\partial \norm{x}_1= \left\{\begin{array}{l} \left\{\text{sign}(x)\right\} \quad \!\!\!\!\!\text{if} \quad x \neq 0\\
\mathcal{B}_1(0,1) \quad \text{otherwise}\end{array}\right.$\!\!. 

Reapplying the change of variable, we obtain:
%
\begin{equation*}
     \textbf{0} = \sum_{i=k}^n x^{(i)}_{\smallsetminus a}\left(\tanh\left( \hat{\beta}^{(i)\top }x^{(i)}_{\smallsetminus a}\right) - x_a^{(i)}\right) +\lambda_1\hat{z}^{(k)} + \lambda_2\sum_{i=k}^n \hat{y}^{(i)}.
\end{equation*}
%
Noting that $\mathbb{E}_{\Omega^{(i)}}\left[X_a | X_{\smallsetminus a} = x^{(i)}_{\smallsetminus a} \right] = \tanh\left( \omega_a^{(i)\top }x^{(i)}_{\smallsetminus a})\right)$, we obtain the final result.
\end{proof}

\begin{theorem}{(Change-point consistency)} Let $\{x_i\}_{i=1}^n$ be a sequence of observations drawn from the piece-wise constant Ising model presented in Sec.\,$2$. Suppose (A1-A3) hold, and assume that $\lambda_1 \asymp \lambda_2 = \mathcal{O}(\sqrt{\log(n)/n})$. Let $\{\delta_n\}_{n\geq 1}$ be a non-increasing sequence that converges to $0$, and such that $\forall n>0$, $\Delta_{\min}\geq n\delta_n$, with $n\delta_n \rightarrow +\infty$. Assume further that $(i)$ $\frac{\lambda_1}{n\delta_n\xi_{\min}}\rightarrow 0$, $(ii)$ $\frac{\sqrt{p-1}\lambda_2}{\xi_{\min}}\rightarrow 0$, and $(iii)$ $\frac{\sqrt{p\log(n)}}{\xi_{\min}\sqrt{n\delta_n}}\rightarrow 0$. Then, if the correct number of change-points are estimated, we have $\widehat{D}=D$ and:
%
\begin{equation}
    \mathbb{P}(\max_{j=1,\ldots,D}|\hat{T}_j-T_j|\leq n\delta_n)\underset{n\rightarrow \infty}{\longrightarrow}1 .
    \label{eq:consitency_thme}
\end{equation}
\label{thme}
\end{theorem}
%
\begin{proof}
The proof follows the steps given in \cite{harchaoui2010multiple,kolar2012estimating,gibberd2017multiple}. First of all, thanks to the union bound,
%
\begin{equation*}
    \mathbb{P}(\max_{j=1,\ldots,D }|\hat{T}_j-T_j|>n\delta_n)\leq \sum_{j=1}^{D } \mathbb{P}(|\hat{T}_j-T_j|>n\delta_n),
\end{equation*}
thus it suffices to show for each $j=1,\ldots, D $, that $\mathbb{P}(|\hat{T}_j-T_j|>n\delta_n)\rightarrow 0$. We denote by $A_{n,j}$ the event $\left\{ |\hat{T}_j-T_j|>n\delta_n \right\}$.\\

\noindent
Similarly to \cite{kolar2012estimating}, we first consider the good case where we assume that the event $C_n=\left\{|\hat{T}_j-T_j|<\frac{\Delta_{\min}}{2}\right\}$ occurs. \\

\noindent
\subsection*{Bounding the good case}

For each $j=1,\ldots, D $, we show that $\mathbb{P}(A_{n,j} \cap C_n) \longrightarrow 0$. In particular, we suppose that $\hat{T}_j\leq T_j$ as the proof for $\hat{T}_j\geq T_j$ will be the same by symmetry. \\
Applying Lemma \ref{lemma:optim_cond} with $k=\hat{T}_j$ and $k=T _j$, subtracting one with the other and applying the $\ell_2$-norm, we obtain: %
%
\begin{align*}
0 &= \Bigg\| \sum_{i=\hat{T}_j}^{T_j -1} x^{(i)}_{\smallsetminus a}\left\{\tanh\left( \widehat{\beta}^{(i)\top}x^{(i)}_{\smallsetminus a})\right) - \tanh\left( \omega_a^{(i)\top}x^{(i)}_{\smallsetminus a})\right)\right\} \nonumber \\
& \quad\quad  - \sum_{i=\hat{T}_j}^{T_j -1} x^{(i)}_{\smallsetminus a}\left\{x^{(i)}_a - \mathbb{E}_{\Omega^{(i)}}\left[X_a | X_{\smallsetminus a} = x^{(i)}_{\smallsetminus a} \right]\right\} \\
& \quad\quad\quad\quad\quad\quad\quad\quad + \lambda_1(\hat{z}^{(\widehat{T}_j)}-\hat{z}^{(T_j)}) + \lambda_2\sum_{i=\hat{T}_j}^{T_j -1}\hat{y}^{(i)}\Bigg\|_2\\
& \geq \Bigg\|\sum_{i=\hat{T}_j}^{T_j -1} x^{(i)}_{\smallsetminus a}\left\{\tanh\left( \widehat{\beta}^{(i)\top}x^{(i)}_{\smallsetminus a})\right) - \tanh\left( \omega_a^{(i)\top}x^{(i)}_{\smallsetminus a})\right)\right\} \nonumber \\
& \quad \quad - \sum_{i=\hat{T}_j}^{T_j -1} x^{(i)}_{\smallsetminus a}\left\{x^{(i)}_a - \mathbb{E}_{\Omega^{(i)}}\left[X_a | X_{\smallsetminus a} = x^{(i)}_{\smallsetminus a} \right]\right\}\Bigg\|_2 \\
& \quad\quad\quad\quad\quad - \Bigg\|\lambda_2\sum_{i=\hat{T}_j}^{T_j -1}\hat{y}^{(i)}\Bigg\|_2  - \Big\|\lambda_1(\hat{z}^{(\widehat{T}_j)}-\hat{z}^{(T_j)}) \Big\|_2.
\end{align*}
%
We have: \\$\Big\|\lambda_1(\hat{z}^{(\widehat{T}_j)}-\hat{z}^{(T_j)}) \Big\|_2\leq 2\lambda_1$ \  and\\ $\Bigg\|\lambda_2\sum_{i=\hat{T}_j}^{T_j-1}\hat{y}^{(i)}\Bigg\|_2 \leq (T_j - \hat{T}_j)\sqrt{p-1}\lambda_2$. \\Furthermore, one may notice that for all $i\in \left\{\hat{T}_j,\ldots,T_j -1\right\}$, $\widehat{\beta}^{(i)}=\hat{\theta}_a^{j+1}$ and $\omega_a^{(i)} = \theta_a^{j}$. Adding and subtracting $\tanh\left( (\theta_{a}^{j+1})^{\top} x^{(i)}_{\smallsetminus a})\right)$, then applying again the triangle inequality, leads to the following result:
%
\begin{align}
\label{eq:event_one}
\!\!\!\!\!\!\!2\lambda_1 + (T_j - \hat{T}_j)\sqrt{p-1}\lambda_2\geq \norm{R_1}_2 - \norm{R_2}_2 -\norm{R_3}_2\!\!\!
\end{align}
%
with 
%
\begin{align}
 R_1  & = \sum_{i=\widehat{T}_j}^{T_j-1}  x^{(i)}_{\smallsetminus a}\Big\{\tanh\Big( (\theta_{a}^{j})^{\top} x^{(i)}_{\smallsetminus a})\Big), \nonumber \\
  & \quad \quad \quad \quad \quad \quad \quad \quad \quad - \tanh\left( (\theta_{a}^{j+1})^{\top} x^{(i)}_{\smallsetminus a})\right)\Big\},  \\
R_2 & = \sum_{i=\widehat{T}_j}^{T_j-1}  x^{(i)}_{\smallsetminus a}\Big\{\tanh\left( (\hat{\theta}_{a}^{j+1})^{\top} x^{(i)}_{\smallsetminus a})\right), \nonumber \\ 
 &  \quad \quad \quad \quad \quad \quad \quad \quad \quad - \tanh\left( (\theta_{a}^{j+1})^{\top} x^{(i)}_{\smallsetminus a})\right)\Big\}, \\
R_3 & = \sum_{i=\widehat{T}_j}^{T_j-1}  x^{(i)}_{\smallsetminus a}\left\{x^{(i)}_a - \mathbb{E}_{\Theta^{(j)}}\left[X_a | X_{\smallsetminus a} = x^{(i)}_{\smallsetminus a} \right]\right\}.
\end{align}
%
The event (\ref{eq:event_one}) occurs with probability one and it can be showed that it is included in the event:
%
\begin{align*}
    \{2\lambda_1  &+ (T_j - \hat{T}_j)\sqrt{p-1}  \lambda_2\geq \frac{1}{3}\norm{R_1}_2\}  \\
    & \cup \{\norm{R_2}_2\geq \frac{1}{3}\norm{R_1}_2\} \cup \{\norm{R_3}_2\geq \frac{1}{3}\norm{R_1}_2\}.
\end{align*}
%
Thus, we have:
%
\begin{align*}
    & \mathbb{P}  (A_{n,j} \cap C_n)  \leq \\ 
    & \mathbb{P}(A_{n,j} \cap C_n \cap \{2\lambda_1 + (T_j - \hat{T}_j)\sqrt{p-1}\lambda_2\geq \frac{1}{3}\norm{R_1}_2\}) \\
    & \quad \quad + \mathbb{P}(A_{n,j} \cap C_n \cap \{\norm{R_2}_2\geq \frac{1}{3}\norm{R_1}_2\}) \\
    & \quad \quad + \mathbb{P}(A_{n,j} \cap C_n \cap \{\norm{R_3}_2\geq \frac{1}{3}\norm{R_1}_2\}) \\
    & \quad \quad \quad  \triangleq \mathbb{P}(A_{n,j,1}) + \mathbb{P}(A_{n,j,3}) + \mathbb{P}(A_{n,j,3}).
\end{align*}

Now, We are going to show that each one of the three events has a probability that converges to 0 as $n$ grows. Let's focus on $A_{n,j,1}$. Applying the mean-value theorem, we have for all $i=\widehat{T}_j,\ldots,T_j-1$: 
%
\begin{align}
    \label{eq:mvt}
    \tanh & \left( (\theta_{a}^{j})^{\top} x^{(i)}_{\smallsetminus a})\right) - \tanh\left( (\theta_{a}^{j+1})^{\top} x^{(i)}_{\smallsetminus a})\right) \nonumber \\
     & = (1-\tanh^2{(\Bar{\theta}^{iT}x^{(i)}_{\smallsetminus a})})x^{(i)\top}_{\smallsetminus a}(\theta_{a}^{j}-\theta_{a}^{j+1}),
\end{align}
with $\Bar{\theta}^i = \alpha^i\theta_{a}^{j}+(1-\alpha^i)\theta_{a}^{j+1}$, for a certain $\alpha^i \in [0,1]$. Combining (\ref{eq:mvt}) with the definition of $R_1$, we obtain:
%
\begin{align}
    & \norm{R_1}_2  = \Bigg\|\sum_{i=\widehat{T}_j}^{T_j-1}  x^{(i)}_{\smallsetminus a}\Big\{\tanh\left( (\theta_{a}^{j})^{\top} x^{(i)}_{\smallsetminus a})\right) \nonumber \\
    & \quad\quad \quad  \quad \quad  \quad \quad  \quad \quad  - \tanh\left( (\theta_{a}^{j+1})^{\top} x^{(i)}_{\smallsetminus a})\right)\Big\}\Bigg\|_2 \\
    & =  (T_j-\widehat{T}_j)\Bigg\|\frac{1}{T_j-\widehat{T}_j}\sum_{i=\widehat{T}_j}^{T_j-1} (1-\tanh^2{(\Bar{\theta}^{iT}x^{(i)}_{\smallsetminus a})}) \nonumber \\ 
    & \quad \quad  \quad \quad  \quad \quad  \quad \quad \times x^{(i)}_{\smallsetminus a}x^{(i)\top}_{\smallsetminus a}(\theta_{a}^{j}-\theta_{a}^{j+1})\Bigg\|_2 \\
    & \geq (T_j-\widehat{T}_j) \times \nonumber\\ 
    & \times \Lambda_{\min}\left(\frac{1}{T_j-\widehat{T}_j}\sum_{i=\widehat{T}_j}^{T_j-1} (1-\tanh^2{(\Bar{\theta}^{iT}x^{(i)}_{\smallsetminus a})}) x^{(i)}_{\smallsetminus a}x^{(i)\top}_{\smallsetminus a}\right) \nonumber \\
    & \quad \quad  \quad \quad  \times \norm{\theta_{a}^{j}-\theta_{a}^{j+1}}_2
\end{align}

Since, $\forall j$, $\norm{\theta_{a}^{j}}_2\leq M$ (A2), we have $\norm{\Bar{\theta}^i}_2 \leq M$ and $|\Bar{\theta}^{iT}x^{(i)}_{\smallsetminus a}| \leq M\cdot \sqrt{p-1}$. Thus, there exist a constant $\Tilde{M}>0$ such that $1-\tanh^2{(\Bar{\theta}^{iT}x^{(i)}_{\smallsetminus a})}\geq \Tilde{M}$. Combining this with the fact that each matrix $x^{(i)}_{\smallsetminus a}x^{(i)\top}_{\smallsetminus a}$ are positive semidefinite, we have:
%
\begin{align}
    & \norm{R_1}_2 \geq \nonumber \\
    & (T_j-\widehat{T}_j)\Tilde{M}\Lambda_{\min}\left(\frac{1}{T_j-\widehat{T}_j}\sum_{i=\widehat{T}_j}^{T_j-1} x^{(i)}_{\smallsetminus a}x^{(i)\top}_{\smallsetminus a}\right)\xi_{\min}. \label{eq:proof_R1}
\end{align}

Thus, the event $\{2\lambda_1 + (T_j - \hat{T}_j)\sqrt{p-1}\lambda_2\geq \frac{1}{3}\norm{R_1}_2\}$ is included in the event: 
%
\begin{align}
    \label{event:R1}
    2\lambda_1 & + (T_j - \hat{T}_j)\sqrt{p-1}\lambda_2 \geq \nonumber \\
    & (T_j-\widehat{T}_j)\Tilde{M}\Lambda_{\min}\left(\frac{1}{T_j-\widehat{T}_j}\sum_{i=\widehat{T}_j}^{T_j-1} x^{(i)}_{\smallsetminus a}x^{(i)\top}_{\smallsetminus a}\right)\xi_{\min}. 
\end{align}

Denoting by $\{\ref{event:R1}\}$ the event of Eq.\,\ref{event:R1}, we have:
%
\begin{align*}
    & \mathbb{P}(A_{n,j,1}) \leq \mathbb{P}(A_{n,j} \cap C_n \cap \{\ref{event:R1}\}) \\
    & \leq \mathbb{P}\Bigg(A_{n,j} \cap C_n \cap \{\ref{event:R1}\} \\
    &  \cap \{\Lambda_{\min}\left(\frac{1}{T_j-\widehat{T}_j}\sum_{i=\widehat{T}_j}^{T_j-1} x^{(i)}_{\smallsetminus a}x^{(i)\top}_{\smallsetminus a}\right) > \frac{\phi_{\min}}{2}\}\Bigg) \\
    &  + \mathbb{P}\Bigg(A_{n,j} \cap C_n \\
    & \cap\{\Lambda_{\min}\left(\frac{1}{T_j-\widehat{T}_j}\sum_{i=\widehat{T}_j}^{T_j-1} x^{(i)}_{\smallsetminus a}x^{(i)\top}_{\smallsetminus a}\right) \leq \frac{\phi_{\min}}{2}\}\Bigg).
\end{align*}

Using Lemma \ref{lemma:lambda_borne_rand} with $v_n=n\delta_n$ and $\epsilon=\frac{\phi_{\min}}{2}$, we can bound the right-hand side of the upper equation. We also re-write the first term so that we obtain:
%
\begin{align}
    & \mathbb{P}(A_{n,j,1}) \nonumber \\
    & \leq \mathbb{P}(A_{n,j} \cap C_n \cap \{\frac{2\lambda_1}{T_j-\widehat{T}_j} + \sqrt{p-1}\lambda_2 > \frac{\Tilde{M}\phi_{\min}}{2} \xi_{\min} \} ) \nonumber \\
    & \quad \quad \quad \quad + c_1\exp{\left(-\frac{\epsilon^2n\delta_n}{2} + 2\log(n)\right)} \nonumber \\
    & \leq \mathbb{P}(\xi_{\min}^{-1}\frac{2\lambda_1}{n\delta_n} + \xi_{\min}^{-1}\sqrt{p-1}\lambda_2 > \frac{\Tilde{M}\phi_{\min}}{2}   ) \nonumber \\
     & \quad \quad \quad \quad + c_1\exp{\left(-\frac{\epsilon^2n\delta_n}{2} + 2\log(n)\right)}. \nonumber
\end{align}

Thanks to $(iii)$, we have $n\delta_n$ that goes to infinity faster than $\log(n)$, thus the second term of the sum goes to $0$ as $n$ grows. Furthermore, using $(i)$ and $(ii)$ we have:
%
\begin{align*}
    \mathbb{P}(\xi_{\min}^{-1}\frac{2\lambda_1}{n\delta_n} +  & \xi_{\min}^{-1}\sqrt{p-1}\lambda_2 > \frac{\Tilde{M}\phi_{\min}}{2}) \\ 
    & \underset{n\rightarrow 0}{\longrightarrow} \mathbb{P}(0 + 0 > \frac{\Tilde{M}\phi_{\min}}{2})=0.
\end{align*}

\noindent
Which concludes that $\mathbb{P}(A_{n,j,1})\rightarrow 0$. \\

We now focus on the event $A_{n,j,2}$. Let $\Bar{T}_j \triangleq\lfloor 2^{-1}(T_{j}+T_{j+1}) \rfloor$ and remark that between $T_j$ and $\Bar{T}_j$, $\widehat{\beta}^{(i)} = \widehat{\theta}^{j+1}$. Now, using Lemma \ref{lemma:optim_cond} with $k=\Bar{T}_j$ and $k=T_j$ and similar operation used to show equation (\ref{eq:event_one}), we have:
%
\begin{align}
    & 2\lambda_1 +  (\bar{T}_j - T_j)\sqrt{p-1}\lambda_2 \nonumber \\ 
    & \geq \Bigg\|\sum_{i=T_j}^{\Bar{T}_j-1}  x^{(i)}_{\smallsetminus a}\bigg(\tanh\left((\hat{\theta}_{a}^{j+1})^{\top} x^{(i)}_{\smallsetminus a})\right) \nonumber  \\
     & \quad \quad \quad \quad \quad \quad \quad \quad  - \tanh\left( (\theta_{a}^{j+1})^{\top} x^{(i)}_{\smallsetminus a})\bigg)\right)\Bigg\|_2 \nonumber \\
    & - \norm{\sum_{i=T_j}^{\Bar{T}_j-1}  x^{(i)}_{\smallsetminus a}\underbrace{\left(x^{(i)}_a - \mathbb{E}_{\Theta^{(j+1)}}\left[X_a | X_{\smallsetminus a} = x^{(i)}_{\smallsetminus a} \right]\right)}_{\varepsilon^i_{j+1}}}_2. \nonumber
\end{align}

Now using the fact that $\norm{\hat{\theta}_{a}^{j+1}}_2$ is necessarily bounded, Lemma \ref{lemma:lambda_borne_rand} with $\epsilon = \phi_{\min}/2$ and similar arguments that we used for $A_{n,j,1}$, we can write that the first term in the right-hand side of the previous equation is lower-bounded by:
%
\begin{equation*}
    (T_j-\bar{T}_j)\Tilde{\Tilde{M}}\frac{\phi_{\min}}{2}\norm{\hat{\theta}_{a}^{j+1} - \theta_{a}^{j+1}}_2
\end{equation*}
with probability tending to one. Here, $\Tilde{\Tilde{M}}$ corresponds to a positive constant derived the same way as $\Tilde{M}$ in the previous part of the proof. In consequence, we can write 
%
\begin{align}
    & \norm{\hat{\theta}_{a}^{j+1} - \theta_{a}^{j+1}}_2 \leq \nonumber \\
    & \frac{8\lambda_1 + 4(\Bar{T}_j-T_j)\sqrt{p-1}\lambda_2 + 4\norm{\sum_{i=T_j}^{\Bar{T}_j-1}  x^{(i)}_{\smallsetminus a}\varepsilon^i_{j+1}}_2}{\Tilde{\Tilde{M}}\phi_{\min}(T_{j+1}-T_j)},
    \label{eq:Aj2}
\end{align}
which holds with probability tending to one. 

Furthermore, with probability also tending to one it can be shown using the same arguments used to prove equation (\ref{eq:proof_R1}) that $\norm{R_1}_2\geq (T_j-\widehat{T}_j)\Tilde{M}\phi_{\min}\xi_{\min}/2$ and $\norm{R_2}_2\leq \norm{\hat{\theta}_{a}^{j+1} - \theta_{a}^{j+1}}_2\phi_{\max}(T_j-\widehat{T}_j)/2$. Combining that with equation (\ref{eq:Aj2}), we can write:
%
\begin{align*}
    & \mathbb{P}(A_{n,j,2}) \\
    & \leq \mathbb{P}(A_{n,j} \cap C_n \cap \{ \frac{1}{3}\tilde{\tilde{M}}\tilde{M}\phi_{\min}^2\phi_{\max}^{-1}\xi_{\min}(T_{j+1}-T_j) \leq \\
     & \quad  8\lambda_1 + 4(\Bar{T}_j-T_j)\sqrt{p-1}\lambda_2 + 4\norm{\sum_{i=T_j}^{\Bar{T}_j-1}  x^{(i)}_{\smallsetminus a}\varepsilon^i_{j+1}}_2\}) \\
    & \quad \quad \quad \quad + c_1\exp{\left(-c_2n\delta_n + 2\log(n)\right)} \\
    & \leq \mathbb{P}(c_3\phi_{\min}^2\phi_{\max}^{-1}\xi_{\min}\Delta_{\min} \leq \lambda_1) \\ 
    & + \mathbb{P}(c_4\phi_{\min}^2\phi_{\max}^{-1}\xi_{\min} \leq \sqrt{p-1}\lambda_2) \\
    &   + \mathbb{P}\left(c_5\phi_{\min}^2\phi_{\max}^{-1}\xi_{\min} \leq (\Bar{T}_j-T_j)^{-1}\norm{\sum_{i=T_j}^{\Bar{T}_j-1}  x^{(i)}_{\smallsetminus a}\varepsilon^i_{j+1}}_2\right) \\
    &  + c_1\exp{\left(-c_2n\delta_n + 2\log(n)\right)}.
\end{align*}
With $c_1,\ldots,c_5$ positive constants.

The first two terms tends to $0$ as $n$ goes to infinity thanks to the hypothesis $(i)$ and $(ii)$ of the theorem. Indeed, since $\Delta_{\min}>n\delta_n$ and $(n\delta_n\xi_{\min})^{-1}\lambda_1\rightarrow 0$ $(i)$, the first term tends to $\mathbb{P}(c_3\phi_{\min}^2\phi_{\max}^{-1} \leq 0) = 0$ and the second term tends to $0$ since $\xi_{\min}^{-1}\sqrt{p-1}\lambda_2\rightarrow 0$ $(ii)$. The fourth term directly tends to $0$. Applying Lemma \ref{lemma:concentration_R3}, we can upper bound the third term by:
%
\begin{align*}
    \mathbb{P} & \left(c_5\phi_{\min}^2\phi_{\max}^{-1}\xi_{\min} \leq (\Bar{T}_j-T_j)^{-1/2}2\sqrt{p\log(n)}\right) \\
    & \quad + c_6\exp(-2p\log(n)) \\
     & \leq \mathbb{P} \left(c_5\phi_{\min}^2\phi_{\max}^{-1}\xi_{\min} \leq (n\delta_n)^{-1/2}2\sqrt{p\log(n)}\right)\\
     & \quad + c_6\exp(-2p\log(n))
\end{align*}
with $c_6$ an other positive constant.\\

Since $(\xi_{\min}\sqrt{n\delta_n})^{-1}\sqrt{p\log(n)}\rightarrow 0$ $(iii)$, the previous equation tends to $0$, which make $\mathbb{P}(A_{n,j,2})$ tends to $0$ as well. \\

Finally, we upper bound the probability on the event $A_{n,j,3}$. As before, we know that $\norm{R_1}_2\geq (T_j-\widehat{T}_j)\Tilde{M}\phi_{\min}\xi_{\min}/2$ with probability at least $1-c_1\exp(-c_2n\delta_n + 2\log(n))$, thus we have:
%
\begin{align*}
    \mathbb{P}(A_{n,j,3}) & \leq \mathbb{P}\left(\frac{\Tilde{M}\phi_{\min}\xi_{\min}}{6}\leq \frac{\norm{R_3}_2}{T_j-\widehat{T}_j}\right) \\ 
    & \quad + c_1\exp(-c_2n\delta_n + 2\log(n)).
\end{align*}

Using Lemma \ref{lemma:concentration_R3_random}, we can upper bound the first term by:
%
\begin{align*}
 & \mathbb{P}\left(\frac{\Tilde{M}\phi_{\min}\xi_{\min}}{6}\leq 2\sqrt{\frac{p\log(n)}{T_j-\widehat{T}_j}}\right) + c_2\exp(-c_3\log(n))  \\
 & \leq \mathbb{P}\left(\frac{\Tilde{M}\phi_{\min}\xi_{\min}}{6}\leq 2\sqrt{\frac{p\log(n)}{n\delta_n}}\right) + c_2\exp(-c_3\log(n)),\end{align*}
which tends to $0$ thanks to $(iii)$. Since the symmetric case follows exactly the same arguments, we have shown that $\mathbb{P}(A_{n,j}\cap C_n) \rightarrow 0$. We now need to prove that $\mathbb{P}(A_{n,j}\cap C_n^c) \rightarrow 0$.\\

\subsection*{Bounding the bad case} 

Let us define the following complementary events:
%
\begin{align}
    & D_n^{(l)} \triangleq \left\{\exists j \in [D ], \widehat{T}_j \leq T_{j-1} \right\} \cap C_n^c \\
    & D_n^{(m)} \triangleq \left\{\forall j \in [D ], T_{j-1} < \widehat{T}_j < T_{j+1} \right\} \cap C_n^c \\
    & D_n^{(r)} \triangleq \left\{\exists j \in [D ], \widehat{T}_j \geq T_{j+1} \right\} \cap C_n^c.
\end{align}
We can write $\mathbb{P}(A_{n,j}\cap C_n^c) = \mathbb{P}(A_{n,j}\cap D_n^{(l)}) +\mathbb{P}(A_{n,j}\cap D_n^{(m)}) +\mathbb{P}(A_{n,j}\cap D_n^{(r)})$. Again, the goal is to prove that the three terms tends to $0$. We will assume that $\widehat{T}_j \leq T_j$ as the other case can be done by symmetry. Let's first focus on the middle term, it has been shown in \cite{harchaoui2010multiple,kolar2012estimating,gibberd2017multiple} that it can be upper bounded in the following way:
%
\begin{align}
    & \mathbb{P}( A_{n,j}\cap D_n^{(m)}) \nonumber\\
    & \leq \mathbb{P} (A_{n,j}\cap \{(\widehat{T}_{j+1}-T_j)\geq \frac{\Delta_{\min}}{2}\}\cap D_n^{(m)}) \nonumber \\
    & \quad  + \mathbb{P}(\{(T_{j+1} - \widehat{T}_{j+1})\geq \frac{\Delta_{\min}}{2}\}\cap D_n^{(m)}) \nonumber \\
    & \leq \mathbb{P} (A_{n,j}\cap \{(\widehat{T}_{j+1}-T_j)\geq \frac{\Delta_{\min}}{2}\}\cap D_n^{(m)}) \nonumber \\
    & \quad  + \sum_{k=j+1}^{D }\mathbb{P}(\{(\widehat{T}_{k+1}-T_k)\geq \frac{\Delta_{\min}}{2}\} \nonumber \\
    & \quad \quad  \quad  \quad  \cap\{(T_{k}-\widehat{T}_k)\geq \frac{\Delta_{\min}}{2}\}\cap D_n^{(m)}).
    \label{eq:last_bound}
\end{align}

Let us bound the first term. Assuming the event $A_{n,j}\cap \{(\widehat{T}_{j+1}-T_j)\geq \frac{\Delta_{\min}}{2}\}\cap D_n^{(m)}$ and applying Lemma \ref{lemma:optim_cond} with $k=\widehat{T}_j$ and $k=T_j$, we can prove similarly as Eq.\,\ref{eq:Aj2} that:
%
\begin{align*}
     &\norm{\hat{\theta}_{a}^{j+1} - \theta_{a}^{j}}_2  \\ 
     & \leq \frac{4\lambda_1 + 2(T_{j}-\widehat{T}_j)\sqrt{p-1}\lambda_2 + 2\norm{\sum_{i=\widehat{T}_j}^{T_j-1}  x^{(i)}_{\smallsetminus a}\varepsilon^i_{j}}_2}{\tilde{\tilde{M}}\phi_{\min}(T_{j}-\widehat{T}_j)}\\
    & \leq c_1\phi_{\min}^{-1}(n\delta_n)^{-1}\lambda_1 + c_2\phi_{\min}^{-1}\sqrt{p-1}\lambda_2 \\ 
    &  \quad \quad  \quad  \quad + c_3 \phi_{\min}^{-1} (T_{j}-\widehat{T}_j)^{-1}\norm{\sum_{i=\widehat{T}_j}^{T_j-1}  x^{(i)}_{\smallsetminus a}\varepsilon^i_{j}}_2
\end{align*}
with probability tending to one. Using Lemma \ref{lemma:concentration_R3_random} we can bound the third term and obtain:
%
\begin{align*}
    \norm{\hat{\theta}_{a}^{j+1} - \theta_{a}^{j}}_2 & \leq c_1\phi_{\min}^{-1}(n\delta_n)^{-1}\lambda_1 + c_2\phi_{\min}^{-1}\sqrt{p-1}\lambda_2 \\ 
    &  + c_3 \phi_{\min}^{-1} (\sqrt{n\delta_n})^{-1}\sqrt{p\log(n)}
\end{align*}
with probability tending to one. Similarly, applying the same lemmas with $k=T_j$ and either $k=\widehat{T}_{j+1}$, if $\widehat{T}_{j+1}\leq T_{j+1}$ or $k=T_{j+1}$ otherwise, we have:
%
\begin{align*}
    \norm{\hat{\theta}_{a}^{j+1} - \theta_{a}^{j+1}}_2 & \leq c_4\phi_{\min}^{-1}(n\delta_n)^{-1}\lambda_1 + c_5\phi_{\min}^{-1}\sqrt{p-1}\lambda_2 \\
    & + c_6 \phi_{\min}^{-1} (\sqrt{n\delta_n})^{-1}\sqrt{p\log(n)}
\end{align*}
with probability tending to one. \\ 
Since $\xi_{\min} \leq \norm{\theta_{a}^{j} - \theta_{a}^{j+1}}_2 \leq  \norm{\hat{\theta}_{a}^{j+1} - \theta_{a}^{j}}_2 + \norm{\hat{\theta}_{a}^{j+1} - \theta_{a}^{j+1}}_2$, we finally upper bound the considered probability by:
%
\begin{align*}
    \mathbb{P} & (A_{n,j}\cap \{(\widehat{T}_{j+1}-T_j)\geq \frac{\Delta_{\min}}{2}\}\cap D_n^{(m)}) \\
     & \leq \mathbb{P}(\xi_{\min} \leq c_7\phi_{\min}^{-1}(n\delta_n)^{-1}\lambda_1 + c_8\phi_{\min}^{-1}\sqrt{p-1}\lambda_2 \\
     & + c_9 \phi_{\min}^{-1} (\sqrt{n\delta_n})^{-1}\sqrt{p\log(n)}).
\end{align*}
this tends to $0$ thanks to the hypothesis $(i)$, $(ii)$ and $(iii)$. The other probabilities in the upper bound on $\mathbb{P}( A_{n,j}\cap D_n^{(m)})$ also tends to $0$. The proof follows exactly the previous one. We proved that $\mathbb{P}( A_{n,j}\cap D_n^{(m)})\rightarrow 0$, we will now show the same for $\mathbb{P}(A_{n,j}\cap D_n^{(l)})$.

Following \cite{gibberd2017multiple}, we have:
%
\begin{align*}
    \mathbb{P}(D_n^{(l)}) & \leq \sum_{j=1}^{D }2^{j-1} \mathbb{P}(\max\{l \in [D ] : \widehat{T}_l\leq T_{l-1}\}) \\
    & \leq 2^{D -1}  \sum_{j=1}^{D }\sum_{l>j}\mathbb{P}(\{T_l - \widehat{T}_l \geq \frac{\Delta_{\min}}{2}\} \\ 
    & \quad \quad \quad \quad \cap \{\widehat{T}_{l+1}-T_l\geq \frac{\Delta_{\min}}{2}\}).
\end{align*}

Now, combining arguments of \cite{gibberd2017multiple} and those used to bound the elements of (\ref{eq:last_bound}), we have $\mathbb{P}(D_n^{(l)})\rightarrow 0$. Similarly we can show $\mathbb{P}(D_n^{(r)})\rightarrow 0$ as $n\rightarrow 0$. Finally we have $\mathbb{P}(A_{n,j}\cap C_n^c)\rightarrow 0$, which concludes the proof.
\end{proof}
\begin{proposition}
 Let $\{x_i\}_{i=1}^n$ be a sequence of observation drawn from the model presented in Sec.\, 2. Assume the condition of Theorem 1 are respected. Then, if for a fix $D_{\max}$ we have $D \leq \widehat{D}\leq D_{\max}$  then:
 \begin{equation*}
     \mathbb{P}(d(\mathcal{\widehat{D}}\| \mathcal{D})\leq n\delta_n)\underset{n\rightarrow \infty}{\longrightarrow}1.
 \end{equation*}
\end{proposition}
\begin{proof}
Let us show that:
%
\begin{align*}
     \mathbb{P} & (\{d(\mathcal{\widehat{D}}\| \mathcal{D})\geq n\delta_n\}\cap\{D \leq \widehat{D}\leq D_{\max}\}) \\ 
     & \leq \sum_{K = D}^{D_{\max}} \mathbb{P}(\{d(\mathcal{\widehat{D}}\| \mathcal{D})\geq n\delta_n\}\cap\{\widehat{D}=K\}) \underset{n\rightarrow \infty}{\longrightarrow}0.
\end{align*}
First, we note that for $K=D$, we have \\ $\mathbb{P}(\{d(\mathcal{\widehat{D}}\| \mathcal{D})\geq n\delta_n\}\cap\{\widehat{D}=K\}) \underset{n\rightarrow \infty}{\longrightarrow}0$ thanks to Theorem 1. Thus it suffices to show that:
%
\begin{align*}
     & \sum_{K = D+1}^{D_{\max}}  \mathbb{P}(\{d(\mathcal{\widehat{D}}\| \mathcal{D})\geq n\delta_n\}\cap\{\widehat{D}=K\}) \\
     & \leq \sum_{K = D+1}^{D_{\max}}\sum_{k = 1}^{D}\mathbb{P}(\forall1\leq l\leq K, |\widehat{T}_l-T_k|\geq n\delta_n) \underset{n\rightarrow \infty}{\longrightarrow}0.
\end{align*}
Like in \cite{harchaoui2010multiple}, we rewrite the event $\{\forall1\leq l\leq K, |\widehat{T}_l-T_k|\geq n\delta_n\}$ as the disjoint union of the events:
%
\begin{align*}
    & E_{n,k,1} = \{\forall1\leq l\leq K, |\widehat{T}_l-T_k|\geq n\delta_n \text{ and } \widehat{T}_l<T_k \} \\
    & E_{n,k,2} = \{\forall1\leq l\leq K, |\widehat{T}_l-T_k|\geq n\delta_n \text{ and } \widehat{T}_l>T_k \} \\
    & E_{n,k,3} = \{\exists 1\leq l\leq K-1, |\widehat{T}_l-T_k|\geq n\delta_n, \\
    & \quad \quad \quad \quad \quad   |\widehat{T}_{l+1}-T_k|\geq n\delta_n \text{ and } \widehat{T}_l<T_k<\widehat{T}_{l+1} \}
\end{align*}
and propose to show that the probability of each events tends to $0$ as $n$ grows.
Let's begin with $\mathbb{P}(E_{n,k,1})$ and note that it is equal to:
\begin{equation*}
    \mathbb{P}(E_{n,k,1}\cap \{\widehat{T}_{K}>T_{k-1}\}) + \mathbb{P}(E_{n,k,1}\cap \{\widehat{T}_{K}\leq T_{k-1}\})
\end{equation*}
First, we are going to upper bound the left-hand element of the previous equation. Applying Lemma \ref{lemma:optim_cond} with $t = \widehat{T}_K$ and $t = T_k$, we can prove similarly to the equation (\ref{eq:event_one}) in the good case scenario of the previous theorem that:
%
\begin{equation*}
2\lambda_1 + (T_k - \widehat{T}_K)\sqrt{p-1}\lambda_2\geq \norm{R'_1}_2 - \norm{R'_2}_2 -\norm{R'_3}_2
\end{equation*}
with
%
\begin{align*}
 R'_1  & = \sum_{i=\widehat{T}_K}^{T_k-1}  x^{(i)}_{\smallsetminus a}\Big\{\tanh\Big( (\theta_{a}^{k})^T x^{(i)}_{\smallsetminus a})\Big) \nonumber \\
  & \quad \quad \quad \quad \quad \quad \quad \quad \quad - \tanh\left( (\theta_{a}^{k+1})^T x^{(i)}_{\smallsetminus a})\right)\Big\}  \\
R'_2 & = \sum_{i=\widehat{T}_K}^{T_k-1}  x^{(i)}_{\smallsetminus a}\Big\{\tanh\left( (\hat{\theta}_{a}^{K+1})^T x^{(i)}_{\smallsetminus a})\right) \nonumber \\ 
 &  \quad \quad \quad \quad \quad \quad \quad \quad \quad - \tanh\left( (\theta_{a}^{k+1})^T x^{(i)}_{\smallsetminus a})\right)\Big\} \\
R'_3 & = \sum_{i=\widehat{T}_K}^{T_k-1}  x^{(i)}_{\smallsetminus a}\left\{x^{(i)}_a - \mathbb{E}_{\Theta^{(k)}}\left[X_a | X_{\smallsetminus a} = x^{(i)}_{\smallsetminus a} \right]\right\}.
\end{align*}
Like in the previous theorem, we can upperbound $\mathbb{P}(E_{n,k,1}\cap \{\widehat{T}_k>T_{k-1}\})$ by:
%
\begin{align*}
     \mathbb{P}(E^{(1)}_{n,k,1}) + \mathbb{P}(E^{(2)}_{n,k,1}) + \mathbb{P}(E^{(3)}_{n,k,1})
\end{align*}
where
\begin{align*}
    E^{(1)}_{n,k,1} &= \{2\lambda_1  + (T_k - \widehat{T}_K)\sqrt{p-1}  \lambda_2\geq \frac{1}{3}\norm{R'_1}_2\} \\
    E^{(2)}_{n,k,1} &= \{\norm{R'_2}_2\geq \frac{1}{3}\norm{R'_1}_2\} \\
    E^{(3)}_{n,k,1} &= \{\norm{R'_3}_2\geq \frac{1}{3}\norm{R'_1}_2\}.
\end{align*}
To show that $\mathbb{P}(E^{(1)}_{n,k,1})$ tends to $0$ it suffices to follow the proof used to show that $\mathbb{P}(A_{n,j,1})$ tends to $0$ in the good scenario of the previous theorem.

Similarly, to show that $\mathbb{P}(E^{(2)}_{n,k,1})$ tends to $0$ it suffices to follow the proof used for $\mathbb{P}(A_{n,j,2})$. Applying lemma \ref{lemma:optim_cond} with $t = T_k$ ans $t = T_{k+1}$ we can show that with probability tending to one:
%
\begin{align}
    & \norm{\widehat{\theta}_{a}^{K+1} - \theta_{a}^{k+1}}_2 \leq \nonumber \\
    & \frac{4\lambda_1 + 2(T_{k+1}-T_k)\sqrt{p-1}\lambda_2 + 2\norm{\sum_{i=T_k}^{T_{k+1}}  x^{(i)}_{\smallsetminus a}\varepsilon^i_{j+1}}_2}{\Tilde{\Tilde{M}}\phi_{\min}(T_{k+1}-T_k)}.
    \label{eq:Aj2}
\end{align}
The rest follows exactly the arguments used to show the limit of $\mathbb{P}(A_{n,j,2})$.

Finally, $\mathbb{P}(E^{(3)}_{n,k,1})$ tends to $0$ the same way $\mathbb{P}(A_{n,j,3})$ was tending to $0$ in the previous proof.

The proof to show that $\mathbb{P}(E_{n,k,1}\cap \{\widehat{T}_{K}\leq T_{k-1}\})$ tends to $0$ is the same. It suffices to apply lemma \ref{lemma:optim_cond} with $t = T_{k-1}$ and $t = T_{k}$ to split the event in $3$ sub-events and follow the proof. By symmetry, we also have $\mathbb{P}(E_{n,k,2})\rightarrow 0$. 

Let's now focus on $E_{n,k,3}$. Like in \cite{harchaoui2010multiple}, the event is split is four independent events:
%
\begin{equation*}
    E_{n,k,3} = E^{(1)}_{n,k,3} \cup E^{(2)}_{n,k,3} \cup E^{(3)}_{n,k,3} \cup E^{(4)}_{n,k,3} 
\end{equation*}
with
%
\begin{align*}
    & E^{(1)}_{n,k,3} = E_{n,k,3} \cap \{ T_{k-1}<\widehat{T}_l<\widehat{T}_{l+1}<T_{k+1} \} \\
    & E^{(2)}_{n,k,3} = E_{n,k,3} \cap \{ T_{k-1}<\widehat{T}_l<T_{k+1}, \widehat{T}_{l+1} > T_{k+1} \} \\
    & E^{(3)}_{n,k,3} = E_{n,k,3} \cap \{\widehat{T}_l<T_{k-1} ,T_{k-1}<\widehat{T}_{l+1}<T_{k+1} \} \\
    & E^{(4)}_{n,k,3} = E_{n,k,3} \cap \{\widehat{T}_l<T_{k-1} ,\widehat{T}_{l+1}>T_{k+1} \}.
\end{align*}
To prove that each one of the previous events have a probability that tends to $0$ as $n$ grows, we invite the reader to read the proof of \cite{harchaoui2010multiple}. It consist in multiple applications of the different Lemmas, the same way we used them in the previous part. Only the time at which lemma \ref{lemma:optim_cond} is used changes and are given by \cite{harchaoui2010multiple}. This concludes the proof.
\end{proof}

\subsection*{Supplementary Lemmas}
Below, the different lemmas necessary to prove the main results are given.
\begin{lemma}
\label{lemma:lambda_borne}
Let $\{x^{(i)}\}_{i=1}^n$ be a set of i.i.d observation sampled from an Ising model with parameter $\Theta \in \mathbb{R}^{p\times p}$ and assume that assumption (A1) is satisfied. Then, $\forall r,l \in [n]$ such that $l<r$ and $r-l>v_n$ with $v_n$ a positive serie, we have $\forall \epsilon>0$:
%
\begin{align}
    \mathbb{P}\left(\Lambda_{\min}\left(\frac{1}{r-l+1}\sum_{i=l}^rx_{\smallsetminus a}^{(i)}x_{\smallsetminus a}^{(i)\top}\right)\leq \phi_{\min}-\epsilon\right) \nonumber \\ 
    \leq 2(p-1)^2\exp{\left(-\frac{\epsilon^2v_n}{2}\right)}
    \label{eq:phi_min_det}
\end{align}
%
and
%
\begin{align}
    \label{eq:phi_max_det}
    \mathbb{P}\left(\Lambda_{\max}\left(\frac{1}{r-l+1}\sum_{i=l}^rx_{\smallsetminus a}^{(i)}x_{\smallsetminus a}^{(i)\top}\right) \geq \phi_{\max}+\epsilon\right) \nonumber \\
    \leq 2(p-1)^2\exp{\left(-\frac{\epsilon^2v_n}{2}\right)}.
\end{align}
\end{lemma}
%
\begin{proof}
Let $\widehat{\Sigma}=\frac{1}{r-l+1}\sum_{i=l}^r x_{\smallsetminus a}^{(i)}x_{\smallsetminus a}^{(i)\top}$ and $\Sigma=\mathbb{E}\left[X_{\smallsetminus a}X_{\smallsetminus a}^{\top}\right]$. \\ 

We first proove the inequality (\ref{eq:phi_min_det}). Recall that for a symmetric matrix $M$, we have $\Lambda_{\max}(M)\leq \norm{M}_F$, the Frobenius norm of $M$. We have
%
\begin{align}
    \Lambda_{\min}(\widehat{\Sigma}) & = \min_{\norm{v}_2=1}v^{\top} \widehat{\Sigma} v \\
    & \geq \min_{\norm{v}_2=1}v^{\top} \Sigma v - \max_{\norm{v}_2=1}v^{\top} (\widehat{\Sigma} - \Sigma) v \\
    & \geq \Lambda_{\min}(\Sigma) - \Lambda_{\max}(\widehat{\Sigma} - \Sigma) \\
    & \geq \phi_{\min} - \norm{\widehat{\Sigma} - \Sigma}_F.
\end{align}

Let $s_{mq}^{(i)}$ be the $(m,q)$-th coordinate of $x_{\smallsetminus a}^{(i)}x_{\smallsetminus a}^{(i)\top} - \Sigma$ and $\frac{1}{r-l+1}\sum_{i=l}^r s_{mq}^{(i)}$ the one of $\widehat{\Sigma} - \Sigma$. Note that $\mathbb{E}\left[s_{mq}^{(i)}\right] = 0$ and $|s_{mq}^{(i)}|\leq 2$. Let us analyze the quantity $\mathbb{P}\left(\norm{\widehat{\Sigma} - \Sigma}_F>\epsilon\right)$ with $\epsilon>0$:

\begin{align}
    \mathbb{P}\left(\norm{\widehat{\Sigma} - \Sigma}_F>\epsilon\right) &= \mathbb{P}\left((\sum_{m,q}s_{mq}^2)^{1/2}>\epsilon\right) \\
    & = \mathbb{P}\left(\sum_{m,q}s_{mq}^2>\epsilon^2\right) \\
    & \leq \sum_{m,q}\mathbb{P}\left(s_{mq}^2>\epsilon^2\right) \\
    & \leq \sum_{m,q}\mathbb{P}\left(|s_{mq}|>\epsilon\right). \label{eq:union_bound}
\end{align}

Thanks to Hoeffding's inequality, we have $\mathbb{P}\left(|s_{mq}|>\epsilon\right) \leq 2\exp{\left(-\frac{\epsilon^2(r-l+1)}{2}\right)}$. Since $r-l>v_n$, we also have $\mathbb{P}\left(|s_{mq}|>\epsilon\right) \leq 2\exp{\left(-\frac{\epsilon^2v_n}{2}\right)}$. It follows from (\ref{eq:union_bound}) that $\mathbb{P}\left(\norm{\widehat{\Sigma} - \Sigma}_F>\epsilon\right) \leq 2(p-1)^2\exp{\left(-\frac{\epsilon^2v_n}{2}\right)}$. We deduce that:
%
\begin{equation}
     \mathbb{P}\left(\!\Lambda_{\min}(\widehat{\Sigma}) \geq  \phi_{\min}-\epsilon\right) \geq 1- 2(p-1)^2\exp{\left(\!-\frac{\epsilon^2v_n}{2}\right)},
\end{equation}
which concludes the proof for (\ref{eq:phi_min_det}). \\

To prove (\ref{eq:phi_max_det}) it suffices to note that $\Lambda_{\max}(\widehat{\Sigma}) \leq \phi_{\max}+ \norm{\widehat{\Sigma} - \Sigma}_F$ and use the same arguments.
\end{proof}

\begin{lemma}
\label{lemma:lambda_borne_rand}
Let $\{x^{(i)}\}_{i=1}^n$ be a set of i.i.d observation sampled from an Ising model with parameter $\Theta \in \mathbb{R}^{p\times p}$ and assume that assumption (A1) is satisfied. \\ 
\noindent
Let $R$ and $L$ be two random variable such that $ R,L\in [n]$,  $L<R$ and $R-L>v_n$ almost surely, with $v_n$ a positive serie. For a fixed node $a$ and any $\epsilon>0$, there exist a constant $c_1>0$ such that:
%
\begin{align}
     \!\!\!\mathbb{P}\left(\Lambda_{\min}\left(\frac{1}{R-L+1}\sum_{i=L}^Rx_{\smallsetminus a}^{(i)}x_{\smallsetminus a}^{(i)\top}\right) \leq \phi_{\min}-\epsilon\right) \nonumber
    \\
     \!\!\!\leq c_1\exp{\left(-\frac{\epsilon^2v_n}{2} + 2\log(n)\right)} \label{eq:phi_min_rand}
\end{align}
%
and
%
\begin{align}
    \mathbb{P}\left(\Lambda_{\max}\left(\frac{1}{R-L+1}\sum_{i=L}^Rx_{\smallsetminus a}^{(i)}x_{\smallsetminus a}^{(i)\top}\right) \geq \phi_{\max}+\epsilon\right) \nonumber \\
    \leq c_1\exp{\left(-\frac{\epsilon^2v_n}{2} + 2\log(n)\right)}.   \label{eq:phi_max_rand}
\end{align}
\end{lemma}
%
\begin{proof}
We note $\widehat{\Sigma}(L,R)=\frac{1}{R-L+1}\sum_{i=L}^R x_{\smallsetminus a}^{(i)}x_{\smallsetminus a}^{(i)\top}$ and $\mathcal{I}\triangleq\left\{(l,r)\in[n]^2 : r-l>v_n \right\}$. \\
%
We first prove the inequality (\ref{eq:phi_min_rand}):
%
\begin{align}
    & \mathbb{P}\left(\Lambda_{\max}\left(\widehat{\Sigma}(L,R)\right) \geq \phi_{\max}+\epsilon\right) \\
    & = \sum_{(l,r)\in \mathcal{I}}\mathbb{P}\left(\Lambda_{\max}\left(\widehat{\Sigma}(L,R)\right), L=l,R=r\right) \\
    & \leq \sum_{(l,r)\in \mathcal{I}}\mathbb{P}\left(\Lambda_{\max}\left(\widehat{\Sigma}(L,R)\right) \middle| L=l,R=r\right). \label{cond_1}
\end{align}
Using Lemma\,\ref{lemma:lambda_borne} we can bound (\ref{cond_1}):
%
\begin{align}
    (\ref{cond_1}) & \leq \sum_{(l,r)\in \mathcal{I}}   2(p-1)^2\exp{\left(-\frac{\epsilon^2v_n}{2}\right)}\\
    & \leq |\mathcal{I}| c_1\exp{\left(-\frac{\epsilon^2v_n}{2}\right)}\\
    & \leq n^2 c_1\exp{\left(-\frac{\epsilon^2v_n}{2}\right)} \\
    & \leq c_1\exp{\left(-\frac{\epsilon^2v_n}{2} + 2\log(n)\right)} 
\end{align}
with $c_1=2(p-1)$. This concludes the proof for (\ref{eq:phi_min_rand}). Same arguments are used to prove $(\ref{eq:phi_max_rand})$.
\end{proof}

\begin{lemma}
\label{lemma:concentration_R3}
Let $\{x^{(i)}\}_{i=1}^n$ be a set of independent observation sampled from the time-varying Ising model (Section 2). Then, $\forall j \in [D]$ and $\forall r,l \in \{T_j, \ldots, T_{j+1}-1\}$ such that $l<r$, we have:
%
\begin{align}
\label{eq:concentration_R3}
    \mathbb{P}\left(\frac{1}{r-l+1}\norm{R_3(l,r)}_2 \leq  2\sqrt{\frac{p\log(n)}{r-l+1}} \right) \\
    \geq 1 - 2(p-1)\exp{\left(-2p\log(n)\right)}
\end{align}
with $R_3(l,r) = \sum_{i=l}^{r}  x^{(i)}_{\smallsetminus a}\left\{x^{(i)}_a - \mathbb{E}_{\Theta^j}\left[X_a | X_{\smallsetminus a} = x^{(i)}_{\smallsetminus a} \right]\right\}.$
\end{lemma}
%
\begin{proof}
Let $Z_{ij}$ be the the $j$-th element of the vector \\
$\frac{1}{r-l+1} x^{(i)}_{\smallsetminus a}\left\{x^{(i)}_a - \mathbb{E}_{\Theta}\left[X_a | X_{\smallsetminus a} = x^{(i)}_{\smallsetminus a} \right]\right\}$. Note that $|Z_{ij}|\leq \frac{2}{r-l+1}$ and $\mathbb{E}\left[Z_{ij}\right] = 0$. Let $\epsilon>0$, we have:
%
\begin{align*}
    \mathbb{P} & \left(\frac{1}{r-l+1} \norm{R_3(l,r)}_2 \geq  \epsilon\right) \\
    &= \mathbb{P}\left(\sqrt{\sum_{j\neq a}(\sum_{i=l}^r Z_{ij})^2} \geq  \epsilon\right) \\
    & = \mathbb{P}\left(\sum_{j\neq a}(\sum_{i=l}^r Z_{ij})^2 \geq  \epsilon^2\right) \\ 
    & \leq \sum_{j\neq a}  \mathbb{P}\left(|\sum_{i=l}^r Z_{ij}| \geq  \epsilon\right) \\
    & \leq 2(p-1)\exp{\left(-\frac{\epsilon^2(r-l+1)}{2}\right)}.
\end{align*}
%
Now, if we fix $\epsilon=2\sqrt{\frac{p\log(n)}{r-l+1}}$, we obtain:
%
\begin{align*}
    \mathbb{P}  \left(\frac{1}{r-l+1}\norm{R_3(l,r)}_2 \leq  2\sqrt{\frac{p\log(n)}{r-l+1}} \right) \\
     \geq 1 - 2(p-1)\exp{\left(-2p\log(n)\right)}.
\end{align*}
\end{proof}

\begin{lemma}
\label{lemma:concentration_R3_random}
Let $\{x^{(i)}\}_{i=1}^n$ be a set of independent observation sampled from the time-varying Ising model (Section 2). We have:
%
\begin{align}
    \mathbb{P}\left(\underset{j \in [D]}{\bigcap} \underset{l,r \in \mathcal{I}_j}{\bigcap}\left\{\frac{1}{r-l+1}\norm{R_3^j(l,r)}_2 \leq  2\sqrt{\frac{p\log(n)}{r-l+1}}\right\} \right) \nonumber \\ 
    \geq 1 - c_2\exp{\left(-c_3\log(n)\right)} 
    \label{eq:concentration_R3_tot}
\end{align}
with $R_3^j(l,r) = \sum_{i=l}^{r}  x^{(i)}_{\smallsetminus a}\left\{x^{(i)}_a - \mathbb{E}_{\Theta^j}\left[X_a | X_{\smallsetminus a} = x^{(i)}_{\smallsetminus a} \right]\right\} $, $c_2,c_3$ some positive constants and $\mathcal{I}_j\triangleq\left\{(l,r)\in \{T_j,\ldots, T_{j+1}-1\}^2 : r>l \right\}$.
\end{lemma}
%
\begin{proof}
The proof is a simple application of Lemma \ref{lemma:concentration_R3}:
%
\begin{align*}
    &\mathbb{P}\left(\underset{j \in [D]}{\bigcup} \underset{l,r \in \mathcal{I}_j}{\bigcup}  \left\{\frac{1}{r-l+1}\norm{R_3^j(l,r)}_2 \geq  2\sqrt{\frac{p\log(n)}{r-l+1}}\right\}\right) \\
    &  \leq \sum_{j \in [D]} \sum_{l,r \in \mathcal{I}_j} \mathbb{P}\left(\frac{1}{r-l+1}\norm{R_3^j(l,r)}_2 \geq  2\sqrt{\frac{p\log(n)}{r-l+1}}\right) \\
    & \leq 2Dn^2(p-1)\exp{\left(-2p\log(n)\right)} \\
    & \leq c_2\exp{\left(-2p\log(n)+2\log(n)\right)}\\
    & \leq c_2\exp{\left(-c_3\log(n)\right)}
\end{align*}
since $p>1$. This concludes the proof.
\end{proof}

\bibliographystyle{mystyle}
{\small
\bibliography{biblio}
}


\twocolumn[

\icmltitle{Appendix}

\vskip 0.3in
]

\section*{Additional figures and results}

\subsection*{Comparison between $\boldsymbol{\ell_2}$- and $\boldsymbol{\ell_1}$-norms}

In Fig.\,\ref{fig:l1vsl2}, we illustrate the main difference in using an $\ell_2$- or alternatively a $\ell_1$-norm in the fused penalty of our objective function. The figure illustrates well the problem of $\ell_1$-norm: by penalizing each dimension independently, this norm easily leads to parameter vectors that have some non-zero dimensions, making the piece-wise constant assumption more difficult to recover. On the contrary, the $\ell_2$-norm avoids this problem and hence enforces the whole consecutive parameter vectors to be equal. 

\subsection*{Another real-world experiment}

In this section, we evaluate the goodness of graph learning with TVI-FL on the Sigfox IoT dataset \citep{le2019probabilistic} (available at: \texttt{http:/\!/kalogeratos.com/the-sigfox-iot-dataset}). The dataset contains activity recorded on a telecommunication network, where each observation corresponds to a message that was locally broadcasted by one device and has been received by a subset of the $34$ monitored antennas. Each data vector is binary and indicates which antennas has received the message or not ($\text{received}=1$, not $\text{received}=0$). The dataset contains all the messages received by the antennas, on a daily basis over a period of five months, resulting in $n=120$ timestamps. According to the authors, one antenna is working poorly after the $30$-th timestamp. In the following experiment, we select this antenna along with the $19$ geographically closest others, and we select randomly $n_i = 200$ messages at each timestamp. %
The learned graphs with TVI-FL at timestamps $i=0$ (before the antenna's malfunction) and $i = 60$ (after the antenna's malfunction) are displayed in Fig.~\ref{fig:sigfox}, where only the positive edges are drawn.

\begin{figure}[t]
  \centering
  \includegraphics[width=0.9\linewidth]{img/l2_norm_vs_l1_norm.pdf}
  \caption{Comparison of the learned parameter vectors when using either $\ell_2$- or $\ell_1$-norm in the fused penalty. White squares indicates dimensions at which the two consecutive parameter vectors are different. Black squares where they are equal. The presence of at least one white square indicates a change-point.}
\label{fig:l1vsl2}
 \end{figure}
 \begin{figure}[t]
  \centering
  \includegraphics[width=1.1\linewidth]{img/Sigfox_all}
  \caption{Learned graphs for Sigfox dataset before and after the anomaly  recorded at the red antenna.}
\label{fig:sigfox}
 \end{figure}

The goodness-of-fit of our method can be corroborated by the observations: 1) The learned graphs are 
in agreement with the spatial distribution of the antennas: nearby antennas are more likely to be connected as they have high chance to receive the same messages; 2) The problematic antenna lost edges after its malfunction. Again, this is as expected since a poorly working antenna would receive less messages, implying a decreased correlation with its neighbors.

\subsection*{A more complete table of results}

For completeness, in Tab.\,\ref{table:res_table} we complete the table of the main text of the paper with an additional comparative method, namely the one that estimates a graph at each timestamp ($\lambda_1 = 0$).

\begin{table*}[t]\scriptsize%
    \centering
	\ra{1.0}
	\begin{tabular}{@{}lll|llr|lll@{}}
		\toprule
         \textbf{} & \textbf{Observations} & & \multicolumn{3}{c}{\textbf{AIC}} & \multicolumn{3}{c}{\textbf{AUC}} \\
		\textbf{Degree} & \textbf{per timestamp} & \textbf{Method}  & \textbf{$h$-score} $\downarrow$ & \textbf{$F_1$-score} $\uparrow$ & $\boldMath{\widehat{D}\hfill}$ & \textbf{$h$-score} $\downarrow$ & \textbf{$F_1$-score} $\uparrow$ & $\boldMath{\widehat{D}\hfill}$\\ \midrule
		%
		\multirow{2}{*}{$d=2$} & \multirow{2}{*}{$n^{(i)}=4$} & TVI-FL   & $\boldMath{0.046 \pm (0.024)}$ & $\boldMath{0.694 \pm (0.103)}$ & $7.400 \pm (3.137)$ & $0.221 \pm (0.035)$ & $\boldMath{0.876 \pm (0.030)}$ & $26.100 \pm (7.739)$ \\
		%
		&& Tesla & $0.106 \pm (0.087)$ & $0.649 \pm (0.190)$ & $12.700 \pm (7.682)$ & $\boldMath{0.184 \pm (0.051)}$ & $0.841 \pm (0.041)$ & $25.100 \pm (4.784)$\\
		%
		%
		&& $\lambda_1 = 0$ & $0.290 \pm (0.000)$ & $0.342 \pm (0.007)$ & $99.000 \pm (0.000)$ & $0.290 \pm (0.000)$ & $0.342 \pm (0.000)$ & $99.000 \pm (0.000)$\\
		\cmidrule{2-9}
		%
		%
		\multirow{2}{*}{} & \multirow{2}{*}{$n^{(i)}=6$} & TVI-FL   &  $\boldMath{0.129 \pm (0.058)}$ & $\boldMath{0.816 \pm (0.073)}$ & $9.700 \pm (2.759)$ & $\boldMath{0.147 \pm (0.071)}$ & $\boldMath{0.875 \pm (0.027)}$ & $15.300 \pm (3.378)$ \\
		%
		&& Tesla &  $0.178 \pm (0.130)$ & $0.748 \pm (0.167)$ & $12.900 \pm (5.540)$ & $0.164 \pm (0.062)$ & $0.841 \pm (0.048)$ & $19.000 \pm (2.530)$ \\
		%
		%
		&& $\lambda_1 = 0$ & $0.290 \pm (0.000)$ & $0.407 \pm (0.010)$ & $99.000 \pm (0.000)$ & $0.290 \pm (0.000)$ & $0.407 \pm (0.000)$ & $99.000 \pm (0.000)$\\
		\cmidrule{2-9}
		%
        %
		\multirow{2}{*}{} & \multirow{2}{*}{$n^{(i)}=8$} & TVI-FL   &  $\boldMath{0.082 \pm (0.081)}$ & $0.833 \pm (0.095)$ & $7.400 \pm (3.040)$ & $\boldMath{0.099 \pm (0.073)}$ & $\boldMath{0.891 \pm (0.024)}$ & $11.000 \pm (3.873)$ \\
		%
		&& Tesla &  $0.124 \pm (0.071)$ & $\boldMath{0.846 \pm (0.047)}$ & $13.600 \pm (2.010)$ & $0.178 \pm (0.066)$ & $0.853 \pm (0.039)$ & $14.700 \pm (3.348)$ \\
		%
		%
		&& $\lambda_1 = 0$ & $0.290 \pm (0.000)$ & $0.449 \pm (0.009)$ & $99.000 \pm (0.000)$ & $0.290 \pm (0.000)$ & $0.449 \pm (0.000)$ & $99.000 \pm (0.000)$\\
		\midrule
		%
		\multirow{2}{*}{$d=3$} & \multirow{2}{*}{$n^{(i)}=4$} & TVI-FL   & $\boldMath{0.080 \pm (0.069)}$ & $\boldMath{0.563 \pm (0.089)}$ & $7.000 \pm (2.683)$ & $\boldMath{0.204 \pm (0.035)}$ & $\boldMath{0.734 \pm (0.024)}$ & $23.100 \pm (6.715)$ \\
		%
		&& Tesla & $0.278 \pm (0.319)$ & $0.353 \pm (0.072)$ & $3.200 \pm (2.891)$ & $0.208 \pm (0.029)$ & $0.611 \pm (0.041)$ & $29.200 \pm (3.187)$ \\
		%
		%
		&& $\lambda_1 = 0$ & $0.290 \pm (0.000)$ & $0.366 \pm (0.010)$ & $99.000 \pm (0.000)$ & $0.290 \pm (0.000)$ & $0.366 \pm (0.000)$ & $99.000 \pm (0.000)$\\
		\cmidrule{2-9}
		%
		%
		\multirow{2}{*}{} & \multirow{2}{*}{$n^{(i)}=6$} & TVI-FL   & $\boldMath{0.055 \pm (0.064)}$ & $\boldMath{0.617 \pm (0.161)}$ & $6.300 \pm (3.494)$ & $\boldMath{0.130 \pm (0.051)}$ & $\boldMath{0.743 \pm (0.034)}$ & $12.800 \pm (2.821)$ \\
		%
		&& Tesla & $0.302 \pm (0.241)$ & $0.346 \pm (0.060)$ & $2.000 \pm (1.183)$ & $0.173 \pm (0.044)$ & $0.616 \pm (0.041)$ & $22.600 \pm (2.245)$ \\
		%
		%
		&& $\lambda_1 = 0$ & $0.290 \pm (0.000)$ & $0.391 \pm (0.014)$ & $99.000 \pm (0.000)$ & $0.290 \pm (0.000)$ & $0.391 \pm (0.000)$ & $99.000 \pm (0.000)$\\
		\cmidrule{2-9}
		%
        %
		\multirow{2}{*}{} & \multirow{2}{*}{$n^{(i)}=8$} & TVI-FL   & $\boldMath{0.091 \pm (0.073)}$ & $\boldMath{0.714 \pm (0.130)}$ & $8.000 \pm (2.530)$ & $\boldMath{0.127 \pm (0.073)}$ & $\boldMath{0.764 \pm (0.032)}$ & $10.400 \pm (2.154)$ \\
		%
		&& Tesla & $0.311 \pm (0.231)$ & $0.361 \pm (0.098)$ & $2.600 \pm (2.615)$ & $0.162 \pm (0.052)$ & $0.633 \pm (0.045)$ & $18.700 \pm (3.716)$ \\
		%
		%
		&& $\lambda_1 = 0$ & $0.290 \pm (0.000)$ & $0.410 \pm (0.015)$ & $99.000 \pm (0.000)$ & $0.290 \pm (0.000)$ & $0.410 \pm (0.000)$ & $99.000 \pm (0.000)$\\
		\midrule
		%
		\multirow{2}{*}{$d=4$} & \multirow{2}{*}{$n^{(i)}=4$} & TVI-FL   & $\boldMath{0.101 \pm (0.082)}$ & $\boldMath{0.453 \pm (0.111)}$ & $6.500 \pm (3.324)$ & $\boldMath{0.232 \pm (0.026)}$ & $\boldMath{0.644 \pm (0.041)}$ & $29.400 \pm (4.317)$\\
		%
		&& Tesla & $0.444 \pm (0.273)$ & $0.347 \pm (0.044)$ & $2.875 \pm (1.900)$ & $0.234 \pm (0.017)$ & $0.518 \pm (0.046)$ & $34.625 \pm (1.654)$\\
		%
		%
		&& $\lambda_1 = 0$ & $0.290 \pm (0.000)$ & $0.388 \pm (0.005)$ & $99.000 \pm (0.000)$ & $0.290 \pm (0.000)$ & $0.388 \pm (0.000)$ & $99.000 \pm (0.000)$\\
		\cmidrule{2-9}
		%
		%
		\multirow{2}{*}{} & \multirow{2}{*}{$n^{(i)}=6$} & TVI-FL  & $\boldMath{0.099 \pm (0.064)}$ & $\boldMath{0.501 \pm (0.130)}$ & $5.667 \pm (2.309)$ & $\boldMath{0.183 \pm (0.044)}$ & $\boldMath{0.664 \pm (0.041)}$ & $16.778 \pm (3.258)$ \\
		%
		&& Tesla & $0.258 \pm (0.236)$ & $0.355 \pm (0.035)$ & $2.500 \pm (1.118)$ & $0.215 \pm (0.032)$ & $0.503 \pm (0.040)$ & $26.000 \pm (4.472)$ \\
		%
		%
		&& Static & $0.290 \pm (0.000)$ & $0.390 \pm (0.007)$ & $99.000 \pm (0.000)$ & $0.290 \pm (0.000)$ & $0.390 \pm (0.000)$ & $99.000 \pm (0.000)$\\
		\cmidrule{2-9}
		%
        %
		\multirow{2}{*}{} & \multirow{2}{*}{$n^{(i)}=8$} & TVI-FL  & $\boldMath{0.077 \pm (0.076)}$ & $\boldMath{0.528 \pm (0.158)}$ & $5.556 \pm (3.624)$ & $\boldMath{0.169 \pm (0.064)}$ & $\boldMath{0.678 \pm (0.049)}$ & $12.444 \pm (4.524)$ \\
		%
		&& Tesla & $0.251 \pm (0.230)$ & $0.357 \pm (0.044)$ & $2.625 \pm (0.696)$ & $0.219 \pm (0.027)$ & $0.518 \pm (0.054)$ & $24.000 \pm (2.398)$ \\
		%
		%
		&& $\lambda_1 = 0$ & $0.290 \pm (0.000)$ & $0.385 \pm (0.007)$ & $99.000 \pm (0.000)$ & $0.290 \pm (0.000)$ & $0.385 \pm (0.000)$ & $99.000 \pm (0.000)$ \\
		\bottomrule
		%
        %
	\end{tabular}
	\caption{Results for the model with the lowest AIC, and that with the highest AUC. The average $\pm$\,(std) of the metrics is reported. Compared to the table provided in the main text, here an additional comparative method is mentioned, namely the one that estimates a graph at each timestamp ($\lambda_1 = 0$).}
    \label{table:res_table}
\end{table*}

\section*{Technical proofs}
\allowdisplaybreaks

\subsection*{Main results}
In the following, we recall and prove the main results given in the paper. The proofs uses in many situations the different lemmas given next.

\begin{lemma}{(Optimality Conditions)}
\label{lemma:optim_cond}
A matrix $\hat{\beta}$ is optimal for our problem iff there exists a collection of subgradient vectors $\{\hat{z}^{(i)}\}_{i=2}^n$ and $\{\hat{y}^{(i)}\}_{i=1}^n$, with $\hat{z}^{(i)} \in \partial \norm{\widehat{\beta}^{(i)}-\widehat{\beta}^{(i-1)}}_2$ and $\hat{y}^{(i)} \in \partial \norm{\widehat{\beta}^{(i)}}_1$, such that $\forall k=1,\ldots,n$ we have: 
%
\begin{align}
\sum_{i=k}^n & x^{(i)}_{\smallsetminus a}\left\{\tanh\!\left( \widehat{\beta}^{(i)\top}x^{(i)}_{\smallsetminus a})\right) - \tanh\!\left( \omega_a^{(i)\top}x^{(i)}_{\smallsetminus a})\right)\right\} \nonumber \\
& - \sum_{i=k}^n x^{(i)}_{\smallsetminus a}\left\{x^{(i)}_a - \mathbb{E}_{\Omega^{(i)}}\left[X_a | X_{\smallsetminus a} = x^{(i)}_{\smallsetminus a} \right]\right\} \nonumber \\ 
& + \lambda_1\hat{z}^{(k)} + \lambda_2\sum_{i=k}^n\hat{y}^{(i)} = \bold{0}_{p-1},
\end{align}
\text{where} $\tanh$ is the hyperbolic tangent function, $\bold{0}_{p-1}$ is the %
zero vector of size $p-1$, $\hat{z}^{(1)}=\bold{0}_{p-1}$, and \\
%
$\hat{z}^{(i)} = \left\{\begin{array}{l}
\frac{\widehat{\beta}^{(i)}-\widehat{\beta}^{(i-1)}}{\norm{\widehat{\beta}^{(i)}-\widehat{\beta}^{(i-1)}}_2}\quad \text{ if \ } \widehat{\beta}^{(i)}-\widehat{\beta}^{(i-1)} \neq 0, \\ 
\in \mathcal{B}_2(0,1) \quad \quad\ \ \,\text{otherwise};
\end{array} \right.$ \\ $\hat{y}^{(i)} = \left\{\begin{array}{l} \text{\emph{sign}}(\widehat{\beta}^{(i)}) \quad\quad\quad \text{if \ } x \neq 0, \\
\in \mathcal{B}_1(0,1) \quad\quad\ \ \  \text{otherwise}.\end{array}\right.$
\end{lemma}
%
\begin{proof}
Let us first introduce the following change of variables:
%
\begin{equation*}
    \gamma^{(i)} = \left\{\begin{array}{l}\beta^{(i)} \quad \quad \quad \quad \quad \text{if} \quad i=1, \\ 
    \beta^{(i)} - \beta^{(i-1)} \quad \text{otherwise.}
    \end{array} \right.
\end{equation*}
%
Thus $\beta^{(i)} = \sum_{l=1}^i \gamma^{(l)}$, which leads to a change in the objective function (4) of the main paper:
%
\begin{align}
& \{\hat{\gamma}^{(i)}\}_{i=1}^n = \argmin_{\gamma \in  \mathbb{R}^{p-1\times n}} \sum_{i=1}^n\log\Bigg\{\exp\left( \sum_{l=1}^i \gamma^{(l)\top }x^{(i)}_{\smallsetminus a}\right) \nonumber \\
&\quad\quad\quad\quad\quad\quad\quad\quad\quad\quad + \exp\left( -\sum_{l=1}^i \gamma^{(l)\top }x^{(i)}_{\smallsetminus a}\right)\Bigg\} \nonumber \\
& \quad\quad\quad\quad\quad - \sum_{i=1}^n x_a^{(i)}\sum_{l=1}^i \gamma^{(l)\top }x^{(i)}_{\smallsetminus a} + \lambda_1\sum_{i = 2}^{n} \norm{\gamma^{(i)}}_2  \nonumber \\ 
& \quad\quad\quad\quad\quad\quad\quad\quad\quad\quad\quad\quad + \lambda_2\sum_{i = 1}^{n} \norm{\sum_{l=1}^i \gamma^{(l)}}_1. \label{eq:optim_gamma}
\end{align}
%
This problem is convex, thus a necessary and sufficient condition for $\{\hat{\gamma}^{(i)}\}_{i=1}^n$ to be a solution is that for all $k=1,\ldots,n$, the $(p-1)$-dimensional zero-vector \textbf{0}, belongs to the subdifferential of (\ref{eq:optim_gamma}), taken with respect to $\gamma^{(k)}$:
%
\begin{align*}
    \textbf{0} \in \sum_{i=k}^n x^{(i)}_{\smallsetminus a} & \left(\tanh\left(\sum_{l=1}^i \hat{\gamma}^{(l)\top }x^{(i)}_{\smallsetminus a}\right) - x_a^{(i)}\right) \\
    & +\lambda_1\partial\norm{\hat{\gamma}^{(k)}}_2 + \lambda_2\sum_{i=k}^n \partial\norm{\sum_{l=1}^i \hat{\gamma}^{(l)}}_1.
\end{align*}
Recall that \\
$\partial \norm{x}_2= \left\{\begin{array}{l} \left\{\frac{x}{\norm{x}_2}\right\} \quad \text{if} \quad x \neq 0 \\
\mathcal{B}_2(0,1) \quad \text{otherwise;}\end{array}\right.$\\ $\partial \norm{x}_1= \left\{\begin{array}{l} \left\{\text{sign}(x)\right\} \quad \!\!\!\!\!\text{if} \quad x \neq 0\\
\mathcal{B}_1(0,1) \quad \text{otherwise}\end{array}\right.$\!\!. 

Reapplying the change of variable, we obtain:
%
\begin{equation*}
     \textbf{0} = \sum_{i=k}^n x^{(i)}_{\smallsetminus a}\left(\tanh\left( \hat{\beta}^{(i)\top }x^{(i)}_{\smallsetminus a}\right) - x_a^{(i)}\right) +\lambda_1\hat{z}^{(k)} + \lambda_2\sum_{i=k}^n \hat{y}^{(i)}.
\end{equation*}
%
Noting that $\mathbb{E}_{\Omega^{(i)}}\left[X_a | X_{\smallsetminus a} = x^{(i)}_{\smallsetminus a} \right] = \tanh\left( \omega_a^{(i)\top }x^{(i)}_{\smallsetminus a})\right)$, we obtain the final result.
\end{proof}

\begin{theorem}{(Change-point consistency)} Let $\{x_i\}_{i=1}^n$ be a sequence of observations drawn from the piece-wise constant Ising model presented in Sec.\,$2$. Suppose (A1-A3) hold, and assume that $\lambda_1 \asymp \lambda_2 = \mathcal{O}(\sqrt{\log(n)/n})$. Let $\{\delta_n\}_{n\geq 1}$ be a non-increasing sequence that converges to $0$, and such that $\forall n>0$, $\Delta_{\min}\geq n\delta_n$, with $n\delta_n \rightarrow +\infty$. Assume further that $(i)$ $\frac{\lambda_1}{n\delta_n\xi_{\min}}\rightarrow 0$, $(ii)$ $\frac{\sqrt{p-1}\lambda_2}{\xi_{\min}}\rightarrow 0$, and $(iii)$ $\frac{\sqrt{p\log(n)}}{\xi_{\min}\sqrt{n\delta_n}}\rightarrow 0$. Then, if the correct number of change-points are estimated, we have $\widehat{D}=D$ and:
%
\begin{equation}
    \mathbb{P}(\max_{j=1,\ldots,D}|\hat{T}_j-T_j|\leq n\delta_n)\underset{n\rightarrow \infty}{\longrightarrow}1 .
    \label{eq:consitency_thme}
\end{equation}
\label{thme}
\end{theorem}
%
\begin{proof}
The proof follows the steps given in \cite{harchaoui2010multiple,kolar2012estimating,gibberd2017multiple}. First of all, thanks to the union bound,
%
\begin{equation*}
    \mathbb{P}(\max_{j=1,\ldots,D }|\hat{T}_j-T_j|>n\delta_n)\leq \sum_{j=1}^{D } \mathbb{P}(|\hat{T}_j-T_j|>n\delta_n),
\end{equation*}
thus it suffices to show for each $j=1,\ldots, D $, that $\mathbb{P}(|\hat{T}_j-T_j|>n\delta_n)\rightarrow 0$. We denote by $A_{n,j}$ the event $\left\{ |\hat{T}_j-T_j|>n\delta_n \right\}$.\\

\noindent
Similarly to \cite{kolar2012estimating}, we first consider the good case where we assume that the event $C_n=\left\{|\hat{T}_j-T_j|<\frac{\Delta_{\min}}{2}\right\}$ occurs. \\

\noindent
\subsection*{Bounding the good case}

For each $j=1,\ldots, D $, we show that $\mathbb{P}(A_{n,j} \cap C_n) \longrightarrow 0$. In particular, we suppose that $\hat{T}_j\leq T_j$ as the proof for $\hat{T}_j\geq T_j$ will be the same by symmetry. \\
Applying Lemma \ref{lemma:optim_cond} with $k=\hat{T}_j$ and $k=T _j$, subtracting one with the other and applying the $\ell_2$-norm, we obtain: %
%
\begin{align*}
0 &= \Bigg\| \sum_{i=\hat{T}_j}^{T_j -1} x^{(i)}_{\smallsetminus a}\left\{\tanh\left( \widehat{\beta}^{(i)\top}x^{(i)}_{\smallsetminus a})\right) - \tanh\left( \omega_a^{(i)\top}x^{(i)}_{\smallsetminus a})\right)\right\} \nonumber \\
& \quad\quad  - \sum_{i=\hat{T}_j}^{T_j -1} x^{(i)}_{\smallsetminus a}\left\{x^{(i)}_a - \mathbb{E}_{\Omega^{(i)}}\left[X_a | X_{\smallsetminus a} = x^{(i)}_{\smallsetminus a} \right]\right\} \\
& \quad\quad\quad\quad\quad\quad\quad\quad + \lambda_1(\hat{z}^{(\widehat{T}_j)}-\hat{z}^{(T_j)}) + \lambda_2\sum_{i=\hat{T}_j}^{T_j -1}\hat{y}^{(i)}\Bigg\|_2\\
& \geq \Bigg\|\sum_{i=\hat{T}_j}^{T_j -1} x^{(i)}_{\smallsetminus a}\left\{\tanh\left( \widehat{\beta}^{(i)\top}x^{(i)}_{\smallsetminus a})\right) - \tanh\left( \omega_a^{(i)\top}x^{(i)}_{\smallsetminus a})\right)\right\} \nonumber \\
& \quad \quad - \sum_{i=\hat{T}_j}^{T_j -1} x^{(i)}_{\smallsetminus a}\left\{x^{(i)}_a - \mathbb{E}_{\Omega^{(i)}}\left[X_a | X_{\smallsetminus a} = x^{(i)}_{\smallsetminus a} \right]\right\}\Bigg\|_2 \\
& \quad\quad\quad\quad\quad - \Bigg\|\lambda_2\sum_{i=\hat{T}_j}^{T_j -1}\hat{y}^{(i)}\Bigg\|_2  - \Big\|\lambda_1(\hat{z}^{(\widehat{T}_j)}-\hat{z}^{(T_j)}) \Big\|_2.
\end{align*}
%
We have: \\$\Big\|\lambda_1(\hat{z}^{(\widehat{T}_j)}-\hat{z}^{(T_j)}) \Big\|_2\leq 2\lambda_1$ \  and\\ $\Bigg\|\lambda_2\sum_{i=\hat{T}_j}^{T_j-1}\hat{y}^{(i)}\Bigg\|_2 \leq (T_j - \hat{T}_j)\sqrt{p-1}\lambda_2$. \\Furthermore, one may notice that for all $i\in \left\{\hat{T}_j,\ldots,T_j -1\right\}$, $\widehat{\beta}^{(i)}=\hat{\theta}_a^{j+1}$ and $\omega_a^{(i)} = \theta_a^{j}$. Adding and subtracting $\tanh\left( (\theta_{a}^{j+1})^{\top} x^{(i)}_{\smallsetminus a})\right)$, then applying again the triangle inequality, leads to the following result:
%
\begin{align}
\label{eq:event_one}
\!\!\!\!\!\!\!2\lambda_1 + (T_j - \hat{T}_j)\sqrt{p-1}\lambda_2\geq \norm{R_1}_2 - \norm{R_2}_2 -\norm{R_3}_2\!\!\!
\end{align}
%
with 
%
\begin{align}
 R_1  & = \sum_{i=\widehat{T}_j}^{T_j-1}  x^{(i)}_{\smallsetminus a}\Big\{\tanh\Big( (\theta_{a}^{j})^{\top} x^{(i)}_{\smallsetminus a})\Big), \nonumber \\
  & \quad \quad \quad \quad \quad \quad \quad \quad \quad - \tanh\left( (\theta_{a}^{j+1})^{\top} x^{(i)}_{\smallsetminus a})\right)\Big\},  \\
R_2 & = \sum_{i=\widehat{T}_j}^{T_j-1}  x^{(i)}_{\smallsetminus a}\Big\{\tanh\left( (\hat{\theta}_{a}^{j+1})^{\top} x^{(i)}_{\smallsetminus a})\right), \nonumber \\ 
 &  \quad \quad \quad \quad \quad \quad \quad \quad \quad - \tanh\left( (\theta_{a}^{j+1})^{\top} x^{(i)}_{\smallsetminus a})\right)\Big\}, \\
R_3 & = \sum_{i=\widehat{T}_j}^{T_j-1}  x^{(i)}_{\smallsetminus a}\left\{x^{(i)}_a - \mathbb{E}_{\Theta^{(j)}}\left[X_a | X_{\smallsetminus a} = x^{(i)}_{\smallsetminus a} \right]\right\}.
\end{align}
%
The event (\ref{eq:event_one}) occurs with probability one and it can be showed that it is included in the event:
%
\begin{align*}
    \{2\lambda_1  &+ (T_j - \hat{T}_j)\sqrt{p-1}  \lambda_2\geq \frac{1}{3}\norm{R_1}_2\}  \\
    & \cup \{\norm{R_2}_2\geq \frac{1}{3}\norm{R_1}_2\} \cup \{\norm{R_3}_2\geq \frac{1}{3}\norm{R_1}_2\}.
\end{align*}
%
Thus, we have:
%
\begin{align*}
    & \mathbb{P}  (A_{n,j} \cap C_n)  \leq \\ 
    & \mathbb{P}(A_{n,j} \cap C_n \cap \{2\lambda_1 + (T_j - \hat{T}_j)\sqrt{p-1}\lambda_2\geq \frac{1}{3}\norm{R_1}_2\}) \\
    & \quad \quad + \mathbb{P}(A_{n,j} \cap C_n \cap \{\norm{R_2}_2\geq \frac{1}{3}\norm{R_1}_2\}) \\
    & \quad \quad + \mathbb{P}(A_{n,j} \cap C_n \cap \{\norm{R_3}_2\geq \frac{1}{3}\norm{R_1}_2\}) \\
    & \quad \quad \quad  \triangleq \mathbb{P}(A_{n,j,1}) + \mathbb{P}(A_{n,j,3}) + \mathbb{P}(A_{n,j,3}).
\end{align*}

Now, We are going to show that each one of the three events has a probability that converges to 0 as $n$ grows. Let's focus on $A_{n,j,1}$. Applying the mean-value theorem, we have for all $i=\widehat{T}_j,\ldots,T_j-1$: 
%
\begin{align}
    \label{eq:mvt}
    \tanh & \left( (\theta_{a}^{j})^{\top} x^{(i)}_{\smallsetminus a})\right) - \tanh\left( (\theta_{a}^{j+1})^{\top} x^{(i)}_{\smallsetminus a})\right) \nonumber \\
     & = (1-\tanh^2{(\Bar{\theta}^{iT}x^{(i)}_{\smallsetminus a})})x^{(i)\top}_{\smallsetminus a}(\theta_{a}^{j}-\theta_{a}^{j+1}),
\end{align}
with $\Bar{\theta}^i = \alpha^i\theta_{a}^{j}+(1-\alpha^i)\theta_{a}^{j+1}$, for a certain $\alpha^i \in [0,1]$. Combining (\ref{eq:mvt}) with the definition of $R_1$, we obtain:
%
\begin{align}
    & \norm{R_1}_2  = \Bigg\|\sum_{i=\widehat{T}_j}^{T_j-1}  x^{(i)}_{\smallsetminus a}\Big\{\tanh\left( (\theta_{a}^{j})^{\top} x^{(i)}_{\smallsetminus a})\right) \nonumber \\
    & \quad\quad \quad  \quad \quad  \quad \quad  \quad \quad  - \tanh\left( (\theta_{a}^{j+1})^{\top} x^{(i)}_{\smallsetminus a})\right)\Big\}\Bigg\|_2 \\
    & =  (T_j-\widehat{T}_j)\Bigg\|\frac{1}{T_j-\widehat{T}_j}\sum_{i=\widehat{T}_j}^{T_j-1} (1-\tanh^2{(\Bar{\theta}^{iT}x^{(i)}_{\smallsetminus a})}) \nonumber \\ 
    & \quad \quad  \quad \quad  \quad \quad  \quad \quad \times x^{(i)}_{\smallsetminus a}x^{(i)\top}_{\smallsetminus a}(\theta_{a}^{j}-\theta_{a}^{j+1})\Bigg\|_2 \\
    & \geq (T_j-\widehat{T}_j) \times \nonumber\\ 
    & \times \Lambda_{\min}\left(\frac{1}{T_j-\widehat{T}_j}\sum_{i=\widehat{T}_j}^{T_j-1} (1-\tanh^2{(\Bar{\theta}^{iT}x^{(i)}_{\smallsetminus a})}) x^{(i)}_{\smallsetminus a}x^{(i)\top}_{\smallsetminus a}\right) \nonumber \\
    & \quad \quad  \quad \quad  \times \norm{\theta_{a}^{j}-\theta_{a}^{j+1}}_2
\end{align}

Since, $\forall j$, $\norm{\theta_{a}^{j}}_2\leq M$ (A2), we have $\norm{\Bar{\theta}^i}_2 \leq M$ and $|\Bar{\theta}^{iT}x^{(i)}_{\smallsetminus a}| \leq M\cdot \sqrt{p-1}$. Thus, there exist a constant $\Tilde{M}>0$ such that $1-\tanh^2{(\Bar{\theta}^{iT}x^{(i)}_{\smallsetminus a})}\geq \Tilde{M}$. Combining this with the fact that each matrix $x^{(i)}_{\smallsetminus a}x^{(i)\top}_{\smallsetminus a}$ are positive semidefinite, we have:
%
\begin{align}
    & \norm{R_1}_2 \geq \nonumber \\
    & (T_j-\widehat{T}_j)\Tilde{M}\Lambda_{\min}\left(\frac{1}{T_j-\widehat{T}_j}\sum_{i=\widehat{T}_j}^{T_j-1} x^{(i)}_{\smallsetminus a}x^{(i)\top}_{\smallsetminus a}\right)\xi_{\min}. \label{eq:proof_R1}
\end{align}

Thus, the event $\{2\lambda_1 + (T_j - \hat{T}_j)\sqrt{p-1}\lambda_2\geq \frac{1}{3}\norm{R_1}_2\}$ is included in the event: 
%
\begin{align}
    \label{event:R1}
    2\lambda_1 & + (T_j - \hat{T}_j)\sqrt{p-1}\lambda_2 \geq \nonumber \\
    & (T_j-\widehat{T}_j)\Tilde{M}\Lambda_{\min}\left(\frac{1}{T_j-\widehat{T}_j}\sum_{i=\widehat{T}_j}^{T_j-1} x^{(i)}_{\smallsetminus a}x^{(i)\top}_{\smallsetminus a}\right)\xi_{\min}. 
\end{align}

Denoting by $\{\ref{event:R1}\}$ the event of Eq.\,\ref{event:R1}, we have:
%
\begin{align*}
    & \mathbb{P}(A_{n,j,1}) \leq \mathbb{P}(A_{n,j} \cap C_n \cap \{\ref{event:R1}\}) \\
    & \leq \mathbb{P}\Bigg(A_{n,j} \cap C_n \cap \{\ref{event:R1}\} \\
    &  \cap \{\Lambda_{\min}\left(\frac{1}{T_j-\widehat{T}_j}\sum_{i=\widehat{T}_j}^{T_j-1} x^{(i)}_{\smallsetminus a}x^{(i)\top}_{\smallsetminus a}\right) > \frac{\phi_{\min}}{2}\}\Bigg) \\
    &  + \mathbb{P}\Bigg(A_{n,j} \cap C_n \\
    & \cap\{\Lambda_{\min}\left(\frac{1}{T_j-\widehat{T}_j}\sum_{i=\widehat{T}_j}^{T_j-1} x^{(i)}_{\smallsetminus a}x^{(i)\top}_{\smallsetminus a}\right) \leq \frac{\phi_{\min}}{2}\}\Bigg).
\end{align*}

Using Lemma \ref{lemma:lambda_borne_rand} with $v_n=n\delta_n$ and $\epsilon=\frac{\phi_{\min}}{2}$, we can bound the right-hand side of the upper equation. We also re-write the first term so that we obtain:
%
\begin{align}
    & \mathbb{P}(A_{n,j,1}) \nonumber \\
    & \leq \mathbb{P}(A_{n,j} \cap C_n \cap \{\frac{2\lambda_1}{T_j-\widehat{T}_j} + \sqrt{p-1}\lambda_2 > \frac{\Tilde{M}\phi_{\min}}{2} \xi_{\min} \} ) \nonumber \\
    & \quad \quad \quad \quad + c_1\exp{\left(-\frac{\epsilon^2n\delta_n}{2} + 2\log(n)\right)} \nonumber \\
    & \leq \mathbb{P}(\xi_{\min}^{-1}\frac{2\lambda_1}{n\delta_n} + \xi_{\min}^{-1}\sqrt{p-1}\lambda_2 > \frac{\Tilde{M}\phi_{\min}}{2}   ) \nonumber \\
     & \quad \quad \quad \quad + c_1\exp{\left(-\frac{\epsilon^2n\delta_n}{2} + 2\log(n)\right)}. \nonumber
\end{align}

Thanks to $(iii)$, we have $n\delta_n$ that goes to infinity faster than $\log(n)$, thus the second term of the sum goes to $0$ as $n$ grows. Furthermore, using $(i)$ and $(ii)$ we have:
%
\begin{align*}
    \mathbb{P}(\xi_{\min}^{-1}\frac{2\lambda_1}{n\delta_n} +  & \xi_{\min}^{-1}\sqrt{p-1}\lambda_2 > \frac{\Tilde{M}\phi_{\min}}{2}) \\ 
    & \underset{n\rightarrow 0}{\longrightarrow} \mathbb{P}(0 + 0 > \frac{\Tilde{M}\phi_{\min}}{2})=0.
\end{align*}

\noindent
Which concludes that $\mathbb{P}(A_{n,j,1})\rightarrow 0$. \\

We now focus on the event $A_{n,j,2}$. Let $\Bar{T}_j \triangleq\lfloor 2^{-1}(T_{j}+T_{j+1}) \rfloor$ and remark that between $T_j$ and $\Bar{T}_j$, $\widehat{\beta}^{(i)} = \widehat{\theta}^{j+1}$. Now, using Lemma \ref{lemma:optim_cond} with $k=\Bar{T}_j$ and $k=T_j$ and similar operation used to show equation (\ref{eq:event_one}), we have:
%
\begin{align}
    & 2\lambda_1 +  (\bar{T}_j - T_j)\sqrt{p-1}\lambda_2 \nonumber \\ 
    & \geq \Bigg\|\sum_{i=T_j}^{\Bar{T}_j-1}  x^{(i)}_{\smallsetminus a}\bigg(\tanh\left((\hat{\theta}_{a}^{j+1})^{\top} x^{(i)}_{\smallsetminus a})\right) \nonumber  \\
     & \quad \quad \quad \quad \quad \quad \quad \quad  - \tanh\left( (\theta_{a}^{j+1})^{\top} x^{(i)}_{\smallsetminus a})\bigg)\right)\Bigg\|_2 \nonumber \\
    & - \norm{\sum_{i=T_j}^{\Bar{T}_j-1}  x^{(i)}_{\smallsetminus a}\underbrace{\left(x^{(i)}_a - \mathbb{E}_{\Theta^{(j+1)}}\left[X_a | X_{\smallsetminus a} = x^{(i)}_{\smallsetminus a} \right]\right)}_{\varepsilon^i_{j+1}}}_2. \nonumber
\end{align}

Now using the fact that $\norm{\hat{\theta}_{a}^{j+1}}_2$ is necessarily bounded, Lemma \ref{lemma:lambda_borne_rand} with $\epsilon = \phi_{\min}/2$ and similar arguments that we used for $A_{n,j,1}$, we can write that the first term in the right-hand side of the previous equation is lower-bounded by:
%
\begin{equation*}
    (T_j-\bar{T}_j)\Tilde{\Tilde{M}}\frac{\phi_{\min}}{2}\norm{\hat{\theta}_{a}^{j+1} - \theta_{a}^{j+1}}_2
\end{equation*}
with probability tending to one. Here, $\Tilde{\Tilde{M}}$ corresponds to a positive constant derived the same way as $\Tilde{M}$ in the previous part of the proof. In consequence, we can write 
%
\begin{align}
    & \norm{\hat{\theta}_{a}^{j+1} - \theta_{a}^{j+1}}_2 \leq \nonumber \\
    & \frac{8\lambda_1 + 4(\Bar{T}_j-T_j)\sqrt{p-1}\lambda_2 + 4\norm{\sum_{i=T_j}^{\Bar{T}_j-1}  x^{(i)}_{\smallsetminus a}\varepsilon^i_{j+1}}_2}{\Tilde{\Tilde{M}}\phi_{\min}(T_{j+1}-T_j)},
    \label{eq:Aj2}
\end{align}
which holds with probability tending to one. 

Furthermore, with probability also tending to one it can be shown using the same arguments used to prove equation (\ref{eq:proof_R1}) that $\norm{R_1}_2\geq (T_j-\widehat{T}_j)\Tilde{M}\phi_{\min}\xi_{\min}/2$ and $\norm{R_2}_2\leq \norm{\hat{\theta}_{a}^{j+1} - \theta_{a}^{j+1}}_2\phi_{\max}(T_j-\widehat{T}_j)/2$. Combining that with equation (\ref{eq:Aj2}), we can write:
%
\begin{align*}
    & \mathbb{P}(A_{n,j,2}) \\
    & \leq \mathbb{P}(A_{n,j} \cap C_n \cap \{ \frac{1}{3}\tilde{\tilde{M}}\tilde{M}\phi_{\min}^2\phi_{\max}^{-1}\xi_{\min}(T_{j+1}-T_j) \leq \\
     & \quad  8\lambda_1 + 4(\Bar{T}_j-T_j)\sqrt{p-1}\lambda_2 + 4\norm{\sum_{i=T_j}^{\Bar{T}_j-1}  x^{(i)}_{\smallsetminus a}\varepsilon^i_{j+1}}_2\}) \\
    & \quad \quad \quad \quad + c_1\exp{\left(-c_2n\delta_n + 2\log(n)\right)} \\
    & \leq \mathbb{P}(c_3\phi_{\min}^2\phi_{\max}^{-1}\xi_{\min}\Delta_{\min} \leq \lambda_1) \\ 
    & + \mathbb{P}(c_4\phi_{\min}^2\phi_{\max}^{-1}\xi_{\min} \leq \sqrt{p-1}\lambda_2) \\
    &   + \mathbb{P}\left(c_5\phi_{\min}^2\phi_{\max}^{-1}\xi_{\min} \leq (\Bar{T}_j-T_j)^{-1}\norm{\sum_{i=T_j}^{\Bar{T}_j-1}  x^{(i)}_{\smallsetminus a}\varepsilon^i_{j+1}}_2\right) \\
    &  + c_1\exp{\left(-c_2n\delta_n + 2\log(n)\right)}.
\end{align*}
With $c_1,\ldots,c_5$ positive constants.

The first two terms tends to $0$ as $n$ goes to infinity thanks to the hypothesis $(i)$ and $(ii)$ of the theorem. Indeed, since $\Delta_{\min}>n\delta_n$ and $(n\delta_n\xi_{\min})^{-1}\lambda_1\rightarrow 0$ $(i)$, the first term tends to $\mathbb{P}(c_3\phi_{\min}^2\phi_{\max}^{-1} \leq 0) = 0$ and the second term tends to $0$ since $\xi_{\min}^{-1}\sqrt{p-1}\lambda_2\rightarrow 0$ $(ii)$. The fourth term directly tends to $0$. Applying Lemma \ref{lemma:concentration_R3}, we can upper bound the third term by:
%
\begin{align*}
    \mathbb{P} & \left(c_5\phi_{\min}^2\phi_{\max}^{-1}\xi_{\min} \leq (\Bar{T}_j-T_j)^{-1/2}2\sqrt{p\log(n)}\right) \\
    & \quad + c_6\exp(-2p\log(n)) \\
     & \leq \mathbb{P} \left(c_5\phi_{\min}^2\phi_{\max}^{-1}\xi_{\min} \leq (n\delta_n)^{-1/2}2\sqrt{p\log(n)}\right)\\
     & \quad + c_6\exp(-2p\log(n))
\end{align*}
with $c_6$ an other positive constant.\\

Since $(\xi_{\min}\sqrt{n\delta_n})^{-1}\sqrt{p\log(n)}\rightarrow 0$ $(iii)$, the previous equation tends to $0$, which make $\mathbb{P}(A_{n,j,2})$ tends to $0$ as well. \\

Finally, we upper bound the probability on the event $A_{n,j,3}$. As before, we know that $\norm{R_1}_2\geq (T_j-\widehat{T}_j)\Tilde{M}\phi_{\min}\xi_{\min}/2$ with probability at least $1-c_1\exp(-c_2n\delta_n + 2\log(n))$, thus we have:
%
\begin{align*}
    \mathbb{P}(A_{n,j,3}) & \leq \mathbb{P}\left(\frac{\Tilde{M}\phi_{\min}\xi_{\min}}{6}\leq \frac{\norm{R_3}_2}{T_j-\widehat{T}_j}\right) \\ 
    & \quad + c_1\exp(-c_2n\delta_n + 2\log(n)).
\end{align*}

Using Lemma \ref{lemma:concentration_R3_random}, we can upper bound the first term by:
%
\begin{align*}
 & \mathbb{P}\left(\frac{\Tilde{M}\phi_{\min}\xi_{\min}}{6}\leq 2\sqrt{\frac{p\log(n)}{T_j-\widehat{T}_j}}\right) + c_2\exp(-c_3\log(n))  \\
 & \leq \mathbb{P}\left(\frac{\Tilde{M}\phi_{\min}\xi_{\min}}{6}\leq 2\sqrt{\frac{p\log(n)}{n\delta_n}}\right) + c_2\exp(-c_3\log(n)),\end{align*}
which tends to $0$ thanks to $(iii)$. Since the symmetric case follows exactly the same arguments, we have shown that $\mathbb{P}(A_{n,j}\cap C_n) \rightarrow 0$. We now need to prove that $\mathbb{P}(A_{n,j}\cap C_n^c) \rightarrow 0$.\\

\subsection*{Bounding the bad case} 

Let us define the following complementary events:
%
\begin{align}
    & D_n^{(l)} \triangleq \left\{\exists j \in [D ], \widehat{T}_j \leq T_{j-1} \right\} \cap C_n^c \\
    & D_n^{(m)} \triangleq \left\{\forall j \in [D ], T_{j-1} < \widehat{T}_j < T_{j+1} \right\} \cap C_n^c \\
    & D_n^{(r)} \triangleq \left\{\exists j \in [D ], \widehat{T}_j \geq T_{j+1} \right\} \cap C_n^c.
\end{align}
We can write $\mathbb{P}(A_{n,j}\cap C_n^c) = \mathbb{P}(A_{n,j}\cap D_n^{(l)}) +\mathbb{P}(A_{n,j}\cap D_n^{(m)}) +\mathbb{P}(A_{n,j}\cap D_n^{(r)})$. Again, the goal is to prove that the three terms tends to $0$. We will assume that $\widehat{T}_j \leq T_j$ as the other case can be done by symmetry. Let's first focus on the middle term, it has been shown in \cite{harchaoui2010multiple,kolar2012estimating,gibberd2017multiple} that it can be upper bounded in the following way:
%
\begin{align}
    & \mathbb{P}( A_{n,j}\cap D_n^{(m)}) \nonumber\\
    & \leq \mathbb{P} (A_{n,j}\cap \{(\widehat{T}_{j+1}-T_j)\geq \frac{\Delta_{\min}}{2}\}\cap D_n^{(m)}) \nonumber \\
    & \quad  + \mathbb{P}(\{(T_{j+1} - \widehat{T}_{j+1})\geq \frac{\Delta_{\min}}{2}\}\cap D_n^{(m)}) \nonumber \\
    & \leq \mathbb{P} (A_{n,j}\cap \{(\widehat{T}_{j+1}-T_j)\geq \frac{\Delta_{\min}}{2}\}\cap D_n^{(m)}) \nonumber \\
    & \quad  + \sum_{k=j+1}^{D }\mathbb{P}(\{(\widehat{T}_{k+1}-T_k)\geq \frac{\Delta_{\min}}{2}\} \nonumber \\
    & \quad \quad  \quad  \quad  \cap\{(T_{k}-\widehat{T}_k)\geq \frac{\Delta_{\min}}{2}\}\cap D_n^{(m)}).
    \label{eq:last_bound}
\end{align}

Let us bound the first term. Assuming the event $A_{n,j}\cap \{(\widehat{T}_{j+1}-T_j)\geq \frac{\Delta_{\min}}{2}\}\cap D_n^{(m)}$ and applying Lemma \ref{lemma:optim_cond} with $k=\widehat{T}_j$ and $k=T_j$, we can prove similarly as Eq.\,\ref{eq:Aj2} that:
%
\begin{align*}
     &\norm{\hat{\theta}_{a}^{j+1} - \theta_{a}^{j}}_2  \\ 
     & \leq \frac{4\lambda_1 + 2(T_{j}-\widehat{T}_j)\sqrt{p-1}\lambda_2 + 2\norm{\sum_{i=\widehat{T}_j}^{T_j-1}  x^{(i)}_{\smallsetminus a}\varepsilon^i_{j}}_2}{\tilde{\tilde{M}}\phi_{\min}(T_{j}-\widehat{T}_j)}\\
    & \leq c_1\phi_{\min}^{-1}(n\delta_n)^{-1}\lambda_1 + c_2\phi_{\min}^{-1}\sqrt{p-1}\lambda_2 \\ 
    &  \quad \quad  \quad  \quad + c_3 \phi_{\min}^{-1} (T_{j}-\widehat{T}_j)^{-1}\norm{\sum_{i=\widehat{T}_j}^{T_j-1}  x^{(i)}_{\smallsetminus a}\varepsilon^i_{j}}_2
\end{align*}
with probability tending to one. Using Lemma \ref{lemma:concentration_R3_random} we can bound the third term and obtain:
%
\begin{align*}
    \norm{\hat{\theta}_{a}^{j+1} - \theta_{a}^{j}}_2 & \leq c_1\phi_{\min}^{-1}(n\delta_n)^{-1}\lambda_1 + c_2\phi_{\min}^{-1}\sqrt{p-1}\lambda_2 \\ 
    &  + c_3 \phi_{\min}^{-1} (\sqrt{n\delta_n})^{-1}\sqrt{p\log(n)}
\end{align*}
with probability tending to one. Similarly, applying the same lemmas with $k=T_j$ and either $k=\widehat{T}_{j+1}$, if $\widehat{T}_{j+1}\leq T_{j+1}$ or $k=T_{j+1}$ otherwise, we have:
%
\begin{align*}
    \norm{\hat{\theta}_{a}^{j+1} - \theta_{a}^{j+1}}_2 & \leq c_4\phi_{\min}^{-1}(n\delta_n)^{-1}\lambda_1 + c_5\phi_{\min}^{-1}\sqrt{p-1}\lambda_2 \\
    & + c_6 \phi_{\min}^{-1} (\sqrt{n\delta_n})^{-1}\sqrt{p\log(n)}
\end{align*}
with probability tending to one. \\ 
Since $\xi_{\min} \leq \norm{\theta_{a}^{j} - \theta_{a}^{j+1}}_2 \leq  \norm{\hat{\theta}_{a}^{j+1} - \theta_{a}^{j}}_2 + \norm{\hat{\theta}_{a}^{j+1} - \theta_{a}^{j+1}}_2$, we finally upper bound the considered probability by:
%
\begin{align*}
    \mathbb{P} & (A_{n,j}\cap \{(\widehat{T}_{j+1}-T_j)\geq \frac{\Delta_{\min}}{2}\}\cap D_n^{(m)}) \\
     & \leq \mathbb{P}(\xi_{\min} \leq c_7\phi_{\min}^{-1}(n\delta_n)^{-1}\lambda_1 + c_8\phi_{\min}^{-1}\sqrt{p-1}\lambda_2 \\
     & + c_9 \phi_{\min}^{-1} (\sqrt{n\delta_n})^{-1}\sqrt{p\log(n)}).
\end{align*}
this tends to $0$ thanks to the hypothesis $(i)$, $(ii)$ and $(iii)$. The other probabilities in the upper bound on $\mathbb{P}( A_{n,j}\cap D_n^{(m)})$ also tends to $0$. The proof follows exactly the previous one. We proved that $\mathbb{P}( A_{n,j}\cap D_n^{(m)})\rightarrow 0$, we will now show the same for $\mathbb{P}(A_{n,j}\cap D_n^{(l)})$.

Following \cite{gibberd2017multiple}, we have:
%
\begin{align*}
    \mathbb{P}(D_n^{(l)}) & \leq \sum_{j=1}^{D }2^{j-1} \mathbb{P}(\max\{l \in [D ] : \widehat{T}_l\leq T_{l-1}\}) \\
    & \leq 2^{D -1}  \sum_{j=1}^{D }\sum_{l>j}\mathbb{P}(\{T_l - \widehat{T}_l \geq \frac{\Delta_{\min}}{2}\} \\ 
    & \quad \quad \quad \quad \cap \{\widehat{T}_{l+1}-T_l\geq \frac{\Delta_{\min}}{2}\}).
\end{align*}

Now, combining arguments of \cite{gibberd2017multiple} and those used to bound the elements of (\ref{eq:last_bound}), we have $\mathbb{P}(D_n^{(l)})\rightarrow 0$. Similarly we can show $\mathbb{P}(D_n^{(r)})\rightarrow 0$ as $n\rightarrow 0$. Finally we have $\mathbb{P}(A_{n,j}\cap C_n^c)\rightarrow 0$, which concludes the proof.
\end{proof}
\begin{proposition}
 Let $\{x_i\}_{i=1}^n$ be a sequence of observation drawn from the model presented in Sec.\, 2. Assume the condition of Theorem 1 are respected. Then, if for a fix $D_{\max}$ we have $D \leq \widehat{D}\leq D_{\max}$  then:
 \begin{equation*}
     \mathbb{P}(d(\mathcal{\widehat{D}}\| \mathcal{D})\leq n\delta_n)\underset{n\rightarrow \infty}{\longrightarrow}1.
 \end{equation*}
\end{proposition}
\begin{proof}
Let us show that:
%
\begin{align*}
     \mathbb{P} & (\{d(\mathcal{\widehat{D}}\| \mathcal{D})\geq n\delta_n\}\cap\{D \leq \widehat{D}\leq D_{\max}\}) \\ 
     & \leq \sum_{K = D}^{D_{\max}} \mathbb{P}(\{d(\mathcal{\widehat{D}}\| \mathcal{D})\geq n\delta_n\}\cap\{\widehat{D}=K\}) \underset{n\rightarrow \infty}{\longrightarrow}0.
\end{align*}
First, we note that for $K=D$, we have \\ $\mathbb{P}(\{d(\mathcal{\widehat{D}}\| \mathcal{D})\geq n\delta_n\}\cap\{\widehat{D}=K\}) \underset{n\rightarrow \infty}{\longrightarrow}0$ thanks to Theorem 1. Thus it suffices to show that:
%
\begin{align*}
     & \sum_{K = D+1}^{D_{\max}}  \mathbb{P}(\{d(\mathcal{\widehat{D}}\| \mathcal{D})\geq n\delta_n\}\cap\{\widehat{D}=K\}) \\
     & \leq \sum_{K = D+1}^{D_{\max}}\sum_{k = 1}^{D}\mathbb{P}(\forall1\leq l\leq K, |\widehat{T}_l-T_k|\geq n\delta_n) \underset{n\rightarrow \infty}{\longrightarrow}0.
\end{align*}
Like in \cite{harchaoui2010multiple}, we rewrite the event $\{\forall1\leq l\leq K, |\widehat{T}_l-T_k|\geq n\delta_n\}$ as the disjoint union of the events:
%
\begin{align*}
    & E_{n,k,1} = \{\forall1\leq l\leq K, |\widehat{T}_l-T_k|\geq n\delta_n \text{ and } \widehat{T}_l<T_k \} \\
    & E_{n,k,2} = \{\forall1\leq l\leq K, |\widehat{T}_l-T_k|\geq n\delta_n \text{ and } \widehat{T}_l>T_k \} \\
    & E_{n,k,3} = \{\exists 1\leq l\leq K-1, |\widehat{T}_l-T_k|\geq n\delta_n, \\
    & \quad \quad \quad \quad \quad   |\widehat{T}_{l+1}-T_k|\geq n\delta_n \text{ and } \widehat{T}_l<T_k<\widehat{T}_{l+1} \}
\end{align*}
and propose to show that the probability of each events tends to $0$ as $n$ grows.
Let's begin with $\mathbb{P}(E_{n,k,1})$ and note that it is equal to:
\begin{equation*}
    \mathbb{P}(E_{n,k,1}\cap \{\widehat{T}_{K}>T_{k-1}\}) + \mathbb{P}(E_{n,k,1}\cap \{\widehat{T}_{K}\leq T_{k-1}\})
\end{equation*}
First, we are going to upper bound the left-hand element of the previous equation. Applying Lemma \ref{lemma:optim_cond} with $t = \widehat{T}_K$ and $t = T_k$, we can prove similarly to the equation (\ref{eq:event_one}) in the good case scenario of the previous theorem that:
%
\begin{equation*}
2\lambda_1 + (T_k - \widehat{T}_K)\sqrt{p-1}\lambda_2\geq \norm{R'_1}_2 - \norm{R'_2}_2 -\norm{R'_3}_2
\end{equation*}
with
%
\begin{align*}
 R'_1  & = \sum_{i=\widehat{T}_K}^{T_k-1}  x^{(i)}_{\smallsetminus a}\Big\{\tanh\Big( (\theta_{a}^{k})^T x^{(i)}_{\smallsetminus a})\Big) \nonumber \\
  & \quad \quad \quad \quad \quad \quad \quad \quad \quad - \tanh\left( (\theta_{a}^{k+1})^T x^{(i)}_{\smallsetminus a})\right)\Big\}  \\
R'_2 & = \sum_{i=\widehat{T}_K}^{T_k-1}  x^{(i)}_{\smallsetminus a}\Big\{\tanh\left( (\hat{\theta}_{a}^{K+1})^T x^{(i)}_{\smallsetminus a})\right) \nonumber \\ 
 &  \quad \quad \quad \quad \quad \quad \quad \quad \quad - \tanh\left( (\theta_{a}^{k+1})^T x^{(i)}_{\smallsetminus a})\right)\Big\} \\
R'_3 & = \sum_{i=\widehat{T}_K}^{T_k-1}  x^{(i)}_{\smallsetminus a}\left\{x^{(i)}_a - \mathbb{E}_{\Theta^{(k)}}\left[X_a | X_{\smallsetminus a} = x^{(i)}_{\smallsetminus a} \right]\right\}.
\end{align*}
Like in the previous theorem, we can upperbound $\mathbb{P}(E_{n,k,1}\cap \{\widehat{T}_k>T_{k-1}\})$ by:
%
\begin{align*}
     \mathbb{P}(E^{(1)}_{n,k,1}) + \mathbb{P}(E^{(2)}_{n,k,1}) + \mathbb{P}(E^{(3)}_{n,k,1})
\end{align*}
where
\begin{align*}
    E^{(1)}_{n,k,1} &= \{2\lambda_1  + (T_k - \widehat{T}_K)\sqrt{p-1}  \lambda_2\geq \frac{1}{3}\norm{R'_1}_2\} \\
    E^{(2)}_{n,k,1} &= \{\norm{R'_2}_2\geq \frac{1}{3}\norm{R'_1}_2\} \\
    E^{(3)}_{n,k,1} &= \{\norm{R'_3}_2\geq \frac{1}{3}\norm{R'_1}_2\}.
\end{align*}
To show that $\mathbb{P}(E^{(1)}_{n,k,1})$ tends to $0$ it suffices to follow the proof used to show that $\mathbb{P}(A_{n,j,1})$ tends to $0$ in the good scenario of the previous theorem.

Similarly, to show that $\mathbb{P}(E^{(2)}_{n,k,1})$ tends to $0$ it suffices to follow the proof used for $\mathbb{P}(A_{n,j,2})$. Applying lemma \ref{lemma:optim_cond} with $t = T_k$ ans $t = T_{k+1}$ we can show that with probability tending to one:
%
\begin{align}
    & \norm{\widehat{\theta}_{a}^{K+1} - \theta_{a}^{k+1}}_2 \leq \nonumber \\
    & \frac{4\lambda_1 + 2(T_{k+1}-T_k)\sqrt{p-1}\lambda_2 + 2\norm{\sum_{i=T_k}^{T_{k+1}}  x^{(i)}_{\smallsetminus a}\varepsilon^i_{j+1}}_2}{\Tilde{\Tilde{M}}\phi_{\min}(T_{k+1}-T_k)}.
    \label{eq:Aj2}
\end{align}
The rest follows exactly the arguments used to show the limit of $\mathbb{P}(A_{n,j,2})$.

Finally, $\mathbb{P}(E^{(3)}_{n,k,1})$ tends to $0$ the same way $\mathbb{P}(A_{n,j,3})$ was tending to $0$ in the previous proof.

The proof to show that $\mathbb{P}(E_{n,k,1}\cap \{\widehat{T}_{K}\leq T_{k-1}\})$ tends to $0$ is the same. It suffices to apply lemma \ref{lemma:optim_cond} with $t = T_{k-1}$ and $t = T_{k}$ to split the event in $3$ sub-events and follow the proof. By symmetry, we also have $\mathbb{P}(E_{n,k,2})\rightarrow 0$. 

Let's now focus on $E_{n,k,3}$. Like in \cite{harchaoui2010multiple}, the event is split is four independent events:
%
\begin{equation*}
    E_{n,k,3} = E^{(1)}_{n,k,3} \cup E^{(2)}_{n,k,3} \cup E^{(3)}_{n,k,3} \cup E^{(4)}_{n,k,3} 
\end{equation*}
with
%
\begin{align*}
    & E^{(1)}_{n,k,3} = E_{n,k,3} \cap \{ T_{k-1}<\widehat{T}_l<\widehat{T}_{l+1}<T_{k+1} \} \\
    & E^{(2)}_{n,k,3} = E_{n,k,3} \cap \{ T_{k-1}<\widehat{T}_l<T_{k+1}, \widehat{T}_{l+1} > T_{k+1} \} \\
    & E^{(3)}_{n,k,3} = E_{n,k,3} \cap \{\widehat{T}_l<T_{k-1} ,T_{k-1}<\widehat{T}_{l+1}<T_{k+1} \} \\
    & E^{(4)}_{n,k,3} = E_{n,k,3} \cap \{\widehat{T}_l<T_{k-1} ,\widehat{T}_{l+1}>T_{k+1} \}.
\end{align*}
To prove that each one of the previous events have a probability that tends to $0$ as $n$ grows, we invite the reader to read the proof of \cite{harchaoui2010multiple}. It consist in multiple applications of the different Lemmas, the same way we used them in the previous part. Only the time at which lemma \ref{lemma:optim_cond} is used changes and are given by \cite{harchaoui2010multiple}. This concludes the proof.
\end{proof}

\subsection*{Supplementary Lemmas}
Below, the different lemmas necessary to prove the main results are given.
\begin{lemma}
\label{lemma:lambda_borne}
Let $\{x^{(i)}\}_{i=1}^n$ be a set of i.i.d observation sampled from an Ising model with parameter $\Theta \in \mathbb{R}^{p\times p}$ and assume that assumption (A1) is satisfied. Then, $\forall r,l \in [n]$ such that $l<r$ and $r-l>v_n$ with $v_n$ a positive serie, we have $\forall \epsilon>0$:
%
\begin{align}
    \mathbb{P}\left(\Lambda_{\min}\left(\frac{1}{r-l+1}\sum_{i=l}^rx_{\smallsetminus a}^{(i)}x_{\smallsetminus a}^{(i)\top}\right)\leq \phi_{\min}-\epsilon\right) \nonumber \\ 
    \leq 2(p-1)^2\exp{\left(-\frac{\epsilon^2v_n}{2}\right)}
    \label{eq:phi_min_det}
\end{align}
%
and
%
\begin{align}
    \label{eq:phi_max_det}
    \mathbb{P}\left(\Lambda_{\max}\left(\frac{1}{r-l+1}\sum_{i=l}^rx_{\smallsetminus a}^{(i)}x_{\smallsetminus a}^{(i)\top}\right) \geq \phi_{\max}+\epsilon\right) \nonumber \\
    \leq 2(p-1)^2\exp{\left(-\frac{\epsilon^2v_n}{2}\right)}.
\end{align}
\end{lemma}
%
\begin{proof}
Let $\widehat{\Sigma}=\frac{1}{r-l+1}\sum_{i=l}^r x_{\smallsetminus a}^{(i)}x_{\smallsetminus a}^{(i)\top}$ and $\Sigma=\mathbb{E}\left[X_{\smallsetminus a}X_{\smallsetminus a}^{\top}\right]$. \\ 

We first proove the inequality (\ref{eq:phi_min_det}). Recall that for a symmetric matrix $M$, we have $\Lambda_{\max}(M)\leq \norm{M}_F$, the Frobenius norm of $M$. We have
%
\begin{align}
    \Lambda_{\min}(\widehat{\Sigma}) & = \min_{\norm{v}_2=1}v^{\top} \widehat{\Sigma} v \\
    & \geq \min_{\norm{v}_2=1}v^{\top} \Sigma v - \max_{\norm{v}_2=1}v^{\top} (\widehat{\Sigma} - \Sigma) v \\
    & \geq \Lambda_{\min}(\Sigma) - \Lambda_{\max}(\widehat{\Sigma} - \Sigma) \\
    & \geq \phi_{\min} - \norm{\widehat{\Sigma} - \Sigma}_F.
\end{align}

Let $s_{mq}^{(i)}$ be the $(m,q)$-th coordinate of $x_{\smallsetminus a}^{(i)}x_{\smallsetminus a}^{(i)\top} - \Sigma$ and $\frac{1}{r-l+1}\sum_{i=l}^r s_{mq}^{(i)}$ the one of $\widehat{\Sigma} - \Sigma$. Note that $\mathbb{E}\left[s_{mq}^{(i)}\right] = 0$ and $|s_{mq}^{(i)}|\leq 2$. Let us analyze the quantity $\mathbb{P}\left(\norm{\widehat{\Sigma} - \Sigma}_F>\epsilon\right)$ with $\epsilon>0$:

\begin{align}
    \mathbb{P}\left(\norm{\widehat{\Sigma} - \Sigma}_F>\epsilon\right) &= \mathbb{P}\left((\sum_{m,q}s_{mq}^2)^{1/2}>\epsilon\right) \\
    & = \mathbb{P}\left(\sum_{m,q}s_{mq}^2>\epsilon^2\right) \\
    & \leq \sum_{m,q}\mathbb{P}\left(s_{mq}^2>\epsilon^2\right) \\
    & \leq \sum_{m,q}\mathbb{P}\left(|s_{mq}|>\epsilon\right). \label{eq:union_bound}
\end{align}

Thanks to Hoeffding's inequality, we have $\mathbb{P}\left(|s_{mq}|>\epsilon\right) \leq 2\exp{\left(-\frac{\epsilon^2(r-l+1)}{2}\right)}$. Since $r-l>v_n$, we also have $\mathbb{P}\left(|s_{mq}|>\epsilon\right) \leq 2\exp{\left(-\frac{\epsilon^2v_n}{2}\right)}$. It follows from (\ref{eq:union_bound}) that $\mathbb{P}\left(\norm{\widehat{\Sigma} - \Sigma}_F>\epsilon\right) \leq 2(p-1)^2\exp{\left(-\frac{\epsilon^2v_n}{2}\right)}$. We deduce that:
%
\begin{equation}
     \mathbb{P}\left(\!\Lambda_{\min}(\widehat{\Sigma}) \geq  \phi_{\min}-\epsilon\right) \geq 1- 2(p-1)^2\exp{\left(\!-\frac{\epsilon^2v_n}{2}\right)},
\end{equation}
which concludes the proof for (\ref{eq:phi_min_det}). \\

To prove (\ref{eq:phi_max_det}) it suffices to note that $\Lambda_{\max}(\widehat{\Sigma}) \leq \phi_{\max}+ \norm{\widehat{\Sigma} - \Sigma}_F$ and use the same arguments.
\end{proof}

\begin{lemma}
\label{lemma:lambda_borne_rand}
Let $\{x^{(i)}\}_{i=1}^n$ be a set of i.i.d observation sampled from an Ising model with parameter $\Theta \in \mathbb{R}^{p\times p}$ and assume that assumption (A1) is satisfied. \\ 
\noindent
Let $R$ and $L$ be two random variable such that $ R,L\in [n]$,  $L<R$ and $R-L>v_n$ almost surely, with $v_n$ a positive serie. For a fixed node $a$ and any $\epsilon>0$, there exist a constant $c_1>0$ such that:
%
\begin{align}
     \!\!\!\mathbb{P}\left(\Lambda_{\min}\left(\frac{1}{R-L+1}\sum_{i=L}^Rx_{\smallsetminus a}^{(i)}x_{\smallsetminus a}^{(i)\top}\right) \leq \phi_{\min}-\epsilon\right) \nonumber
    \\
     \!\!\!\leq c_1\exp{\left(-\frac{\epsilon^2v_n}{2} + 2\log(n)\right)} \label{eq:phi_min_rand}
\end{align}
%
and
%
\begin{align}
    \mathbb{P}\left(\Lambda_{\max}\left(\frac{1}{R-L+1}\sum_{i=L}^Rx_{\smallsetminus a}^{(i)}x_{\smallsetminus a}^{(i)\top}\right) \geq \phi_{\max}+\epsilon\right) \nonumber \\
    \leq c_1\exp{\left(-\frac{\epsilon^2v_n}{2} + 2\log(n)\right)}.   \label{eq:phi_max_rand}
\end{align}
\end{lemma}
%
\begin{proof}
We note $\widehat{\Sigma}(L,R)=\frac{1}{R-L+1}\sum_{i=L}^R x_{\smallsetminus a}^{(i)}x_{\smallsetminus a}^{(i)\top}$ and $\mathcal{I}\triangleq\left\{(l,r)\in[n]^2 : r-l>v_n \right\}$. \\
%
We first prove the inequality (\ref{eq:phi_min_rand}):
%
\begin{align}
    & \mathbb{P}\left(\Lambda_{\max}\left(\widehat{\Sigma}(L,R)\right) \geq \phi_{\max}+\epsilon\right) \\
    & = \sum_{(l,r)\in \mathcal{I}}\mathbb{P}\left(\Lambda_{\max}\left(\widehat{\Sigma}(L,R)\right), L=l,R=r\right) \\
    & \leq \sum_{(l,r)\in \mathcal{I}}\mathbb{P}\left(\Lambda_{\max}\left(\widehat{\Sigma}(L,R)\right) \middle| L=l,R=r\right). \label{cond_1}
\end{align}
Using Lemma\,\ref{lemma:lambda_borne} we can bound (\ref{cond_1}):
%
\begin{align}
    (\ref{cond_1}) & \leq \sum_{(l,r)\in \mathcal{I}}   2(p-1)^2\exp{\left(-\frac{\epsilon^2v_n}{2}\right)}\\
    & \leq |\mathcal{I}| c_1\exp{\left(-\frac{\epsilon^2v_n}{2}\right)}\\
    & \leq n^2 c_1\exp{\left(-\frac{\epsilon^2v_n}{2}\right)} \\
    & \leq c_1\exp{\left(-\frac{\epsilon^2v_n}{2} + 2\log(n)\right)} 
\end{align}
with $c_1=2(p-1)$. This concludes the proof for (\ref{eq:phi_min_rand}). Same arguments are used to prove $(\ref{eq:phi_max_rand})$.
\end{proof}

\begin{lemma}
\label{lemma:concentration_R3}
Let $\{x^{(i)}\}_{i=1}^n$ be a set of independent observation sampled from the time-varying Ising model (Section 2). Then, $\forall j \in [D]$ and $\forall r,l \in \{T_j, \ldots, T_{j+1}-1\}$ such that $l<r$, we have:
%
\begin{align}
\label{eq:concentration_R3}
    \mathbb{P}\left(\frac{1}{r-l+1}\norm{R_3(l,r)}_2 \leq  2\sqrt{\frac{p\log(n)}{r-l+1}} \right) \\
    \geq 1 - 2(p-1)\exp{\left(-2p\log(n)\right)}
\end{align}
with $R_3(l,r) = \sum_{i=l}^{r}  x^{(i)}_{\smallsetminus a}\left\{x^{(i)}_a - \mathbb{E}_{\Theta^j}\left[X_a | X_{\smallsetminus a} = x^{(i)}_{\smallsetminus a} \right]\right\}.$
\end{lemma}
%
\begin{proof}
Let $Z_{ij}$ be the the $j$-th element of the vector \\
$\frac{1}{r-l+1} x^{(i)}_{\smallsetminus a}\left\{x^{(i)}_a - \mathbb{E}_{\Theta}\left[X_a | X_{\smallsetminus a} = x^{(i)}_{\smallsetminus a} \right]\right\}$. Note that $|Z_{ij}|\leq \frac{2}{r-l+1}$ and $\mathbb{E}\left[Z_{ij}\right] = 0$. Let $\epsilon>0$, we have:
%
\begin{align*}
    \mathbb{P} & \left(\frac{1}{r-l+1} \norm{R_3(l,r)}_2 \geq  \epsilon\right) \\
    &= \mathbb{P}\left(\sqrt{\sum_{j\neq a}(\sum_{i=l}^r Z_{ij})^2} \geq  \epsilon\right) \\
    & = \mathbb{P}\left(\sum_{j\neq a}(\sum_{i=l}^r Z_{ij})^2 \geq  \epsilon^2\right) \\ 
    & \leq \sum_{j\neq a}  \mathbb{P}\left(|\sum_{i=l}^r Z_{ij}| \geq  \epsilon\right) \\
    & \leq 2(p-1)\exp{\left(-\frac{\epsilon^2(r-l+1)}{2}\right)}.
\end{align*}
%
Now, if we fix $\epsilon=2\sqrt{\frac{p\log(n)}{r-l+1}}$, we obtain:
%
\begin{align*}
    \mathbb{P}  \left(\frac{1}{r-l+1}\norm{R_3(l,r)}_2 \leq  2\sqrt{\frac{p\log(n)}{r-l+1}} \right) \\
     \geq 1 - 2(p-1)\exp{\left(-2p\log(n)\right)}.
\end{align*}
\end{proof}

\begin{lemma}
\label{lemma:concentration_R3_random}
Let $\{x^{(i)}\}_{i=1}^n$ be a set of independent observation sampled from the time-varying Ising model (Section 2). We have:
%
\begin{align}
    \mathbb{P}\left(\underset{j \in [D]}{\bigcap} \underset{l,r \in \mathcal{I}_j}{\bigcap}\left\{\frac{1}{r-l+1}\norm{R_3^j(l,r)}_2 \leq  2\sqrt{\frac{p\log(n)}{r-l+1}}\right\} \right) \nonumber \\ 
    \geq 1 - c_2\exp{\left(-c_3\log(n)\right)} 
    \label{eq:concentration_R3_tot}
\end{align}
with $R_3^j(l,r) = \sum_{i=l}^{r}  x^{(i)}_{\smallsetminus a}\left\{x^{(i)}_a - \mathbb{E}_{\Theta^j}\left[X_a | X_{\smallsetminus a} = x^{(i)}_{\smallsetminus a} \right]\right\} $, $c_2,c_3$ some positive constants and $\mathcal{I}_j\triangleq\left\{(l,r)\in \{T_j,\ldots, T_{j+1}-1\}^2 : r>l \right\}$.
\end{lemma}
%
\begin{proof}
The proof is a simple application of Lemma \ref{lemma:concentration_R3}:
%
\begin{align*}
    &\mathbb{P}\left(\underset{j \in [D]}{\bigcup} \underset{l,r \in \mathcal{I}_j}{\bigcup}  \left\{\frac{1}{r-l+1}\norm{R_3^j(l,r)}_2 \geq  2\sqrt{\frac{p\log(n)}{r-l+1}}\right\}\right) \\
    &  \leq \sum_{j \in [D]} \sum_{l,r \in \mathcal{I}_j} \mathbb{P}\left(\frac{1}{r-l+1}\norm{R_3^j(l,r)}_2 \geq  2\sqrt{\frac{p\log(n)}{r-l+1}}\right) \\
    & \leq 2Dn^2(p-1)\exp{\left(-2p\log(n)\right)} \\
    & \leq c_2\exp{\left(-2p\log(n)+2\log(n)\right)}\\
    & \leq c_2\exp{\left(-c_3\log(n)\right)}
\end{align*}
since $p>1$. This concludes the proof.
\end{proof}

\bibliographystyle{mystyle}
{\small
\bibliography{biblio}
}